\documentclass[11pt, a4paper]{article}

\usepackage{amsmath, amssymb, amsthm}
\usepackage{mathrsfs}
\usepackage{geometry}
\usepackage{hyperref}
\usepackage{xcolor}
\usepackage[breakable]{tcolorbox}
\usepackage{enumitem}
\usepackage{booktabs}
\usepackage{tikz}
\usepackage{pifont}

\hypersetup{
    colorlinks=true,
    linkcolor=blue!70!black,
    citecolor=green!50!black,
    urlcolor=blue!60!black
}

\newtheorem{theorem}{Theorem}[section]
\newtheorem{proposition}[theorem]{Proposition}

\theoremstyle{definition}
\newtheorem{definition}[theorem]{Definition}
\newtheorem{example}[theorem]{Example}
\theoremstyle{remark}
\newtheorem{remark}[theorem]{Remark}

\newtcolorbox{intuition}{
    colback=blue!5!white,
    colframe=blue!50!black,
    title=\textbf{Intuition},
    fonttitle=\bfseries,
    breakable
}

\newtcolorbox{keypoint}{
    colback=red!5!white,
    colframe=red!60!black,
    title=\textbf{Key Insight},
    fonttitle=\bfseries,
    breakable
}

\newtcolorbox{summary}{
    colback=green!5!white,
    colframe=green!50!black,
    title=\textbf{Summary},
    fonttitle=\bfseries,
    breakable
}

\newcommand{\R}{\mathbb{R}}
\newcommand{\E}{\mathbb{E}}
\newcommand{\Prob}{\mathcal{P}}
\newcommand{\KL}{\mathrm{KL}}
\newcommand{\grad}{\mathrm{grad}}
\newcommand{\id}{\mathrm{id}}
\newcommand{\tr}{\mathrm{tr}}

\title{\LARGE\textbf{The Geometry Behind Diffusion and Flow Matching}\\[6pt]
\large Gradient Flows and Geodesics in Wasserstein Space}
\author{%
  \large\textbf{Yian Yao}\\[2pt]
  \normalsize from the mortal world
  \and
  \large\textbf{Weiwei Zhang}\\[2pt]
  \normalsize from heaven
}
\date{}

\begin{document}
\maketitle

\begin{abstract}
The space $\Prob_2(\R^d)$ of probability measures with finite second moment carries a natural geometry: the quadratic Wasserstein distance $W_2$ makes it a complete metric space and, following Otto, a (formal) Riemannian manifold whose geodesics are the optimal-transport interpolations.

\medskip
On this manifold, the gradient flow of the free energy $\mathcal{F}(\rho)=\KL(\rho\|\pi)$ is exactly the Fokker--Planck equation, and its implicit-Euler discretization is the JKO scheme. This is the geometry underlying diffusion models: the forward process descends the free energy, and each denoising step realizes one JKO step, which recovers DDPM, DDIM, NCSN/SMLD, and Energy Matching---one scheme, not separate theories.

\medskip
The same manifold supports a second variational principle. Its geodesics---the minimum-action curves of the Benamou--Brenier formula---are precisely the optimal-transport paths that Flow Matching learns. Fixing both endpoints and following the geodesic, generation becomes a deterministic ODE along a straight line---hence far fewer sampling steps.

\medskip
Placing both families of models on one manifold makes their relationship exact: diffusion follows a free-energy gradient flow, an initial-value problem; optimal-transport Flow Matching follows a Wasserstein geodesic, a boundary-value problem. The two reach the same endpoints along different paths.

\begin{center}
\fbox{\parbox{0.92\textwidth}{\raggedright
Wasserstein distance $\xrightarrow{\text{endows geometry}}$ probability space becomes a Riemannian manifold,\\
carrying \textbf{two complementary variational principles}:\\[8pt]
\textbf{(A) Diffusion} --- free-energy \emph{gradient flow}\\
(\emph{initial-value} problem: given $\rho_k$, descend $\mathcal{F}$ while staying close in $W_2$)\\[3pt]
free energy $\to$ Fokker-Planck $\xrightarrow{\text{time discretization}}$ JKO scheme\\[8pt]
\textbf{(B) Flow Matching} --- Benamou--Brenier \emph{minimum action}\\
(\emph{boundary-value} problem: given both $\rho_0$ and $\rho_1$, minimize kinetic energy)\\[3pt]
minimum action $\to$ Wasserstein geodesics $\to$ OT transport paths\\[8pt]
\rule{0.7\textwidth}{0.4pt}\\[4pt]
\textbf{Essential difference:} \emph{same endpoint $\rho_1=p_{\text{data}}$, different path.}
}}
\end{center}

\newpage
\begin{center}
{\LARGE\bfseries Outline}
\end{center}
\vspace{1em}
{\large
\begin{itemize}[leftmargin=1.8em, itemsep=8pt]
\item \textbf{Prologue} starts from the continuity and Fokker--Planck equations as they appear in diffusion models and Flow Matching, and isolates the connections those treatments leave implicit.
\item \textbf{Section~1} introduces the Wasserstein distance---the transport cost between two distributions.
\item \textbf{Section~2} derives the continuity equation---the local form of mass conservation.
\item \textbf{Section~3} establishes the Riemannian structure: the Benamou--Brenier formula identifies $W_2$ as a geodesic distance, and its geodesics as the optimal-transport paths learned by Flow Matching.
\item \textbf{Section~4} derives the Fokker--Planck equation from a stochastic differential equation and recasts it as a continuity equation.
\item \textbf{Section~5} proves the central identity: the Fokker--Planck equation is the Wasserstein gradient flow of the free energy.
\item \textbf{Section~6} discretizes the flow into the JKO scheme, proves convergence, and recovers DDPM, DDIM, NCSN/SMLD, Flow Matching, and Energy Matching from a single variational template.
\end{itemize}
}

\medskip
\textbf{Prerequisites.} We assume multivariable calculus, linear algebra, and elementary probability; measure theory, differential geometry, and convex analysis are developed in self-contained appendices.
\end{abstract}

\newpage
\tableofcontents
\newpage

\section*{Notation and Conventions}\label{sec:notation}
\addcontentsline{toc}{section}{Notation and Conventions}

The following notation is used frequently throughout this article and is collected here for easy reference.

\subsection*{Sets and Spaces}

\begin{itemize}[leftmargin=2em]
\item $\R^d$: $d$-dimensional Euclidean space.
\item \textbf{Borel sets}: Starting from all open sets in $\R^d$ and closing under countable unions, intersections, and complements, one obtains the \textbf{Borel $\sigma$-algebra}, denoted $\mathcal{B}(\R^d)$. Its elements are called Borel sets.
Intuitively, all ``reasonable'' geometric sets (open, closed, countable unions/intersections) are Borel sets; virtually every set you encounter in practice is a Borel set.

\item $\Prob_2(\R^d)$: the set of all probability measures on $\R^d$ with finite second moment, i.e., probability measures $\mu$ satisfying $\int_{\R^d}|x|^2\,d\mu(x)<\infty$.

\item $L^2(\R^d)$ (or $L^2(\rho)$): \textbf{space of square-integrable functions}. $L^2(\R^d)$ consists of all functions $f$ with $\int_{\R^d}|f(x)|^2\,dx<\infty$.
$L^2(\rho)$ is the weighted version with weight $\rho$: $\int|f|^2\rho\,dx<\infty$.
The inner product is $\langle f,g\rangle_{L^2(\rho)}=\int f(x)g(x)\rho(x)\,dx$.
This is a \textbf{Hilbert space} (a complete inner product space).

\item $C_c^\infty(\R^d)$: \textbf{space of compactly supported smooth functions}. It contains all infinitely differentiable functions that vanish outside some bounded region. Here:
\begin{itemize}
    \item \textbf{Smooth} ($C^\infty$) = derivatives of all orders exist and are continuous
    \item \textbf{Compactly supported} ($c$ for compact support) = there exists a bounded closed set $K$ such that $f(x)=0$ for all $x\notin K$. Intuitively, the function has ``finite range of activity'' and is zero far away.
\end{itemize}
\end{itemize}

\subsection*{Operators and Symbols}

\begin{itemize}[leftmargin=2em]
\item $\inf A$ (\textbf{infimum}): the greatest lower bound of a set $A$. For example, $\inf\{1/n : n\in\mathbb{N}\} = 0$.
Similar to $\min$, but allows the minimum not to be attained (limit case).

\item $\sup A$ (\textbf{supremum}): the least upper bound of a set $A$, a generalization of $\max$.

\item $\arg\min_x f(x)$: the value of $x$ at which $f(x)$ attains its minimum (not the minimum value itself, but the ``location'' where the minimum is achieved).

\item $\nabla f$: gradient (the vector of first partial derivatives of $f$ with respect to spatial variables).
\item $\nabla\cdot v$: divergence (the sum of partial derivatives of the components of a vector field $v$).
\item $\Delta f = \nabla\cdot(\nabla f)$: Laplacian.
\end{itemize}

\subsection*{Key Terminology}

\begin{itemize}[leftmargin=2em]
\item \textbf{a.e.} (almost everywhere): a property ``holds a.e.'' means it may fail at some points, but the set of failure points has measure zero (``occupies zero volume'').
For example, functions $f$ and $g$ are equal a.e.\ under Lebesgue measure means $\{x: f(x)\neq g(x)\}$ has zero volume.

\item \textbf{Test function}: a function in $C_c^\infty(\R^d)$. The name comes from the fact that in the weak formulation of PDEs, we use them to ``probe'' whether the equation holds---multiply both sides of the PDE by a test function and integrate, converting the PDE into an integral identity that must hold for all test functions. This avoids requiring smoothness of the solution itself.
\end{itemize}

\medskip
\textbf{Note:} For a detailed review of foundational concepts such as measures, absolute continuity, pushforward, and couplings, see Appendix~\ref{app:measures}.
For basic concepts in convex analysis, see Appendix~\ref{app:convex}.

\newpage

\section*{Prologue: Things You May Already Know}\label{sec:prologue}
\addcontentsline{toc}{section}{Prologue: Things You May Already Know}

For anyone who has worked on generative modeling, the continuity equation and the Fokker--Planck equation are familiar objects---one works with them constantly, often without naming them as such.

\subsection*{Scenario 1: How you encounter Fokker-Planck in Diffusion Models}
\addcontentsline{toc}{subsection}{Scenario 1: How you encounter Fokker-Planck in Diffusion Models}

Open Song et al.\ (2021) \textit{``Score-Based Generative Modeling through SDEs''}, and you will see the following setup.

\subsubsection*{Discrete version: DDPM and SMLD}

\textbf{Before} the unified framework of Song et al.\ (2021), DDPM and SMLD each defined forward processes using discrete Markov chains:

\textbf{DDPM forward process} (Ho et al., 2020): given noise schedule $\beta_1,\ldots,\beta_N$:
\[
\mathbf{x}_i = \sqrt{1-\beta_i}\,\mathbf{x}_{i-1} + \sqrt{\beta_i}\,\mathbf{z}_{i-1}, \quad \mathbf{z}_{i-1}\sim\mathcal{N}(0,I), \quad i=1,\ldots,N
\]
Each step: shrink the previous $\mathbf{x}_{i-1}$ slightly (multiply by $\sqrt{1-\beta_i}<1$), then add a bit of noise ($\sqrt{\beta_i}\,\mathbf{z}$).
After $N$ steps, $\mathbf{x}_N\approx\mathcal{N}(0,I)$. Closed-form expression:
\[
\mathbf{x}_i = \sqrt{\bar\alpha_i}\,\mathbf{x}_0 + \sqrt{1-\bar\alpha_i}\,\boldsymbol{\epsilon}, \quad \boldsymbol{\epsilon}\sim\mathcal{N}(0,I), \quad \bar\alpha_i := \prod_{j=1}^i(1-\beta_j)
\]

\textbf{SMLD forward process} (Song \& Ermon, 2019): given noise scales $\sigma_1<\sigma_2<\cdots<\sigma_N$:
\[
\mathbf{x}_i = \mathbf{x}_{i-1} + \sqrt{\sigma_i^2 - \sigma_{i-1}^2}\,\mathbf{z}_{i-1}, \quad i=1,\ldots,N
\]
Each step only adds noise without rescaling. The variance increases progressively to $\sigma_N^2$.

\subsubsection*{Continuous version: when $N\to\infty$}

The key insight of Song et al.\ (2021): as the number of steps $N\to\infty$, the discrete chain becomes a continuous SDE.

\textbf{DDPM $\to$ VP-SDE}: let $\beta_i = \beta(i/N)/N$, take $N\to\infty$, the discrete update
\[
\mathbf{x}_i = \sqrt{1-\beta_i}\,\mathbf{x}_{i-1} + \sqrt{\beta_i}\,\mathbf{z}_{i-1}
\approx \mathbf{x}_{i-1} - \frac{\beta_i}{2}\mathbf{x}_{i-1} + \sqrt{\beta_i}\,\mathbf{z}_{i-1}
\]
becomes an SDE ($dt = 1/N$, $\mathbf{z}\sqrt{dt} = d\mathbf{w}$):
\begin{equation}\label{eq:vpsde_prologue}
d\mathbf{x} = -\frac{\beta(t)}{2}\mathbf{x}\,dt + \sqrt{\beta(t)}\,d\mathbf{w} \qquad\text{(VP-SDE)}
\end{equation}
This is an OU process---the spring force $-\frac{\beta}{2}\mathbf{x}$ pulls particles back to the origin, while noise $\sqrt{\beta}\,d\mathbf{w}$ disperses them.

\textbf{SMLD $\to$ VE-SDE}: similarly, $\mathbf{x}_i = \mathbf{x}_{i-1} + \sqrt{\sigma_i^2-\sigma_{i-1}^2}\,\mathbf{z}_{i-1}$ in the continuous limit becomes:
\begin{equation}\label{eq:vesde_prologue}
d\mathbf{x} = \sqrt{\frac{d[\sigma^2(t)]}{dt}}\,d\mathbf{w} \qquad\text{(VE-SDE)}
\end{equation}
Pure diffusion, no restoring force. The variance grows monotonically (``explodes''), ultimately yielding a Gaussian with very large variance.

\medskip
Unified notation: $d\mathbf{x} = \mathbf{f}(\mathbf{x},t)\,dt + g(t)\,d\mathbf{w}$, where
\begin{center}
\renewcommand{\arraystretch}{1.3}
\begin{tabular}{@{}lll@{}}
\toprule
& \textbf{VP-SDE (DDPM)} & \textbf{VE-SDE (SMLD)} \\
\midrule
drift $\mathbf{f}(\mathbf{x},t)$ & $-\frac{\beta(t)}{2}\mathbf{x}$ & $0$ \\
diffusion $g(t)$ & $\sqrt{\beta(t)}$ & $\sqrt{\dot\sigma^2(t)}$ \\
Steady state & $\mathcal{N}(0,I)$ & $\mathcal{N}(0,\sigma_{\max}^2 I)$ \\
Potential $V(x)$ & $\frac{1}{2}|x|^2$ & $0$ (no potential) \\
\bottomrule
\end{tabular}
\end{center}

\subsubsection*{From SDE to the Fokker-Planck equation}

The probability density $\rho_t(x)$ induced by these SDEs satisfies the \textbf{Fokker-Planck equation} (derivation in Section~4):
\begin{equation}\label{eq:fp_prologue}
\partial_t\rho_t = -\nabla\cdot(\mathbf{f}\,\rho_t) + \frac{g^2}{2}\Delta\rho_t
\end{equation}
Substituting $\mathbf{f} = -\frac{\beta}{2}\mathbf{x}$, $g^2 = \beta$ for VP-SDE:
\[
\partial_t\rho_t = \frac{\beta}{2}\nabla\cdot(\mathbf{x}\,\rho_t) + \frac{\beta}{2}\Delta\rho_t
\]

\subsubsection*{How Fokker-Planck is used in the paper: deriving the Reverse-time SDE}

In the Diffusion Model paper, the \textbf{sole use} of Fokker-Planck is to derive the \textbf{reverse-time SDE}. Here is the complete derivation.

\textbf{Goal}: the forward process $d\mathbf{x} = \mathbf{f}\,dt + g\,d\mathbf{w}$ runs from $t=0$ (data) to $t=T$ (noise).
We want to find an SDE running from $T$ to $0$ whose marginal distributions are exactly $\rho_t$ in reversed time.

\textbf{Suppose} the reverse SDE (let $\bar t = T-t$ be the reverse time) takes the form:
\[
d\mathbf{x} = \bar{\mathbf{f}}\,d\bar t + g\,d\bar{\mathbf{w}}
\]
Its marginal distribution $\bar\rho_{\bar t} := \rho_{T-\bar t}$ should satisfy the \textbf{reverse Fokker-Planck}:
\begin{equation}\label{eq:reverse_fp}
\partial_{\bar t}\bar\rho_{\bar t} = -\nabla\cdot(\bar{\mathbf{f}}\,\bar\rho_{\bar t}) + \frac{g^2}{2}\Delta\bar\rho_{\bar t}
\end{equation}

\textbf{Key step}: from $\bar\rho_{\bar t} = \rho_{T-\bar t}$, we get $\partial_{\bar t}\bar\rho_{\bar t} = -\partial_t\rho_t$.

Substituting the forward Fokker-Planck~\eqref{eq:fp_prologue}:
\[
-\partial_t\rho_t = -\left[-\nabla\cdot(\mathbf{f}\rho_t) + \tfrac{g^2}{2}\Delta\rho_t\right]
= \nabla\cdot(\mathbf{f}\rho_t) - \tfrac{g^2}{2}\Delta\rho_t
\]

Setting this equal to the right-hand side of \eqref{eq:reverse_fp} (both expressed in terms of $\rho_t$):
\[
\nabla\cdot(\mathbf{f}\rho_t) - \tfrac{g^2}{2}\Delta\rho_t = -\nabla\cdot(\bar{\mathbf{f}}\,\rho_t) + \tfrac{g^2}{2}\Delta\rho_t
\]

Rearranging:
\[
\nabla\cdot(\mathbf{f}\rho_t) = -\nabla\cdot(\bar{\mathbf{f}}\,\rho_t) + g^2\Delta\rho_t
\]

Using the identity $\Delta\rho = \nabla\cdot(\nabla\rho) = \nabla\cdot(\rho\,\nabla\log\rho)$:
\[
\nabla\cdot(\mathbf{f}\rho_t) = \nabla\cdot\!\left[-\bar{\mathbf{f}}\,\rho_t + g^2\rho_t\,\nabla\log\rho_t\right]
\]

Comparing inside the $\nabla\cdot$, solving for $\bar{\mathbf{f}}$:
\[
\boxed{\bar{\mathbf{f}} = -\mathbf{f} + g^2\,\nabla\log\rho_t}
\]

Therefore, the reverse SDE (returning to the original time parameter $t$, running backward from $T$ to $0$) is:
\begin{equation}\label{eq:reverse_sde_prologue}
d\mathbf{x} = \left[\mathbf{f}(\mathbf{x},t) - g(t)^2\,\nabla\log\rho_t(\mathbf{x})\right]dt + g(t)\,d\bar{\mathbf{w}}
\end{equation}

This is the Anderson (1982) theorem. Note that $\nabla\log\rho_t$---the score---is the only unknown quantity that needs to be learned.

\subsubsection*{Discrete counterpart: DDPM denoising steps}

Apply \textbf{Euler-Maruyama discretization} to the reverse SDE~\eqref{eq:reverse_sde_prologue}.
Recall: for a general SDE $dX = a(X,t)\,dt + b(t)\,dW$, the Euler-Maruyama method simply ``replaces $dt$ with $\Delta t$ and $dW$ with $\sqrt{\Delta t}\,\mathbf{z}$'' ($\mathbf{z}\sim\mathcal{N}(0,I)$):
\[
X_{n+1} = X_n + a(X_n, t_n)\,\Delta t + b(t_n)\,\sqrt{\Delta t}\,\mathbf{z}_n
\]
This is the simplest numerical scheme for SDEs, analogous to the Euler method for ODEs, but with the additional random term $\sqrt{\Delta t}\,\mathbf{z}$.

For the reverse SDE ($a = \mathbf{f} - g^2\nabla\log\rho_t$, $b=g$), taking $\Delta t = 1/N$, we obtain the DDPM denoising formula:
\[
\mathbf{x}_{i-1} = \frac{1}{\sqrt{1-\beta_i}}\left(\mathbf{x}_i + \beta_i\,s_\theta(\mathbf{x}_i,i)\right) + \sqrt{\beta_i}\,\mathbf{z}, \quad \mathbf{z}\sim\mathcal{N}(0,I)
\]
where $s_\theta \approx \nabla\log\rho_i$ is the score learned by the neural network. Training objective (denoising score matching; Hyv\"arinen, 2005; Vincent, 2011):
\[
\mathcal{L} = \mathbb{E}_{t,\mathbf{x}_0,\boldsymbol\epsilon}\left[\|s_\theta(\sqrt{\bar\alpha_t}\,\mathbf{x}_0 + \sqrt{1-\bar\alpha_t}\,\boldsymbol\epsilon,\; t) - \left(-\frac{\boldsymbol\epsilon}{\sqrt{1-\bar\alpha_t}}\right)\|^2\right]
\]
Note $\nabla_{\mathbf{x}}\log p_{t|0}(\mathbf{x}|\mathbf{x}_0) = -\frac{\mathbf{x}-\sqrt{\bar\alpha_t}\mathbf{x}_0}{1-\bar\alpha_t} = -\frac{\boldsymbol\epsilon}{\sqrt{1-\bar\alpha_t}}$,
so score matching is essentially training the network to predict the direction of the noise $\boldsymbol\epsilon$.

\subsubsection*{Connection to this article}

In the diffusion paper, the derivation chain is:
\[
\text{SDE} \xrightarrow{\text{Fokker-Planck}} \text{density evolution}\rho_t \xrightarrow{\text{time reversal}} \text{reverse SDE with score} \xrightarrow{\text{learn score}} \text{generation}
\]
Here Fokker--Planck appears only as an intermediate step in deriving the score from the SDE, and is set aside once the derivation is complete.

\medskip
This article takes the equation more seriously. Rewriting Fokker-Planck in the form of a continuity equation $\partial_t\rho = -\nabla\cdot(\rho\,v)$ (using $\Delta\rho = \nabla\cdot(\rho\nabla\log\rho)$):
\[
\partial_t\rho_t = -\nabla\cdot\!\left(\rho_t\underbrace{\left[\mathbf{f} - \tfrac{g^2}{2}\nabla\log\rho_t\right]}_{=\,v_t}\right)
\]
so the velocity field is $v_t = \mathbf{f} - \frac{g^2}{2}\nabla\log\rho_t$.
For VP-SDE ($\mathbf{f}=-\frac{\beta}{2}\mathbf{x}$, $g^2=\beta$), this gives $v_t = -\frac{\beta}{2}\mathbf{x} - \frac{\beta}{2}\nabla\log\rho_t = -\frac{\beta}{2}\nabla\left(\frac{|x|^2}{2} + \log\rho_t\right)$.

Comparing with the Wasserstein gradient $\grad_W\mathcal{F} = \nabla V + \nabla\log\rho$ derived in the main text (taking $V=\frac{1}{2}|x|^2$), we see $v_t = -\frac{\beta}{2}\,\grad_W\mathcal{F}$---the velocity field is the \emph{negative} gradient, up to the positive time-scaling factor $\beta/2$.

\textbf{Section~5 will prove}: $v_t$ is precisely the \textbf{negative gradient} of $\KL(\rho_t\|\pi)$ ($\pi=\mathcal{N}(0,I)\propto e^{-|x|^2/2}$) under the Wasserstein metric (up to the positive factor $\beta/2$, which is merely a time reparametrization).
That is, the forward process of VP-SDE $=$ the \textbf{gradient flow} of free energy in probability space.

\textbf{Section~6 will prove}: each DDPM denoising step $=$ one step of the JKO scheme---implicit gradient descent in probability space.

\medskip
In short, the derivation above uses Fokker--Planck only to extract the score. The equation carries considerably more geometric structure, and recovering it is the purpose of this article.

\subsection*{Scenario 2: How you encounter the continuity equation in Flow Matching}
\addcontentsline{toc}{subsection}{Scenario 2: How you encounter the continuity equation in Flow Matching}

\begin{remark}[Time-convention warning]
In Scenario~1 (diffusion), time runs from \textbf{data} ($t=0$) to \textbf{noise} ($t=T$). In Flow Matching below, the convention is \emph{reversed}: $t=0$ is \textbf{noise} and $t=1$ is \textbf{data}. Both conventions follow their respective original papers; keep this flip in mind when comparing the two scenarios (e.g., the ``endpoint'' $\delta_{x_1}$ here is data, whereas the endpoint of the forward diffusion is noise).
\end{remark}

Open Lipman et al.\ (2023) \textit{``Flow Matching for Generative Modeling''}, and the story is entirely different---no SDEs, no noise, everything is deterministic.

The \textbf{core objects} are a probability density path $p_t$ and a velocity field $v_t$, satisfying:
\begin{equation}\label{eq:ce_prologue}
\partial_t p_t + \nabla\cdot(p_t\,v_t) = 0 \qquad\text{(continuity equation)}
\end{equation}

\textbf{Where does this equation come from?} If particles follow the ODE $\frac{dX_t}{dt} = v_t(X_t)$ with initial condition $X_0\sim p_0$,
what equation does the density $p_t$ of $X_t$ satisfy? For any test function $\varphi$:
\[
\frac{d}{dt}\int\varphi\,p_t\,dx = \frac{d}{dt}\E[\varphi(X_t)] = \E[\nabla\varphi(X_t)\cdot v_t(X_t)]
= \int\nabla\varphi\cdot v_t\,p_t\,dx
\]
Integrating the right side by parts: $= -\int\varphi\,\nabla\cdot(v_t\,p_t)\,dx$. Comparing with the left side $= \int\varphi\,\partial_t p_t\,dx$, we get $\partial_t p_t = -\nabla\cdot(p_t v_t)$.

\textbf{Intuitive meaning}: the change in density $=$ the negative of the net outflux. If $v_t$ ``carries mass away'' from $x$ ($\nabla\cdot(p_t v_t) > 0$), then the density at $x$ decreases.

\medskip
The \textbf{training objective} is to fit a known target velocity field $u_t(x|x_1)$ with a neural network $v_\theta(x,t)$:
\[
\mathcal{L}_{\text{CFM}}(\theta) = \mathbb{E}_{t,\,x_1\sim q,\,x\sim p_t(x|x_1)}\|v_\theta(x,t) - u_t(x|x_1)\|^2
\]

For the OT path (the simplest choice), the conditional probability path and conditional velocity field are:
\[
p_t(x|x_1) = \mathcal{N}\bigl(x \;\big|\; t\,x_1,\;(1-t)^2 I\bigr), \qquad u_t(x|x_1) = \frac{x_1 - x}{1-t}
\]
This is a ``straight line'' from $\mathcal{N}(0,I)$ to $\delta_{x_1}$. Each particle travels at constant velocity $\frac{x_1-x_0}{1}$ from its starting point $x_0$ to its endpoint $x_1$.

For generation, one only needs to solve an ODE: $\frac{dx}{dt} = v_\theta(x,t)$, integrating from $x_0\sim\mathcal{N}(0,I)$ to $t=1$.

\textbf{Summary: in the context of Flow Matching, the continuity equation is the constraint linking ``velocity field $\to$ density path.''}
The paper uses it to prove: if $v_\theta$ accurately learns the conditional velocity field, then the marginal density path $p_t$ it generates is the desired one.
One question is left unaddressed, however: how does this equation relate to Fokker--Planck? Are the two different expressions of the same object?

\subsection*{Scenario 3: Probability Flow ODE---where the two equations meet}
\addcontentsline{toc}{subsection}{Scenario 3: Probability Flow ODE---where the two equations meet}

A central observation of Song et al.\ (2021) connects the two pictures. They proved:

\begin{keypoint}
For \textbf{any} forward SDE $d\mathbf{x} = \mathbf{f}\,dt + g\,d\mathbf{w}$, there exists a \textbf{deterministic ODE} (probability flow ODE):
\begin{equation}\label{eq:prob_flow_prologue}
\frac{d\mathbf{x}}{dt} = \underbrace{\mathbf{f}(\mathbf{x},t)}_{\text{drift}} - \frac{1}{2}g(t)^2\underbrace{\nabla\log\rho_t(\mathbf{x})}_{\text{score}}
\end{equation}
such that the marginal distribution $\rho_t$ of particles is \textbf{exactly the same as that of the SDE}.
\end{keypoint}

Rather than state the result, let us derive it.

\medskip
\textbf{Derivation: reading off the probability flow ODE from Fokker--Planck}

Starting point: we know that $\rho_t$ satisfies the Fokker-Planck equation
\[
\partial_t\rho_t = -\nabla\cdot(\mathbf{f}\,\rho_t) + \frac{g^2}{2}\Delta\rho_t
\]
Now ask: \textbf{can we find a velocity field $v_t$} such that $\rho_t$ also satisfies the continuity equation $\partial_t\rho_t = -\nabla\cdot(\rho_t\,v_t)$?

If so, there exists a deterministic ODE $\dot x = v_t(x)$ that produces the same density evolution---no noise needed.

Key trick: ``disguise'' the diffusion term of Fokker-Planck as a transport term. Note the identity:
\[
\Delta\rho = \nabla\cdot(\nabla\rho) = \nabla\cdot\!\left(\rho\cdot\frac{\nabla\rho}{\rho}\right) = \nabla\cdot(\rho\,\nabla\log\rho)
\]
Substituting into Fokker-Planck:
\begin{align*}
\partial_t\rho_t &= -\nabla\cdot(\mathbf{f}\,\rho_t) + \frac{g^2}{2}\nabla\cdot(\rho_t\,\nabla\log\rho_t)\\
&= -\nabla\cdot\!\left(\mathbf{f}\,\rho_t - \frac{g^2}{2}\rho_t\,\nabla\log\rho_t\right)\\
&= -\nabla\cdot\!\left(\rho_t\underbrace{\left[\mathbf{f} - \frac{g^2}{2}\nabla\log\rho_t\right]}_{=:\,v_t}\right)
\end{align*}

The velocity field can be read off directly: $v_t = \mathbf{f} - \frac{g^2}{2}\nabla\log\rho_t$ ensures $\partial_t\rho_t + \nabla\cdot(\rho_t v_t) = 0$.
The corresponding ODE is \eqref{eq:prob_flow_prologue}.

\medskip
\textbf{Verification (from the other direction):} substitute $v_t = \mathbf{f} - \frac{g^2}{2}\nabla\log\rho_t$ into the continuity equation and expand:
\begin{align*}
-\nabla\cdot(\rho_t v_t) &= -\nabla\cdot\!\left(\rho_t\mathbf{f} - \frac{g^2}{2}\rho_t\nabla\log\rho_t\right)\\
&= -\nabla\cdot(\rho_t\mathbf{f}) + \frac{g^2}{2}\nabla\cdot\!\left(\rho_t\cdot\frac{\nabla\rho_t}{\rho_t}\right)\\
&= -\nabla\cdot(\rho_t\mathbf{f}) + \frac{g^2}{2}\nabla\cdot(\nabla\rho_t)\\
&= -\nabla\cdot(\rho_t\mathbf{f}) + \frac{g^2}{2}\Delta\rho_t \quad\checkmark
\end{align*}
This is exactly Fokker-Planck. The two equations describe \textbf{the same density trajectory} $\rho_t$.

The point deserves emphasis:
\begin{itemize}[leftmargin=2em]
\item Fokker-Planck is an equation ``with diffusion''---it has $\Delta\rho$ (second-order term)
\item The continuity equation is a ``pure transport'' equation---only $\nabla\cdot(\rho v)$ (first-order term)
\item Yet they describe \textbf{the same density trajectory} $\rho_t$
\end{itemize}

What distinguishes them is the velocity field.
The effective velocity $v = \mathbf{f} - \frac{g^2}{2}\nabla\log\rho$ of the transport form absorbs the diffusion into the velocity: the term $-\frac{g^2}{2}\nabla\log\rho = -\frac{g^2}{2}\frac{\nabla\rho}{\rho}$ converts $\frac{g^2}{2}\Delta\rho$ exactly into $-\nabla\cdot(\rho\cdot(-\frac{g^2}{2}\nabla\log\rho))$.

\textbf{In Flow Matching language:} the probability flow ODE of a diffusion model is itself a flow, with velocity field $v_t = f - \frac{g^2}{2}\,\text{score}$. Diffusion and Flow Matching are thus two perspectives on a single object.

\medskip
\textbf{A note on EDM.} Karras et al.\ (2022), in \textit{``Elucidating the Design Space of Diffusion-Based Generative Models,''} push this ODE-centric view furthest. They write the forward process simply as the data convolved with Gaussian noise of scale $\sigma$---marginals $p(x;\sigma)$---and center generation on the probability flow ODE
\[
\frac{dx}{dt} = -\dot\sigma(t)\,\sigma(t)\,\nabla_x\log p\bigl(x;\sigma(t)\bigr).
\]
VP- and VE-SDE then become particular choices among a small set of \emph{orthogonal} design axes---the noise schedule $\sigma(t)$, an overall scaling $s(t)$, the network preconditioning, the training loss weighting, and the ODE solver (they adopt a second-order Heun integrator)---each of which can be tuned in isolation. In the language of this article, these axes are reparametrizations of \emph{time and scale} along one and the same density trajectory: the underlying geometric object---the Fokker--Planck flow, equivalently the Wasserstein gradient flow of the free energy---is invariant under them. EDM is thus practical evidence for the article's thesis that what matters is the trajectory and its score, not the particular SDE used to present it.

\medskip
\textbf{A note on energy-based formulations.} A complementary line of recent work makes the \emph{equilibrium} side of this geometry explicit. \emph{Energy Matching} (Balcerak et al., 2025) replaces the time-dependent score or velocity network by a single \emph{time-independent} scalar potential $V_\theta$: far from the data it transports samples by $-\nabla V_\theta$ (an optimal-transport flow), and near the data it relaxes to the Boltzmann equilibrium $\rho\propto e^{-V_\theta/\varepsilon}$. \emph{Equilibrium Matching} (Wang et al., 2025) likewise discards time conditioning, learning a single time-invariant gradient field of an implicit energy landscape whose stationary points are the data; generation then becomes gradient descent on that landscape, with step size and compute chosen at inference. In the language of this article both are direct parametrizations of the \emph{free-energy / equilibrium} structure: the equilibrium $\rho\propto e^{-V}$ is the minimizer of the free energy, and the relaxation toward it is the Wasserstein gradient flow (Section~6 derives Energy Matching from the JKO scheme). Where EDM emphasizes the transport, probability-flow-ODE face of the geometry, these emphasize its gradient-flow, Gibbs face---two faces of one object.

\subsection*{How deep do these connections run?}
\addcontentsline{toc}{subsection}{How deep do these connections run?}

Most expositions stop here: Fokker--Planck and the continuity equation are ``equivalent,'' and score and velocity differ by a drift. A few further questions, however, repay attention.

\begin{enumerate}[leftmargin=2em]
\item \textbf{Why can Fokker-Planck be written in the form of a continuity equation?}

This is not merely an algebraic coincidence. $\partial_t\rho + \nabla\cdot(\rho v) = 0$ means the evolution of density can be understood as
probability mass ``flowing'' under a \textbf{velocity field}. But in what ``space'' does this flow take place?
The answer: it takes place in the space of probability distributions $\Prob_2(\R^d)$, and the ``velocity field'' $v$ is precisely a \textbf{tangent vector} in this space.

\item \textbf{Why does VP-SDE ultimately converge to $\mathcal{N}(0,I)$, and not some other distribution?}

For VP-SDE, $\mathbf{f} = -\frac{\beta}{2}\mathbf{x}$ implies potential $V(x) = \frac{1}{2}|x|^2$,
so the Gibbs distribution is $\pi \propto e^{-V} = e^{-|x|^2/2} = \mathcal{N}(0,I)$.
The Fokker-Planck equation drives $\rho_t$ to converge to $\pi$---but this is no accident: it is \textbf{minimizing} $\KL(\rho_t\|\pi)$.

\textbf{Proof:} compute $\frac{d}{dt}\KL(\rho_t\|\pi)$. Let $\mathcal{F}(\rho) = \KL(\rho\|\pi) = \int\rho\log\frac{\rho}{\pi}\,dx$.
\begin{align*}
\frac{d}{dt}\mathcal{F}(\rho_t) &= \int\left(\log\frac{\rho_t}{\pi}+1\right)\partial_t\rho_t\,dx
= \int\log\frac{\rho_t}{\pi}\,\partial_t\rho_t\,dx \quad\text{(since $\int\partial_t\rho_t = 0$)}
\end{align*}
Substituting $\partial_t\rho_t = \nabla\cdot(\rho_t\nabla\log\frac{\rho_t}{\pi})$ (Fokker-Planck rewritten with velocity field $v = -\nabla\log\frac{\rho}{\pi}$):
\begin{align*}
&= \int\log\frac{\rho_t}{\pi}\cdot\nabla\cdot\!\left(\rho_t\nabla\log\frac{\rho_t}{\pi}\right)dx\\
&= -\int\rho_t\left|\nabla\log\frac{\rho_t}{\pi}\right|^2 dx \quad\text{(integration by parts)}\\
&\leq 0
\end{align*}
Equality holds if and only if $\nabla\log\frac{\rho_t}{\pi} = 0$, i.e., $\rho_t = \pi$. Therefore $\KL(\rho_t\|\pi)$ is \textbf{strictly monotonically decreasing} until $\rho_t = \pi$.

This is the hallmark of ``gradient descent'': the objective function decreases monotonically along the trajectory, at a rate proportional to the ``squared norm of the gradient'' $\int\rho|\nabla\log\frac{\rho}{\pi}|^2$ (this is called the relative Fisher information).

\item \textbf{Why is the OT path in Flow Matching ``optimal''?}

Lipman et al.\ found that models trained with the conditional OT path $x_t = (1-t)x_0 + t\,x_1$ (straight lines)
perform better and sample faster than models trained with diffusion paths (curved, roundabout).

Why should straight lines be preferable? A short computation settles it. For the conditional OT path, particles move from $x_0$ to $x_1$ at constant speed:
\[
X_t = (1-t)\,x_0 + t\,x_1, \qquad \dot X_t = x_1 - x_0 \quad\text{(constant velocity)}
\]
``Kinetic energy integral'' (measuring how ``costly'' the path is):
\[
\int_0^1|x_1-x_0|^2\,dt = |x_1-x_0|^2
\]
Averaging over all particles: $\E[|x_1-x_0|^2] = W_2^2(p_0,p_1)$ (for the optimal pairing).

For diffusion paths (such as VP-SDE), particle trajectories are curved spirals (first diffusing outward, then pulled back).
The path length for the same pair $(x_0,x_1)$ is longer, and the kinetic energy integral is larger:
\[
\int_0^1|\dot X_t^{\text{diffusion}}|^2\,dt > |x_1-x_0|^2
\]

The Benamou-Brenier formula states:
\[
W_2^2(\rho_0,\rho_1) = \inf_{\substack{(\rho_t,v_t):\\\partial_t\rho_t+\nabla\cdot(\rho_t v_t)=0\\\rho|_{t=0}=\rho_0,\;\rho|_{t=1}=\rho_1}}\int_0^1\!\!\int|v_t(x)|^2\rho_t(x)\,dx\,dt
\]
The right side is the ``total kinetic energy'' over all paths satisfying the continuity equation---the OT path (straight-line motion) achieves the minimum $W_2^2$.
Diffusion paths are not minimal $\to$ velocity fields are larger and more complex $\to$ harder for networks to learn.

\textbf{Conclusion: OT paths are geodesics (``straight lines'') in Wasserstein space, while diffusion paths ``take detours.''} This is the mathematical root of why Flow Matching with OT paths is more efficient than Diffusion.

\item \textbf{What is each DDPM denoising step ``optimizing''?}

DDPM starts from $x_T\sim\mathcal{N}(0,I)$ and progressively denoises to obtain $x_0\sim p_{\text{data}}$. Each step takes the form:
\[
x_{t-1} = \frac{1}{\sqrt{\alpha_t}}\left(x_t - \frac{1-\alpha_t}{\sqrt{1-\bar\alpha_t}}\epsilon_\theta(x_t,t)\right) + \sigma_t\,z
\]
This formula appears to simply ``subtract a bit of predicted noise.'' But what is its deeper meaning?

Recall our gradient flow: the Fokker-Planck velocity field is $v = -\nabla\log\rho - \nabla V$.
The DDPM prediction $\epsilon_\theta$ satisfies $\epsilon_\theta \approx -\sqrt{1-\bar\alpha_t}\,\nabla\log\rho_t$ (i.e., the negatively normalized score).
Substituting ``subtracting $\epsilon$'' amounts to ``taking a step in the score direction''---a discretized gradient flow.

More precisely: in probability space, this step is equivalent to solving the \textbf{JKO scheme} (implicit Euler):
\[
\rho_{k+1} = \arg\min_\rho\left\{\underbrace{\mathcal{F}(\rho)}_{\text{free energy (wants to decrease)}} + \underbrace{\frac{1}{2\tau}W_2^2(\rho,\rho_k)}_{\text{don't stray too far from the previous step}}\right\}
\]
where $\mathcal{F}(\rho) = \int\rho\log\rho + \int V\rho$ is the free energy.

The \textbf{JKO optimality condition} (derived in detail in Section~6) states that the minimizer satisfies:
\[
\nabla\log\rho_{k+1} + \nabla V + \frac{x - T_k(x)}{\tau} = 0
\]
where $T_k$ is the optimal transport map. Rearranging gives $\frac{x-T_k(x)}{\tau} = -\nabla\log\rho_{k+1} - \nabla V$, whose right-hand side is precisely the negative velocity field of Fokker--Planck.

So each DDPM denoising step is one step of implicit gradient descent in probability space---which accounts for its stability (implicit methods are unconditionally stable) and its convergence to the data distribution.

\item \textbf{Why can Energy Matching (2025) use a single scalar field for both transport and equilibrium?}

Balcerak et al.\ directly use the JKO scheme as a generative framework:
\[
\rho_{t+\Delta t} = \arg\min_\rho\left\{\frac{W_2^2(\rho,\rho_t)}{2\Delta t} + \int V_\theta\,d\rho + \varepsilon(t)\int\rho\log\rho\,dx\right\}
\]
The first-order optimality condition yields $\dot x = -\nabla V_\theta$ (transport) when $\varepsilon=0$,
and $\rho \propto e^{-V_\theta/\varepsilon}$ (Boltzmann distribution) at equilibrium.
A single scalar field $V_\theta$ simultaneously encodes the ``path'' and the ``endpoint''---because the JKO scheme naturally unifies both.
\end{enumerate}

\subsection*{This article's mission}
\addcontentsline{toc}{subsection}{This article's mission}

The answers to all the questions above point to the same mathematical structure:

\begin{center}
\fbox{\parbox{0.88\textwidth}{\centering
The space of probability distributions $\Prob_2(\R^d)$ forms a (formal) \textbf{Riemannian manifold} under the \textbf{Wasserstein distance}.\\[4pt]
The \textbf{continuity equation} is the ``equation of motion'' on this manifold (how tangent vectors induce density changes),\\[2pt]
The \textbf{Fokker-Planck equation} is the \textbf{gradient flow} of the free energy $\KL(\rho\|\pi)$ on this manifold,\\[2pt]
The \textbf{JKO scheme} is the implicit Euler discretization of this gradient flow.
}}
\end{center}

\medskip
Translated into language you are familiar with:

\begin{center}
\renewcommand{\arraystretch}{1.4}
\begin{tabular}{@{}p{5.8cm}p{7.5cm}@{}}
\toprule
\textbf{What you already know (ML language)} & \textbf{What this article will tell you (math language)} \\
\midrule
Continuity equation in Flow Matching & \textbf{Equation of motion} on the Wasserstein manifold \\
$v_t$ is a velocity field & $v_t$ is a \textbf{tangent vector} in probability space \\
$\int|v_t|^2 p_t\,dx$ is part of the training loss & This is the \textbf{Riemannian metric} of the Wasserstein manifold \\
OT paths are straighter than diffusion paths & OT paths are \textbf{geodesics} in Wasserstein space \\
Fokker-Planck equation & \textbf{Wasserstein gradient flow} of the free energy $\KL(\rho\|\pi)$ \\
Score $\nabla\log\rho_t$ & \textbf{Wasserstein gradient} of negative entropy \\
Each DDPM denoising step & One step of \textbf{JKO (implicit Euler)} in probability space \\
Energy Matching & Direct implementation of the JKO scheme \\
\bottomrule
\end{tabular}
\end{center}

\medskip

These are not analogies---they are precise mathematical theorems. The following six sections will build up this structure brick by brick:
\begin{itemize}[leftmargin=2em, itemsep=1pt]
\item Section~1 defines the Wasserstein distance (``the metric of the space'')
\item Section~2 derives the continuity equation (``the equation of motion'')
\item Section~3 proves $W_2 = $ geodesic distance (Benamou-Brenier, ``why OT paths are optimal'')
\item Section~4 derives Fokker-Planck (``from SDE to PDE'')
\item Section~5 proves Fokker-Planck $=$ gradient flow (``what is being optimized'')
\item Section~6 derives the JKO scheme (``how to discretize,'' ``each denoising step $=$ one optimization step'')
\end{itemize}

\medskip
\textbf{Prerequisites:} If you have read either Song et al.\ (2021) or Lipman et al.\ (2023), you already have sufficient background.
We assume familiarity with multivariable calculus, linear algebra, and basic probability theory; prerequisites in measure theory, differential geometry, etc.\ are developed from scratch in the appendices.

\subsection*{Related perspectives}
\addcontentsline{toc}{subsection}{Related perspectives}

This article gives a unified, end-to-end account of diffusion and flow matching from a single geometric object: the Wasserstein manifold of probability measures. On it the two are not the same thing but complementary principles---diffusion is the gradient flow of the free energy (an initial-value problem), flow matching the geodesic, i.e.\ the minimal-action path (a boundary-value problem). Where most treatments proceed through stochastic processes and score functions, here one geometric idea is carried, without gaps, from the definition of the Wasserstein metric to the algorithms used in practice. Several lines of work reach this same picture from other directions; we place the article among them below.

\textbf{Unifying frameworks for flows and diffusions.} The cleanest statement that diffusion and flow-based models are facets of one object is the \emph{stochastic interpolants} framework of Albergo, Boffi \& Vanden-Eijnden (2025): a single interpolant between two densities induces both a transport (continuity) equation and a family of forward/backward Fokker--Planck equations with tunable noise, hence both deterministic (ODE) and stochastic (SDE) generators. Lipman et al.\ (2024), in the \emph{Flow Matching Guide and Code}, give a comprehensive exposition presenting diffusion paths as special cases of Flow Matching. Both unify the two through the SDE/ODE and velocity-field language; this article instead routes the unification through Wasserstein geometry---gradient flow versus geodesic.

\textbf{Diffusion as a Wasserstein gradient flow.} Closest to the central identity of Section~5 is the work of Vuong, McCann, Santos \& Lin (2025), who argue that diffusion training is better read as flow matching to the velocity field of a Wasserstein gradient flow than as score learning, and give numerical evidence that the learned field is generally not conservative (hence not a genuine score).

\textbf{JKO as a generative algorithm.} A line of work builds generative models directly on the JKO scheme of Section~6: Mokrov et al.\ (2021) scale Wasserstein gradient flows using input-convex networks, and Xu, Cheng \& Xie (2023, JKO-iFlow) stack residual blocks that each realize one JKO step. Energy Matching (Balcerak et al., 2025) and Equilibrium Matching (Wang et al., 2025) belong to the same family.

\textbf{A geometric caveat.} The two principles should not be over-identified. Lavenant \& Santambrogio (2022) prove that the flow map of the Fokker--Planck equation is \emph{not} the optimal-transport map: the diffusion trajectory is in general not the $W_2$ geodesic between its endpoints. This is precisely the distinction drawn here---diffusion descends the free energy (an initial-value gradient flow, along a curved path), whereas OT Flow Matching follows the geodesic (a boundary-value problem)---and it is why the two reach the same endpoints by different paths.

\textbf{Complementary expositions.} For tutorials that unify Flow Matching and diffusion through the SDE/score lens rather than optimal transport, see the lecture notes of Holderrieth \& Erives (2025, MIT 6.S184).

\newpage

\section{Wasserstein Distance: The Cost of Moving Dirt}

\subsection{Why do we need a new distance?}

Suppose you have two piles of sand (probability distributions) and want to measure ``how different'' they are.

Consider two probability distributions $\mu$ and $\nu$ on $\R^d$. Classical divergences like KL or total variation have a fundamental limitation: they ignore the \emph{geometry} of the underlying space. Let us first recall their definitions:

\begin{definition}[KL divergence]
Let $\mu$ and $\nu$ have density functions $p(x)$ and $q(x)$, respectively. The \textbf{KL divergence} (Kullback--Leibler divergence; Kullback \& Leibler, 1951) is defined as:
\begin{equation}
    \KL(\mu\|\nu) = \int_{\R^d} p(x)\log\frac{p(x)}{q(x)}\,dx
\end{equation}
If there exists $x$ such that $p(x)>0$ but $q(x)=0$ (i.e., the support of $\mu$ is not contained in the support of $\nu$), then $\KL(\mu\|\nu)=+\infty$.

Intuition: KL measures ``how much information is lost when approximating $\mu$ by $\nu$''. It is not symmetric: $\KL(\mu\|\nu)\neq\KL(\nu\|\mu)$.
\end{definition}

\begin{definition}[Total Variation distance]
The \textbf{total variation distance} is defined as:
\begin{equation}
    \mathrm{TV}(\mu,\nu) = \sup_{A\subseteq\R^d}\left|\mu(A) - \nu(A)\right| = \frac{1}{2}\int_{\R^d}|p(x)-q(x)|\,dx
\end{equation}
The first expression is the most general definition (taking the supremum over all measurable sets $A$); the second is an equivalent form when both measures have densities.

Intuition: TV finds the ``event $A$ that most effectively distinguishes $\mu$ from $\nu$''---for this event, the difference in probability assigned by the two measures is maximized. TV$\in[0,1]$.
\end{definition}

\textbf{The essential limitation of both metrics: each is blind to the geometry of the underlying space.}

\begin{example}
Let $\mu = \delta_x$ and $\nu = \delta_y$ be two Dirac masses. Then:
\begin{itemize}
    \item $\KL(\mu \| \nu) = +\infty$ for any $x \neq y$ (they have disjoint support).
    \item $\mathrm{TV}(\mu, \nu) = 1$ for any $x \neq y$, whether $|x-y| = 0.001$ or $|x-y| = 1000$.
\end{itemize}
Neither captures that $\delta_x$ and $\delta_y$ should be ``close'' when $x \approx y$.

\textbf{In contrast, for Wasserstein:} $W_2(\delta_x,\delta_y) = |x-y|$. As $x$ approaches $y$, the distance tends to zero---this is exactly the behavior we expect.
\end{example}

\begin{intuition}
Imagine you are a laborer with a pile of sand (shaped like $\mu$), and you need to reshape it into a different configuration ($\nu$).
The Wasserstein distance measures precisely this: \textbf{the minimum total transportation cost required to accomplish this task}.
Sand that is closer costs less to move, so this metric naturally respects the geometry of the underlying space.
\end{intuition}

\subsection{The Kantorovich formulation}

The relaxation below is due to Kantorovich (1942); the resulting metric is named after Vaserstein (1969).

\begin{definition}[2-Wasserstein distance]
Let $\Prob_2(\R^d)$ denote the space of probability measures with finite second moment. For $\mu, \nu \in \Prob_2(\R^d)$:
\begin{equation}\label{eq:W2}
    W_2(\mu, \nu) := \left( \inf_{\gamma \in \Gamma(\mu, \nu)} \int_{\R^d \times \R^d} |x - y|^2 \, d\gamma(x, y) \right)^{1/2}
\end{equation}
where $\Gamma(\mu, \nu)$ is the set of \textbf{couplings}:
\[
    \Gamma(\mu, \nu) := \left\{ \gamma \in \Prob(\R^d \times \R^d) \;\middle|\; 
    \begin{aligned}
        &\gamma(A \times \R^d) = \mu(A) \;\;\forall A\\
        &\gamma(\R^d \times B) = \nu(B) \;\;\forall B
    \end{aligned}
    \right\}
\]
\end{definition}

\begin{intuition}
\textbf{Unpacking the formula term by term:}

$W_2(\mu,\nu) = \Big(\inf_\gamma \int|x-y|^2\,d\gamma(x,y)\Big)^{1/2}$

\begin{itemize}
    \item $\gamma(x,y)$: a transport plan. The ``density'' of $\gamma$ near the point $(x,y)$ represents ``how much mass is moved from location $x$ to location $y$''.
    \item $|x-y|^2$: the unit cost of moving mass from $x$ to $y$ (the squared distance). Why squared? (1) It is mathematically more convenient (related to inner products and energy); (2) it penalizes long-distance transport more heavily (moving 10 meters costs 100 times as much as moving 1 meter, not 10 times).
    \item $\int|x-y|^2\,d\gamma$: the \textbf{total transport cost} under plan $\gamma$ = $\sum$(each unit of mass $\times$ its squared transport distance).
    \item $\inf_\gamma$: find the plan with the \textbf{minimum cost} among all feasible plans.
    \item $(\cdot)^{1/2}$: taking the square root gives $W_2$ the dimension of a ``distance'' (same order as $|x-y|$).
\end{itemize}

\textbf{Intuition for the marginal conditions:}
\begin{itemize}
    \item $\gamma(A\times\R^d) = \mu(A)$: the total mass transported out of region $A$ equals the total mass of $\mu$ on $A$. In other words, all the dirt in $A$ must be moved out---none may be left behind.
    \item $\gamma(\R^d\times B) = \nu(B)$: the total mass arriving at region $B$ equals the total mass of $\nu$ on $B$. In other words, the demand at $B$ must be met exactly---no more, no less.
\end{itemize}

\textbf{An assignment analogy:} think of a dispatch problem---$\mu$ is the distribution of available vehicles, $\nu$ the distribution of requested destinations, $\gamma$ the assignment ``which vehicle serves which destination,'' and $|x-y|^2$ the cost of a trip. Then $W_2$ is the minimum total cost of the optimal assignment (after taking the square root).
\end{intuition}

\subsection{The Monge formulation and Brenier's theorem}

\textbf{Historical background:} In 1781, the French mathematician Gaspard Monge posed a practical problem:
how to transport a pile of earth (excavated from a quarry) to designated construction sites at minimum cost.
His idea was very natural: \textbf{assign each grain of earth a definite destination}.

\begin{definition}[Monge's problem]
Find a map $T:\R^d\to\R^d$ (a ``transport map'') that moves the mass at each location $x$ to location $T(x)$,
subject to:
\begin{enumerate}
    \item \textbf{Pushforward constraint:} $T_\#\mu = \nu$, meaning $\nu(B) = \mu(T^{-1}(B))$ for all Borel $B$.
    \item \textbf{Minimize total cost:}
    \begin{equation}\label{eq:monge}
        \inf_{T:\,T_\#\mu=\nu}\int_{\R^d}|x - T(x)|^2\,d\mu(x)
    \end{equation}
\end{enumerate}
\end{definition}

\begin{intuition}
\textbf{Unpacking the Monge problem term by term:}

\textbf{Meaning of the map $T$:} $T$ is an ``instruction table''---for each location $x$ in the source distribution $\mu$, $T(x)$ tells you ``where the dirt at $x$ should be sent''.

\textbf{Meaning of the constraint $T_\#\mu = \nu$:} After the transport is complete, the new distribution of dirt is exactly $\nu$.
\begin{itemize}
    \item In plain language: if the dirt at every location $x$ is moved according to $T$, the resulting shape must be $\nu$.
    \item In formulas: $\nu(B) = \mu(T^{-1}(B)) = \mu(\{x:T(x)\in B\})$ = ``the total amount of dirt transported into region $B$ equals the amount that $\nu$ requires in $B$''.
\end{itemize}

\textbf{Meaning of the cost $\int|x-T(x)|^2\,d\mu(x)$:}
\begin{itemize}
    \item $|x-T(x)|^2$: the dirt at location $x$ is moved to $T(x)$; this is the squared transport distance.
    \item $d\mu(x)$: how much dirt is at location $x$ (weighted by $\mu$).
    \item The integral: total transport cost of all dirt = $\sum$(amount of each portion of dirt $\times$ squared transport distance).
\end{itemize}

\textbf{One-dimensional example:} Let $\mu = \text{Uniform}[0,1]$ and $\nu = \text{Uniform}[1,2]$.
The optimal map is clearly $T(x) = x+1$ (shift every grain of sand one unit to the right). Cost = $\int_0^1|x-(x+1)|^2\,dx = \int_0^1 1\,dx = 1$.

\textbf{Another example:} $\mu = \text{Uniform}[0,2]$, $\nu = \text{Uniform}[0,1]$ (compressing a wide distribution into a narrow one).
The optimal map is $T(x) = x/2$. Cost = $\int_0^2|x - x/2|^2\cdot\frac{1}{2}\,dx = \int_0^2\frac{x^2}{4}\cdot\frac{1}{2}\,dx = \frac{1}{8}\cdot\frac{x^3}{3}\Big|_0^2 = \frac{1}{3}$.
\end{intuition}

\textbf{The principal difficulty with Monge's formulation:}

The map $T$ requires that \textbf{all} mass at each $x$ goes to the same place---splitting is not allowed. This leads to two problems:

\textbf{Problem 1: the solution may not exist.}
Let $\mu = \delta_0$ (all mass at the origin), $\nu = \frac{1}{2}\delta_{-1}+\frac{1}{2}\delta_1$ (half at $-1$, half at $1$).
No map $T$ satisfying $T_\#\delta_0 = \nu$ exists---since $T(0)$ can only be a single point, it cannot simultaneously be $-1$ and $1$.
But the Kantorovich coupling $\gamma = \frac{1}{2}\delta_{(0,-1)}+\frac{1}{2}\delta_{(0,1)}$ is feasible (splitting the mass at the origin in half).

\textbf{Problem 2: the optimization problem is non-convex.}
The constraint $T_\#\mu=\nu$ is highly nonlinear. If both $T_1$ and $T_2$ satisfy $T_{1\#}\mu=T_{2\#}\mu=\nu$,
their average $\frac{T_1+T_2}{2}$ typically does \textbf{not} satisfy the constraint. This makes direct optimization very difficult.

\textbf{Kantorovich's relaxation:} replace the map $T$ with a joint distribution $\gamma$, turning the non-convex constraint into a \textbf{linear constraint}
(the marginal conditions are linear in $\gamma$), and the non-convex problem into a linear program---which always has a solution and can be computed efficiently.

\medskip

\textbf{The relationship between Monge and Kantorovich:}

If the optimal map $T$ exists, then the corresponding Kantorovich coupling is $\gamma = (\id, T)_\#\mu$,
i.e., the joint distribution is concentrated on the graph of the map $T$: $\{(x,T(x)):x\in\R^d\}$.
Conversely, if the optimal coupling $\gamma$ happens to be concentrated on the graph of some map, then that map is the solution to the Monge problem.

\begin{theorem}[Brenier, 1991]\label{thm:brenier}
If $\mu$ is absolutely continuous with respect to the Lebesgue measure, then:
\begin{enumerate}
    \item The optimal transport map $T$ exists and is unique $\mu$-a.e.
    \item $T = \nabla \phi$ for some convex function $\phi: \R^d \to \R$.
\end{enumerate}
Consequently:
\begin{equation}
    W_2^2(\mu, \nu) = \int_{\R^d} |x - \nabla\phi(x)|^2 \, d\mu(x)
\end{equation}
\end{theorem}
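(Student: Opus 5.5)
The plan is the standard route through Kantorovich duality and cyclical monotonicity. Both $\mu,\nu\in\Prob_2(\R^d)$ throughout.

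\textbf{Existence of an optimal coupling.} The coupling set $\Gamma(\mu,\nu)$ is tight, since its marginals are fixed. By Prokhorov it is weakly compact. The cost $|x-y|^2$ is continuous and bounded below, so $\gamma\mapsto\int|x-y|^2\,d\gamma$ is weakly lower semicontinuous. Hence a minimizer $\gamma^\star$ exists.

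\textbf{Reduction to a convex potential.} Expanding $|x-y|^2=|x|^2-2x\cdot y+|y|^2$, the terms $\int|x|^2d\mu$ and $\int|y|^2d\nu$ are the same for every coupling. So minimizing cost is equivalent to maximizing $\int x\cdot y\,d\gamma$.

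\textbf{Support of $\gamma^\star$ and construction of $\phi$.} First I show that $\mathrm{supp}\,\gamma^\star$ is cyclically monotone: for any $(x_1,y_1),\dots,(x_n,y_n)$ in the support, $\sum_i x_i\cdot y_i\ge\sum_i x_i\cdot y_{i+1}$. If this failed, continuity would give small neighborhoods of these points. Shuffling a little mass of $\gamma^\star$ cyclically among them keeps both marginals and strictly increases $\int x\cdot y$, contradicting optimality.

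Rockafellar's theorem (Appendix~\ref{app:convex}) then produces a proper lower semicontinuous convex $\phi$ with $\mathrm{supp}\,\gamma^\star\subset\partial\phi$. Concretely, fix $(x_0,y_0)$ in the support and set
\[
\phi(x)=\sup\bigl\{y_n\cdot(x-x_n)+\dots+y_0\cdot(x_1-x_0)\bigr\},
\]
the supremum over finite chains in the support. Cyclical monotonicity makes $\phi(x_0)\le 0$, so $\phi$ is finite there, and $y\in\partial\phi(x)$ holds for every $(x,y)$ in the support.

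\textbf{Using absolute continuity (main obstacle).} I expect this step to be the hardest. One must check that $\phi$ is finite on a set of full $\mu$-measure, i.e.\ that the interior of its domain carries all of $\mu$. The boundary of a convex set is Lebesgue-null, hence $\mu$-null. Rademacher's (or Alexandrov's) theorem then says the convex $\phi$ is differentiable Lebesgue-a.e., hence $\mu$-a.e. by absolute continuity. Wherever $\phi$ is differentiable, $\partial\phi(x)=\{\nabla\phi(x)\}$.

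So $\gamma^\star$ is concentrated on the graph of $\nabla\phi$, i.e.\ $\gamma^\star=(\id,\nabla\phi)_\#\mu$. This gives $(\nabla\phi)_\#\mu=\nu$, and the map $T=\nabla\phi$ attains the Kantorovich infimum. Since every Monge map yields a coupling, $T$ is Monge-optimal and
\[
W_2^2(\mu,\nu)=\int|x-\nabla\phi(x)|^2\,d\mu(x).
\]

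\textbf{Uniqueness.} Suppose $T_1=\nabla\phi_1$ and $T_2=\nabla\phi_2$ are both optimal. The set of optimal couplings is convex, so $\tfrac12(\gamma_1+\gamma_2)$ is also optimal. By the previous step it is concentrated on the graph of a single map. A mixture of two graph measures is a graph measure only if the maps agree $\mu$-a.e., so $T_1=T_2$ $\mu$-a.e.
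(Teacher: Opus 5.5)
Your proposal is correct, and it follows the standard rigorous proof: cyclical monotonicity, then Rockafellar, then a.e.\ differentiability of convex functions. The paper itself gives no proof, only heuristics: atoms may need splitting while absolutely continuous mass does not, a two-point ``uncrossing'' computation, and monotonicity of gradients of convex functions. Your route is what makes those heuristics rigorous.

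In particular, the paper's uncrossing argument only yields pairwise monotonicity $(T(x_1)-T(x_2))\cdot(x_1-x_2)\ge 0$. In $d\ge 2$ this does not force $T$ to be a gradient: $x\mapsto Ax$ with $A=\bigl(\begin{smallmatrix}1&-\varepsilon\\ \varepsilon&1\end{smallmatrix}\bigr)$ is monotone, but its Jacobian is not symmetric. Your $n$-cycle condition plus Rockafellar's construction is what actually produces the potential $\phi$. Rademacher plus absolute continuity of $\mu$ is the precise version of ``no splitting needed.''

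A few refinements:
\begin{itemize}
\item The step you flag as the main obstacle is in fact immediate. For $(x,y)\in\mathrm{supp}\,\gamma^\star$, appending $(x,y)$ to a chain gives $\phi(z)\ge\phi(x)+y\cdot(z-x)$. Taking $z=x_0$ yields $\phi(x)\le y\cdot(x-x_0)<\infty$. So $\mathrm{dom}\,\phi$ contains the first projection of the support, which has full $\mu$-measure, and the null-boundary argument then applies.
\item Rockafellar's theorem is not in Appendix~\ref{app:convex}, so cite it externally. Your explicit formula suffices anyway.
\item Your $\phi$ may equal $+\infty$ somewhere, and this is unavoidable. Take $\mu$ uniform on $[0,1]$ and $\nu$ with density proportional to $y^{-4}$ on $[1,\infty)$. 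The optimal map is unbounded near $1$, while a finite convex function on $\R$ has locally bounded derivative. So state explicitly that ``$\phi:\R^d\to\R$'' must be read as: convex, lower semicontinuous, and finite on a convex set whose interior carries $\mu$.
\item Your uniqueness argument never uses that $T_1,T_2$ are gradients, so drop that hypothesis. Any Monge-optimal map induces an optimal coupling, since the Monge infimum equals the Kantorovich minimum, as you showed. This gives uniqueness among all optimal maps, which is what the statement requires.
\end{itemize}
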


\begin{intuition}
\textbf{What does Brenier's theorem say? Three levels of understanding:}

\textbf{Level 1: Optimal transport is deterministic.}
Although Kantorovich allows ``splitting the mass at $x$ among multiple destinations $y$'' (probabilistic transport),
when $\mu$ is absolutely continuous, the optimal plan \textbf{requires no splitting at all}---the mass at each $x$ is moved entirely to a single definite location $T(x)$.
This greatly simplifies the problem: instead of optimizing over a joint distribution $\gamma$ (an infinite-dimensional object), we optimize over a map $T$ (relatively simpler).

\textbf{Level 2: The optimal map does not cross.}
$T = \nabla\phi$ with $\phi$ convex means that $T$ is ``monotone'' (the high-dimensional analogue of monotonicity = gradient of a convex function).
Intuition: if two transport paths cross ($x_1$'s dirt goes to $y_2$, $x_2$'s dirt goes to $y_1$, and the paths cross),
then we can ``uncross'' them ($x_1\to y_1$, $x_2\to y_2$), reducing the total cost.
Therefore the optimal plan must be uncrossed, and ``uncrossed'' $\iff$ the map is the gradient of a convex function.

\begin{center}
\begin{tikzpicture}[scale=0.8]
    \node at (-2.5, 1.5) {\textbf{Sub-optimal (crossed):}};
    \fill (0,2) circle(2pt) node[left]{$x_1$};
    \fill (0,0) circle(2pt) node[left]{$x_2$};
    \fill (3,2) circle(2pt) node[right]{$y_2$};
    \fill (3,0) circle(2pt) node[right]{$y_1$};
    \draw[->, red, thick] (0,2) -- (3,0);
    \draw[->, red, thick] (0,0) -- (3,2);
    
    \node at (6.5, 1.5) {\textbf{Optimal (uncrossed):}};
    \fill (8,2) circle(2pt) node[left]{$x_1$};
    \fill (8,0) circle(2pt) node[left]{$x_2$};
    \fill (11,2) circle(2pt) node[right]{$y_1$};
    \fill (11,0) circle(2pt) node[right]{$y_2$};
    \draw[->, blue, thick] (8,2) -- (11,2);
    \draw[->, blue, thick] (8,0) -- (11,0);
\end{tikzpicture}
\end{center}

Cost of the crossed plan: $|x_1-y_2|^2 + |x_2-y_1|^2$.
Cost of the uncrossed plan: $|x_1-y_1|^2 + |x_2-y_2|^2$.
The difference is $2(x_1-x_2)\cdot(y_2-y_1)$ (obtained by expanding the squares). When the paths cross, $(x_1-x_2)$ and $(y_1-y_2)$ point in opposite directions, so this difference is $\geq 0$---crossing is always more expensive.

\textbf{Level 3: The geometric meaning of convexity of $\phi$.}
$\nabla\phi$ is the gradient of a convex function, so it is a ``spreading'' map---it does not send two distant points to the same neighborhood.
More precisely: the gradient of a convex function satisfies the monotonicity condition
\[
(\nabla\phi(x_1)-\nabla\phi(x_2))\cdot(x_1-x_2) \geq 0
\]
This guarantees that the map ``preserves order''---points on the left are sent to the left, points on the right to the right, with no reversals (crossings).
\end{intuition}

\textbf{A common point of confusion: Brenier's theorem fails for discrete distributions.}

Consider the following counterexample:
\[
\mu = 0.6\,\delta_a + 0.4\,\delta_b, \qquad \nu = 0.5\,\delta_c + 0.5\,\delta_d
\]

If a map $T$ satisfying $T_\#\mu = \nu$ exists, then $T$ can only send $a$ to one of $c$ or $d$, and likewise for $b$.
But no assignment works:
\begin{itemize}
    \item $T(a)=c,\,T(b)=d$: then $\nu(\{c\}) = \mu(T^{-1}(\{c\})) = \mu(\{a\}) = 0.6 \neq 0.5$~~\text{\ding{55}}
    \item $T(a)=d,\,T(b)=c$: then $\nu(\{d\}) = \mu(\{a\}) = 0.6 \neq 0.5$~~\text{\ding{55}}
    \item $T(a)=c,\,T(b)=c$: then $\nu(\{c\}) = 0.6+0.4 = 1 \neq 0.5$~~\text{\ding{55}}
    \item All other cases fail similarly.
\end{itemize}

\textbf{Conclusion: No transport map from $\mu$ to $\nu$ exists.}

An optimal \textbf{coupling}, however, does exist. The optimal plan is: split 0.5 from $a$ to $c$, send the remaining 0.1 to $d$; send all 0.4 from $b$ to $d$:
\[
\gamma = 0.5\,\delta_{(a,c)} + 0.1\,\delta_{(a,d)} + 0.4\,\delta_{(b,d)}
\]
Verification of marginals: the total mass of $\gamma$ on $a$ is $0.5+0.1 = 0.6 = \mu(\{a\})$~\checkmark;
the total mass of $\gamma$ on $c$ is $0.5 = \nu(\{c\})$~\checkmark; and so on.

\textbf{This is why Brenier's theorem requires the condition ``$\mu$ is absolutely continuous'':}
\begin{itemize}
    \item \textbf{Discrete distributions} (as above): a finite amount of mass is concentrated at a single point, and transport may \textbf{necessarily require splitting}, so a map $T$ is insufficient---one must use a coupling $\gamma$.
    \item \textbf{Absolutely continuous distributions} (with density $\rho$): each point carries only an ``infinitesimal'' mass $\rho(x)dx$ and never needs to be split---even if $\nu$ is discrete, no source point has ``a bulk of mass that must be allocated to different destinations'' (since each point of $\mu$ has zero mass). Therefore a deterministic map $T$ suffices. Note: only $\mu$ needs to be absolutely continuous; there is \textbf{no} restriction on $\nu$.
\end{itemize}

\textbf{In one sentence:} Absolutely continuous $\Rightarrow$ mass is ``continuously spread'' $\Rightarrow$ no splitting needed $\Rightarrow$ map $T$ exists $\Rightarrow$ Brenier applies.

Discrete/singular distribution $\Rightarrow$ there are ``bulk'' masses $\Rightarrow$ splitting may be necessary $\Rightarrow$ only a coupling $\gamma$ can describe the solution $\Rightarrow$ Brenier does not apply.

\subsection{A concrete example: Gaussians}

\begin{example}\label{ex:gaussian}
For $\mu = \mathcal{N}(m_1, \Sigma_1)$ and $\nu = \mathcal{N}(m_2, \Sigma_2)$:
\begin{equation}\label{eq:W2_gaussian}
    W_2^2(\mu, \nu) = |m_1 - m_2|^2 + \tr\!\left(\Sigma_1 + \Sigma_2 - 2\bigl(\Sigma_1^{1/2}\Sigma_2\Sigma_1^{1/2}\bigr)^{1/2}\right)
\end{equation}
Special case: if $\Sigma_1 = \Sigma_2 = \sigma^2 I$, then $W_2(\mu, \nu) = |m_1 - m_2|$.

\textbf{Key steps of the derivation:} For Gaussian distributions, the optimal transport map is affine: $T(x) = Ax + b$.
By Brenier's theorem, $T = \nabla\phi$ ($\phi$ convex) requires $A$ to be a \textbf{symmetric positive definite} matrix.
The pushforward condition $T_\#\mu = \nu$ gives $b = m_2 - Am_1$ and $A\Sigma_1 A^T = \Sigma_2$.
Combined with $A = A^T\succ 0$, the unique solution is $A = \Sigma_1^{-1/2}(\Sigma_1^{1/2}\Sigma_2\Sigma_1^{1/2})^{1/2}\Sigma_1^{-1/2}$.
The cost $\int|x-Tx|^2\,d\mu = |m_1-m_2|^2 + \tr\bigl((I-A)\Sigma_1(I-A)^T\bigr)$,
which simplifies to the stated formula using $A^2\Sigma_1 = \Sigma_2$ (by the symmetry of $A$).
\end{example}

\subsection{Key properties}

The space $(\Prob_2(\R^d), W_2)$ is:
\begin{enumerate}
    \item A \textbf{complete separable metric space} (Polish space).
    \item It metrizes \textbf{weak convergence + convergence of second moments}.
    \item Most profoundly: it admits a \textbf{formal Riemannian structure} (Otto, 2001).
\end{enumerate}

We now explain each of these three properties in detail.

\subsubsection*{Complete (completeness)}

\begin{definition}[Completeness]
A metric space $(X,d)$ is \textbf{complete} if every Cauchy sequence in it converges.

Cauchy sequence: $\{x_n\}$ satisfies that for any $\epsilon>0$, there exists $N$ such that $d(x_m,x_n)<\epsilon$ whenever $m,n>N$ (the points get arbitrarily close to each other).

Completeness means: as long as points get closer and closer together, they must converge to some limit within the space---they cannot ``escape outside the space''.
\end{definition}

\textbf{Examples:}
\begin{itemize}
    \item $\R$ is complete: every Cauchy sequence converges to some real number.
    \item $\mathbb{Q}$ (the rationals) is not complete: $1, 1.4, 1.41, 1.414, \ldots$ is a Cauchy sequence in $\mathbb{Q}$, but its limit $\sqrt{2}\notin\mathbb{Q}$.
    \item The open interval $(0,1)$ is not complete: $\frac{1}{n}$ is a Cauchy sequence, but its limit $0\notin(0,1)$.
\end{itemize}
\textbf{Why this matters:} Completeness guarantees that when we take limits (such as the $\tau\to 0$ limit in the JKO scheme), the limiting object is still a legitimate probability measure. In an incomplete space, ``taking limits'' may lead outside the space, making rigorous analysis much harder.

\subsubsection*{Separable (separability)}

\begin{definition}[Separability]
A metric space $(X,d)$ is \textbf{separable} if it has a \textbf{countable dense subset}---
i.e., there exists a countable set $\{x_1,x_2,\ldots\}\subset X$ such that every point in $X$ can be approximated arbitrarily well by elements of this set.
\end{definition}

\textbf{Examples:}
\begin{itemize}
    \item $\R$ is separable: the rationals $\mathbb{Q}$ are countable and dense in $\R$.
    \item $\R^d$ is separable: points with all rational coordinates $\mathbb{Q}^d$ are dense.
    \item $\Prob_2(\R^d)$ is separable: finite mixtures of Dirac masses $\sum_{i=1}^n w_i\delta_{x_i}$ ($w_i\in\mathbb{Q}$, $x_i\in\mathbb{Q}^d$) can approximate any probability measure.
\end{itemize}
\textbf{Why this matters:} Separability ensures that the space is ``not too large''---we can approximate everything using countably many ``basic elements''.
This is a necessary condition for many theorems in topology and functional analysis (e.g., Prokhorov's theorem, weak compactness).

\subsubsection*{Weak convergence}

\begin{definition}[Weak convergence of measures]
A sequence of probability measures $\mu_n$ \textbf{converges weakly} to $\mu$ (written $\mu_n\rightharpoonup\mu$) if for all bounded continuous functions $f$:
\[
\int f\,d\mu_n \to \int f\,d\mu \quad\text{as }n\to\infty
\]
\end{definition}

\textbf{Intuition:} Weak convergence means ``the shapes of the distributions become increasingly similar''---for any ``smooth observable'' $f$,
the expectation under $\mu_n$, $\E_{\mu_n}[f]$, approaches the expectation under $\mu$, $\E_\mu[f]$.

\textbf{Examples:}
\begin{itemize}
    \item $\mu_n = \mathcal{N}(0, 1/n)$ (Gaussians with vanishing variance) converges weakly to $\delta_0$. Intuition: the bell curve becomes narrower and narrower, eventually collapsing to a point.
    \item $\mu_n = \text{Uniform}[-1/n, 1/n]$ converges weakly to $\delta_0$.
    \item $\mu_n = \frac{1}{n}\sum_{i=1}^n \delta_{i/n}$ converges weakly to $\text{Uniform}[0,1]$. Intuition: the uniformly scattered points become increasingly dense, approaching the continuous uniform distribution.
\end{itemize}

\textbf{Weak convergence vs.\ Wasserstein convergence:}

$W_2(\mu_n,\mu)\to 0$ $\iff$ $\mu_n\rightharpoonup\mu$ \textbf{and} the second moments converge ($\int|x|^2\,d\mu_n\to\int|x|^2\,d\mu$).

Pure weak convergence does not control the ``tails''---for example, $\mu_n = (1-\frac{1}{n})\delta_0 + \frac{1}{n}\delta_n$ converges weakly to $\delta_0$,
but $W_2(\mu_n,\delta_0) = \sqrt{\frac{1}{n}\cdot n^2} = \sqrt{n}\to\infty$.
Wasserstein convergence is stronger because it additionally requires that mass cannot ``escape to infinity''.

\subsubsection*{Formal Riemannian structure}

\textbf{We now use the language above to describe the Wasserstein space:}

\begin{definition}[Riemannian manifold --- informal]
A \textbf{Riemannian manifold} $(M,g)$ is a smooth manifold $M$ equipped with an \textbf{inner product} $g_p$ on the tangent space $T_pM$ at each point $p$. This inner product allows us to:
\begin{itemize}
    \item Measure the ``length'' of tangent vectors: $\|v\|_p = \sqrt{g_p(v,v)}$
    \item Compute the ``angle'' between two tangent vectors: $\cos\theta = \frac{g_p(u,v)}{\|u\|_p\|v\|_p}$
    \item Define ``shortest paths'' (geodesics) and ``distance'': $d(p,q) = \inf_\gamma \int_0^1\|\dot\gamma(t)\|_{\gamma(t)}\,dt$
    \item Define the ``gradient'' of a function: $g_p(\text{grad}\,f, v) = Df_p[v]$
\end{itemize}
\end{definition}

\textbf{Simple examples:}
\begin{itemize}
    \item $\R^n$ with the standard inner product $g_p(u,v)=u\cdot v$: the simplest Riemannian manifold. The tangent space is $\R^n$ itself, and geodesics are straight lines.
    \item The sphere $S^2$ with the induced metric: the tangent space at a point is the tangent plane to the sphere there, and geodesics are great circle arcs.
\end{itemize}

\textbf{Otto's discovery: the probability space $(\Prob_2(\R^d), W_2)$ also ``looks like'' a Riemannian manifold.}

Analogy table:
\begin{center}
\renewcommand{\arraystretch}{1.3}
\begin{tabular}{@{}lll@{}}
\toprule
\textbf{Concept} & \textbf{Finite-dim.\ manifold $M$} & \textbf{Wasserstein space $\Prob_2$} \\
\midrule
Point & $p\in M$ & $\rho\in\Prob_2(\R^d)$ \\
Tangent vector & $v\in T_pM$ & Gradient field $\nabla\phi\in L^2(\rho)$ \\
Inner product & $g_p(u,v)$ & $\int\nabla\phi\cdot\nabla\psi\,\rho\,dx$ \\
Distance & $d(p,q)$ (geodesic length) & $W_2(\mu,\nu)$ (Benamou--Brenier) \\
Gradient & $\text{grad}\,f$ & $\text{grad}_W\mathcal{F}=\nabla\frac{\delta\mathcal{F}}{\delta\rho}$ \\
Gradient flow & $\dot p = -\text{grad}\,f$ & Fokker--Planck equation \\
\bottomrule
\end{tabular}
\end{center}

\textbf{Why ``formal''?}

Because $\Prob_2(\R^d)$ is an \textbf{infinite-dimensional} space and is not, strictly speaking, a finite-dimensional smooth manifold.
Otto's ``Riemannian structure'' faces several technical difficulties in a rigorous mathematical sense:
\begin{itemize}
    \item The tangent space $T_\rho\Prob_2$ is not a standard Banach/Hilbert space ($\rho$ appears as a weight in the inner product).
    \item There is no uniform smooth structure (the tangent spaces at different ``points'' $\rho$ cannot be naturally ``glued together'').
    \item Operations like the exponential map are only well-defined in special cases.
\end{itemize}
However, this ``formal'' Riemannian perspective is \textbf{computationally} entirely correct---the
PDEs, gradient formulas, and the JKO scheme derived from it are all rigorously valid.
Later, Ambrosio--Gigli--Savar\'e (2008) made these conclusions rigorous using the theory of ``gradient flows in metric spaces'',
without relying on manifold structure. Thus, although Otto calculus is ``formal'', the conclusions it yields can all be rigorously proved.

\begin{keypoint}
$(\Prob_2(\R^d), W_2)$ is not merely a metric space---it is (formally) an infinite-dimensional Riemannian manifold.
This means we can speak of ``tangent vectors'', ``inner products'', ``gradients'', and ``gradient flows'' in the space of probability measures.
This is the foundation of the entire story.
\end{keypoint}

\section{The Continuity Equation: Conservation of Mass}

\textbf{Where we are:} Section~1 defined the Wasserstein distance---a ``static'' concept (comparing the distance between two fixed distributions). But to discuss gradient flows, we need to describe how distributions \textbf{evolve over time}. The continuity equation is precisely the tool for this: it tells us ``if mass flows according to a velocity field $v$, how does the density evolve?'' The Benamou--Brenier formula, Fokker--Planck equation, and JKO scheme that follow are all built upon this equation.

\textbf{Core idea:} the continuity equation is the \textbf{local differential form of mass conservation}: the density inside any infinitesimal region changes only through the net flux across its boundary.

\subsection{Physical picture}

Imagine ink diffusing in a glass of water. At each instant $t$, the ink concentration is $\rho_t(x)$,
and the average velocity of ink particles at position $x$ is $v_t(x)$.
The most fundamental physical law is: \textbf{mass is neither created nor destroyed}.

\subsection{Derivation from first principles}

Take any fixed region $\Omega \subset \R^d$. The total mass inside is:
\[
    M(t) = \int_\Omega \rho_t(x) \, dx
\]

The rate of change of mass equals the net flux through the boundary $\partial\Omega$:
\begin{equation}\label{eq:mass_balance}
    \frac{d}{dt}\int_\Omega \rho_t(x)\,dx = -\int_{\partial\Omega} \rho_t(x)\,v_t(x)\cdot \hat{n}(x)\,dS(x)
\end{equation}
where $\hat{n}$ is the outward unit normal (the minus sign: outflow \emph{decreases} mass inside).

\begin{intuition}
\textbf{Where does the integral $\int_{\partial\Omega}\rho\,v\cdot\hat{n}\,dS$ come from?}

Consider a small surface element $dS$ on the boundary $\partial\Omega$, with outward unit normal $\hat{n}$. Near this surface element:
\begin{enumerate}
    \item The \textbf{mass density} is $\rho(x)$ (how much mass per unit volume).
    \item The \textbf{velocity} is $v(x)$ (the direction and speed of mass at that point).
    \item The \textbf{normal component} $v\cdot\hat{n}$: the projection of velocity onto the normal direction of the surface element.
    \begin{itemize}
        \item $v\cdot\hat{n} > 0$: velocity has an outward component---mass is \textbf{flowing out}.
        \item $v\cdot\hat{n} < 0$: velocity has an inward component---mass is \textbf{flowing in}.
        \item $v\cdot\hat{n} = 0$: velocity is parallel to the surface element---mass \textbf{moves along the boundary} without entering or exiting.
    \end{itemize}
\end{enumerate}

\textbf{Mass passing through this surface element per unit time} = density $\times$ normal velocity $\times$ area:
\[
d\Phi = \rho(x)\cdot(v(x)\cdot\hat{n}(x))\cdot dS
\]

\textbf{Why ``density $\times$ normal velocity''?} Picture a checkpoint of area $dS$ across a road (the surface element).
The number of vehicles crossing per unit time equals the traffic density times the velocity component normal to the checkpoint.
When traffic approaches at an angle, only the normal component $v\cdot\hat n$ contributes to throughput; the
tangential component moves vehicles along the checkpoint without crossing it.

Integrating over the entire boundary gives the \textbf{total net outflux}:
\[
\text{Net outflux} = \int_{\partial\Omega}\rho\,v\cdot\hat{n}\,dS
\]
By conservation of mass: the rate of decrease of mass inside $\Omega$ = net outflux, hence the minus sign.
\end{intuition}

\textbf{Step-by-step derivation:}

\textbf{Step 1.} Apply the divergence theorem to the right side:
\[
    -\int_{\partial\Omega}\rho_t v_t \cdot \hat{n}\,dS = -\int_\Omega \nabla\cdot(\rho_t v_t)\,dx
\]

\textbf{Step 2.} Move $\frac{d}{dt}$ inside the integral on the left (justified since $\Omega$ is fixed):
\[
    \int_\Omega \partial_t\rho_t(x)\,dx = -\int_\Omega \nabla\cdot(\rho_t v_t)(x)\,dx
\]

\textbf{Step 3.} Since this holds for all $\Omega$ and the integrand is continuous, it must vanish pointwise (du Bois-Reymond lemma, Appendix~\ref{app:analysis}):

\begin{equation}\label{eq:continuity}
    \boxed{\partial_t \rho_t + \nabla\cdot(\rho_t\,v_t) = 0}
\end{equation}

This is the \textbf{continuity equation}.

\begin{intuition}
Expanding the divergence operator: $\nabla\cdot(\rho v) = v\cdot\nabla\rho + \rho\,\nabla\cdot v$. So the continuity equation can also be written as:
\[
\partial_t\rho + v\cdot\nabla\rho = -\rho\,\nabla\cdot v
\]
The left-hand side $\frac{D\rho}{Dt} = \partial_t\rho + v\cdot\nabla\rho$ is the rate of change of density following the fluid (the material derivative),
and the right-hand side $-\rho\nabla\cdot v$ represents density changes due to compression/expansion of the fluid.
If the fluid is incompressible ($\nabla\cdot v = 0$), then the density is constant along streamlines.
\end{intuition}

\subsection{Weak formulation}

For non-smooth $\rho_t$ (or when $\rho_t$ is a measure, not a function), we use the weak form. Multiply \eqref{eq:continuity} by a test function $\varphi \in C_c^\infty(\R^d\times[0,T])$ and integrate by parts:

\begin{equation}\label{eq:continuity_weak}
    \int_0^T\!\int_{\R^d}\left[\partial_t\varphi(x,t) + v_t(x)\cdot\nabla_x\varphi(x,t)\right]d\mu_t(x)\,dt = 0
\end{equation}

\textbf{Derivation details:} Starting from $\partial_t\rho + \nabla\cdot(\rho v) = 0$, multiply both sides by $\varphi$ and integrate:
\[
\int_0^T\!\int \varphi\,\partial_t\rho\,dx\,dt + \int_0^T\!\int\varphi\,\nabla\cdot(\rho v)\,dx\,dt = 0
\]
Integration by parts in $t$ for the first term: $\int\varphi\,\partial_t\rho\,dt = -\int\partial_t\varphi\,\rho\,dt$ (boundary terms vanish since $\varphi$ has compact support).\\
Integration by parts in $x$ for the second term: $\int\varphi\,\nabla\cdot(\rho v)\,dx = -\int\nabla\varphi\cdot(\rho v)\,dx$.\\
Combining: $-\int_0^T\!\int[\partial_t\varphi + v\cdot\nabla\varphi]\rho\,dx\,dt = 0$; negating both sides gives the result.

\subsection{Connection to particle ODEs}

\begin{proposition}\label{prop:ode_continuity}
If particles follow the ODE $\dot{X}_t = v_t(X_t)$ with initial distribution $X_0 \sim \mu_0$, then $\mu_t := (X_t)_\#\mu_0$ satisfies the continuity equation with velocity field $v_t$.
\end{proposition}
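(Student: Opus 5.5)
The plan is to verify the weak formulation \eqref{eq:continuity_weak} directly, testing the pushforward measures against smooth compactly supported functions rather than working with densities. This is the natural choice, because $\mu_t$ need not have a density: $\mu_0$ may be an arbitrary probability measure. I will assume enough regularity on $v$ that the flow map exists, say $v_t(x)$ continuous in $(x,t)$ and locally Lipschitz in $x$, with trajectories defined on $[0,T]$. Write $X_t = \Phi_t(X_0)$ for the flow, so that $\mu_t = (\Phi_t)_\#\mu_0$. The change-of-variables rule for pushforwards then gives
\[
\int_{\R^d}\varphi(x,t)\,d\mu_t(x) = \int_{\R^d}\varphi(\Phi_t(x_0),t)\,d\mu_0(x_0)
\]
for every test function $\varphi\in C_c^\infty(\R^d\times[0,T])$.

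\textbf{Key step: differentiate along each trajectory.} Fix a starting point $x_0$. By the chain rule together with the ODE,
\[
\frac{d}{dt}\varphi(X_t,t)=\partial_t\varphi(X_t,t)+\nabla_x\varphi(X_t,t)\cdot v_t(X_t).
\]
Integrating this from $0$ to $T$, the left side telescopes to $\varphi(X_T,T)-\varphi(X_0,0)$. Both boundary terms vanish whenever $\varphi$ is supported in the open time interval $(0,T)$, which is the convention under which \eqref{eq:continuity_weak} has no boundary terms. For test functions that are nonzero at $t=0$, one instead obtains the weak form with initial datum $\int\varphi(\cdot,0)\,d\mu_0$.

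Next I integrate over $x_0$ with respect to $\mu_0$ and swap the order of the $t$- and $x_0$-integrals using Fubini. This is justified because the integrand is bounded: $\varphi$ has compact support $K$, and $v$ is continuous on the compact set $K$, so $|\partial_t\varphi+\nabla\varphi\cdot v|$ is bounded and vanishes outside $K$. Undoing the pushforward on each time slice turns $\int[\partial_t\varphi+\nabla\varphi\cdot v_t](\Phi_t(x_0),t)\,d\mu_0(x_0)$ into $\int[\partial_t\varphi+v_t\cdot\nabla\varphi]\,d\mu_t$, which yields exactly \eqref{eq:continuity_weak}. If $\mu_t$ has a smooth density $\rho_t$, reversing the integration by parts of Section~2.3 recovers the strong form \eqref{eq:continuity}, in the same way as the Prologue computation.

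The main obstacle is not the algebra but the regularity needed for the steps above. Three things must hold: the flow $\Phi_t$ must exist globally and be measurable in $x_0$, so that $\mu_t$ is well defined and $t\mapsto\varphi(X_t,t)$ is $C^1$; and the Fubini interchange must be legitimate. I would handle these by assuming the standard Cauchy--Lipschitz hypotheses, which make $\Phi$ continuous in $(t,x_0)$, and by noting that compact support of $\varphi$ removes any need for integrability of $v$ over all of $\R^d$. I would also remark that under weaker assumptions, with $v$ merely in $L^1(\mu_t\,dt)$, the statement remains true but requires the superposition principle, and I would not pursue that here.
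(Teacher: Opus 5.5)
Your argument is correct and rests on the same two ingredients as the paper's proof: the chain rule along each trajectory, and the pushforward identity $\int h\,d\mu_t=\int h(X_t(x))\,d\mu_0(x)$. The paper instead uses time-independent $\varphi\in C_c^\infty(\R^d)$ and differentiates under the integral sign to get $\frac{d}{dt}\int\varphi\,d\mu_t=\int\nabla\varphi\cdot v_t\,d\mu_t$, whereas you test with space-time functions supported in $\R^d\times(0,T)$ (correctly noting that this is what makes the boundary terms vanish) and integrate in time with a justified Fubini swap, which lands directly on \eqref{eq:continuity_weak}.
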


\begin{proof}
For any $\varphi \in C_c^\infty(\R^d)$:
\begin{align}
    \frac{d}{dt}\int\varphi\,d\mu_t 
    &= \frac{d}{dt}\int\varphi(X_t(x))\,d\mu_0(x) \label{eq:step1}\\
    &= \int\nabla\varphi(X_t(x))\cdot\dot{X}_t(x)\,d\mu_0(x) \label{eq:step2}\\
    &= \int\nabla\varphi(X_t(x))\cdot v_t(X_t(x))\,d\mu_0(x) \label{eq:step3}\\
    &= \int\nabla\varphi(y)\cdot v_t(y)\,d\mu_t(y) \label{eq:step4}
\end{align}
Step \eqref{eq:step1}: definition of pushforward. Step \eqref{eq:step2}: chain rule. Step \eqref{eq:step3}: substitute ODE. Step \eqref{eq:step4}: change of variables $y = X_t(x)$, back to pushforward.

This is exactly the weak form of $\partial_t\mu_t + \nabla\cdot(\mu_t v_t) = 0$.
\end{proof}

\section{The Riemannian Structure of Wasserstein Space}

\textbf{Where we are:} Section~1 gave us the Wasserstein distance, and Section~2 gave us the continuity equation. Now we unify the two within a single geometric framework: \textbf{the Wasserstein distance = the geodesic distance in probability space}, and the velocity field described by the continuity equation is precisely the ``tangent vector''. This Riemannian structure is the foundation for everything that follows (gradients, gradient flows, the JKO scheme).

\medskip
The prerequisite differential geometry (manifolds, tangent spaces, cotangent spaces, dual spaces, musical isomorphisms, Riemannian metrics, gradients, gradient flows)
can be found in \textbf{Appendix~\ref{app:diffgeom}}. If you are already familiar with these concepts, you may proceed directly.

\subsection{The Benamou--Brenier formula}

So far, the Wasserstein distance has been a ``static'' optimization problem (finding an optimal coupling).
Benamou and Brenier discovered an equivalent ``dynamic'' formulation---this bridge directly connects optimal transport with the continuity equation.

\begin{theorem}[Benamou--Brenier, 2000]\label{thm:BB}
\begin{equation}\label{eq:BB}
    W_2^2(\mu_0, \mu_1) = \inf_{(\rho_t, v_t)} \left\{\int_0^1\!\int_{\R^d} |v_t(x)|^2\,\rho_t(x)\,dx\,dt\right\}
\end{equation}
where the infimum is over all pairs $(\rho_t, v_t)_{t\in[0,1]}$ with $v_t\in L^2(\rho_t;\R^d)$, satisfying:
\begin{itemize}
    \item Continuity equation (in the weak sense): $\partial_t\rho_t + \nabla\cdot(\rho_t v_t) = 0$
    \item Boundary conditions: $\rho_0 = \mu_0,\; \rho_1 = \mu_1$
\end{itemize}
\end{theorem}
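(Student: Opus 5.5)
The plan is to prove the two inequalities separately. We work under the simplifying assumption, standard in this exposition, that $\mu_0$ is absolutely continuous, so that Theorem~\ref{thm:brenier} applies. For a general $\mu_0$ the same argument goes through with an optimal coupling $\gamma$ in place of a map.

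\textbf{Upper bound ($\le$ is attained).} Let $T=\nabla\phi$ be the Brenier map from $\mu_0$ to $\mu_1$. Define the displacement interpolation $T_t(x)=(1-t)x+tT(x)$ and set $\rho_t:=(T_t)_\#\mu_0$. Each particle moves on a straight line with constant velocity $T(x)-x$. On the support of $\rho_t$ we define the Eulerian velocity by $v_t(T_t(x)):=T(x)-x$.

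This definition makes sense because $T_t$ is injective for $t<1$. Indeed, $T_t=\nabla\bigl((1-t)\tfrac{|x|^2}{2}+t\phi\bigr)$ is the gradient of a strictly convex function, so $(T_t(x)-T_t(y))\cdot(x-y)\ge(1-t)|x-y|^2>0$ for $x\neq y$.

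Proposition~\ref{prop:ode_continuity} then shows that $(\rho_t,v_t)$ solves the continuity equation weakly. The boundary conditions $\rho_0=\mu_0$ and $\rho_1=T_\#\mu_0=\mu_1$ hold by construction. Changing variables, the action is
\[
\int_0^1\!\!\int|v_t|^2\,d\rho_t\,dt=\int_0^1\!\!\int|T(x)-x|^2\,d\mu_0\,dt=W_2^2(\mu_0,\mu_1),
\]
so the infimum is at most $W_2^2$.

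\textbf{Lower bound (no admissible path does better).} Take any admissible $(\rho_t,v_t)$ with finite action. First assume $v_t$ is regular enough, for instance Lipschitz in $x$, so that the flow $X_t$ of $\dot X_t=v_t(X_t)$ exists. By uniqueness for the continuity equation with Lipschitz velocity, $\rho_t=(X_t)_\#\mu_0$. Then $(X_0,X_1)_\#\mu_0\in\Gamma(\mu_0,\mu_1)$ is a coupling. Cauchy--Schwarz in time and Fubini give
\[
W_2^2\le\int|X_1(x)-x|^2d\mu_0\le\int\!\!\int_0^1|v_t(X_t(x))|^2dt\,d\mu_0=\int_0^1\!\!\int|v_t|^2d\rho_t\,dt.
\]

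\textbf{Main obstacle: general velocity fields.} For arbitrary $v_t\in L^2(\rho_t)$ there is no classical flow, and this is the step I expect to be hardest. I would first try regularization. Convolve with a Gaussian kernel $\eta_\varepsilon$ and set $\rho^\varepsilon_t=\rho_t*\eta_\varepsilon$ and $v^\varepsilon_t=(\rho_tv_t)*\eta_\varepsilon/\rho^\varepsilon_t$. This pair still satisfies the continuity equation, because convolution commutes with the linear equation in the flux variables $(\rho,\rho v)$. The new field $v^\varepsilon$ is smooth and locally Lipschitz.

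Jensen's inequality applied to the jointly convex function $(\rho,m)\mapsto|m|^2/\rho$ shows that the action does not increase: $\int|v^\varepsilon|^2\rho^\varepsilon\le\int|v|^2\rho$. Applying the smooth case gives $W_2^2(\mu_0*\eta_\varepsilon,\mu_1*\eta_\varepsilon)\le$ action. Finally let $\varepsilon\to0$, using $W_2(\mu*\eta_\varepsilon,\mu)\le C\sqrt{\varepsilon}\to0$ and the triangle inequality.

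Growth at infinity could still prevent global existence of the flow. If so, I would fall back on the superposition principle (Ambrosio--Gigli--Savaré), which represents $\rho_t$ as the marginals of a measure on absolutely continuous curves solving the ODE. The same Cauchy--Schwarz argument then applies curve by curve.
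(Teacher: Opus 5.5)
Your proof is correct and takes the same two-part route as the paper. The lower bound comes from the coupling $\mathrm{Law}(X_0,X_1)$ induced by the particle flow together with Cauchy--Schwarz in time, and the upper bound from constant-speed straight-line (displacement) interpolation along an optimal plan.

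You add rigor the paper skips. You prove injectivity of $T_t$ for $t<1$, so that $v_t$ is a genuine Eulerian field. You also use mollification or the superposition principle for non-Lipschitz $v_t$, where the paper simply assumes the flow $\dot X_t=v_t(X_t)$ exists. The paper's upper bound, by contrast, works directly with a general optimal coupling $\gamma^\star$, so it does not need $\mu_0$ absolutely continuous. Your extension remark is right, with strict convexity replaced by the monotonicity $(x-x')\cdot(y-y')\ge 0$ on the support of $\gamma^\star$.
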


\begin{intuition}
\textbf{Analogy:} In Euclidean space, the distance between two points $a,b\in\R^n$ can be written as:
\[
|a-b|^2 = \inf_{\gamma:[0,1]\to\R^n}\left\{\int_0^1|\dot\gamma(t)|^2\,dt \;\middle|\; \gamma(0)=a,\,\gamma(1)=b\right\}
\]
(The optimal path is a straight line with constant velocity $\dot\gamma = b-a$, and cost $= |b-a|^2$.)

The Benamou--Brenier formula embodies exactly the same idea, except:
\begin{itemize}
    \item ``Points'' become probability distributions $\rho_t$.
    \item ``Paths'' become continuous evolutions of distributions (satisfying mass conservation).
    \item ``Squared velocity'' becomes $\int|v_t|^2\rho_t\,dx$ (kinetic energy weighted by $\rho_t$).
\end{itemize}
Therefore, the Wasserstein distance is precisely the \textbf{geodesic length in probability space}.
\end{intuition}

\textbf{Why $\int|v|^2\rho\,dx$ and not $\int|v|^2\,dx$?}

Consider an infinitesimal fluid element with mass $dm = \rho\,dx$; its kinetic energy is $\frac{1}{2}|v|^2\,dm = \frac{1}{2}|v|^2\rho\,dx$.
Thus $\int|v|^2\rho\,dx$ is the total kinetic energy (up to the factor $\frac{1}{2}$).
This weighting is crucial: if the density in some region is zero (no mass), the velocity there ``costs nothing''.

\begin{intuition}
\textbf{Why can ``total kinetic energy'' describe the Wasserstein distance? The deeper reason:}

\textbf{Question 1: Why are the static and dynamic formulations equivalent?}

The static Kantorovich formulation says: $W_2^2 = \min_\gamma\int|x-y|^2\,d\gamma$---find the optimal ``who goes where''.

The dynamic Benamou--Brenier formulation says: $W_2^2 = \min_{(\rho,v)}\int_0^1\!\int|v|^2\rho\,dx\,dt$---find the optimal ``how to get there''.

Why do these two entirely different formulations give the same answer?

The key insight: \textbf{when the constraint enforces ``constant-speed straight-line motion'', the two are naturally equivalent.}

Suppose the optimal transport map $T$ sends $x$ to $T(x)$. If each mass particle travels at constant speed in a straight line from $x$ to $T(x)$,
then at time $t$ it is located at $x_t = (1-t)x + tT(x)$, with constant velocity $\dot x_t = T(x)-x$.

The time integral of kinetic energy along this path:
\[
\int_0^1|T(x)-x|^2\,dt = |T(x)-x|^2
\]
Integrating over all mass: $\int|T(x)-x|^2\,d\mu(x) = W_2^2(\mu,\nu)$.

So constant-speed straight-line motion gives kinetic energy = static transport cost. Any other path (non-straight or non-constant-speed) only increases the kinetic energy (by the Cauchy--Schwarz inequality).

\textbf{Question 2: Why use kinetic energy (velocity$^2$) rather than velocity itself?}

If we use $\int|v|\rho\,dx$ (first power of velocity), we obtain the $W_1$ distance (1-Wasserstein).
Using $\int|v|^2\rho\,dx$ (squared velocity) gives $W_2^2$. Why prefer $W_2$?

\begin{enumerate}
    \item \textbf{Physical naturalness: minimum kinetic energy $\iff$ geodesic (shortest path).}
    
    This requires explanation. In Riemannian geometry, the ``distance'' between two points is defined as:
    \[
    d(a,b)^2 = \inf_{\gamma}\int_0^1|\dot\gamma(t)|^2\,dt, \quad \gamma(0)=a,\;\gamma(1)=b
    \]
    The optimal $\gamma$ is called a \textbf{geodesic}---it is the ``straight line'' (shortest path) in the given geometry.
    
    In physics, in the absence of external forces (a free particle), the equation of motion is $\ddot\gamma = 0$ (constant-speed straight-line motion).
    This is equivalent to minimizing the action $S = \int_0^1\frac{1}{2}|\dot\gamma|^2\,dt$ (the time integral of kinetic energy).
    Therefore \textbf{free particle trajectory = geodesic = minimum kinetic energy path}.
    
    On curved spaces (e.g., the sphere), geodesics are no longer ``straight lines'' but great circle arcs---yet
    they are still the paths that minimize $\int|\dot\gamma|^2\,dt$.
    
    What Benamou--Brenier does is a perfect analogy: $\int_0^1\!\int|v|^2\rho\,dx\,dt$ is the ``kinetic energy integral in probability space'',
    and the path $\rho_t$ that minimizes it is the geodesic in Wasserstein space.
    
    \item \textbf{Mathematical convenience:} $|v|^2$ is smooth and strictly convex, making $W_2^2$ differentiable with respect to $\rho$,
    whereas $|v|$ is non-smooth at zero, and $W_1$ has worse mathematical properties (non-differentiable, no Riemannian structure).
    
    \item \textbf{Riemannian structure:} Only when using the squared velocity (rather than the absolute value) as the ``metric''
    does the probability space acquire an inner product structure---an inner product requires $\|v\|^2$ to be a quadratic function of $v$.
    If one uses $\|v\| = \int|v|\rho\,dx$, that is merely a norm (Finsler geometry) without an inner product, and ``gradients'' cannot be defined.
\end{enumerate}

\textbf{Question 3: What is the deeper intuition behind the dynamic formulation?}

Imagine you are a fluid dispatcher. You must deform a body of fluid from the shape $\mu_0$ into the shape $\mu_1$.
The rules are:
\begin{itemize}
    \item Fluid cannot be created or destroyed (continuity equation $\partial_t\rho + \nabla\cdot(\rho v)=0$).
    \item You must minimize the total ``fuel'' consumed---each fluid element's fuel consumption is proportional to velocity$^2\times$mass$\times$time.
\end{itemize}

This is precisely the Benamou--Brenier formula.

\textbf{What is the optimal strategy?} Have each mass particle travel in a \textbf{straight line at constant speed} from start to finish.
\begin{itemize}
    \item Why constant speed? Because $\int_0^1|v(t)|^2\,dt \geq (\int_0^1|v(t)|\,dt)^2$ (Jensen's inequality), with equality if and only if $|v|$ is constant.
    Going slow then fast, or fast then slow, always wastes more energy than traveling at constant speed.
    \item Why straight lines? Because detours = longer path = higher average speed required = greater kinetic energy.
\end{itemize}

Therefore \textbf{the Wasserstein geodesic is ``all mass particles traveling in straight lines at constant speed''}, and the total energy is $\int|x-T(x)|^2\,d\mu = W_2^2$.
\end{intuition}

\subsection{Why continuity equation + minimum kinetic energy gives the OT path}

Let us make the previous statement precise. The continuity equation alone is not an optimization principle:
\[
\partial_t\rho_t+\nabla\cdot(\rho_t v_t)=0
\]
only says that probability mass is transported without being created or destroyed. The optimization enters through the kinetic action
\[
\mathcal{A}(\rho,v)
=
\int_0^1\!\int_{\R^d}|v_t(x)|^2\rho_t(x)\,dx\,dt.
\]
The Benamou--Brenier problem is:
\[
\inf_{\rho_t,v_t}\mathcal{A}(\rho,v)
\]
subject to
\[
\partial_t\rho_t+\nabla\cdot(\rho_t v_t)=0,\qquad
\rho_0=\mu_0,\quad \rho_1=\mu_1.
\]
In words: among all mass-preserving ways to deform $\mu_0$ into $\mu_1$, choose the one with the smallest total kinetic energy.

\begin{intuition}
\textbf{The two halves of the argument.}

\textbf{First, every admissible dynamic path contains a static transport plan.}
If particles move according to
\[
\dot X_t=v_t(X_t),
\]
then the continuity equation says that $\rho_t=\mathrm{Law}(X_t)$. Each particle has an initial point $X_0=x$ and a final point $X_1=y$, so the whole flow induces a coupling
\[
\gamma=\mathrm{Law}(X_0,X_1)\in\Pi(\mu_0,\mu_1).
\]
Thus any dynamic path tells us ``who goes where.''

\medskip
\textbf{Second, the kinetic energy of any particle path dominates its squared displacement.}
For one trajectory $X_t$, Cauchy--Schwarz gives
\[
\int_0^1|\dot X_t|^2\,dt
\geq
\left|\int_0^1\dot X_t\,dt\right|^2
=
|X_1-X_0|^2.
\]
Equality holds exactly when the particle moves along a straight line at constant speed.
Averaging over all particles,
\[
\int_0^1\!\int |v_t(x)|^2\rho_t(x)\,dx\,dt
\geq
\int |y-x|^2\,d\gamma(x,y).
\]
Since $W_2^2(\mu_0,\mu_1)$ is the minimum of the right-hand side over all couplings $\gamma$, every admissible dynamic path satisfies
\[
\mathcal{A}(\rho,v)\geq W_2^2(\mu_0,\mu_1).
\]
No mass-preserving path can spend less kinetic energy than the optimal transport cost.
\end{intuition}

Conversely, take an optimal coupling $\gamma^\star\in\Pi(\mu_0,\mu_1)$. Move each pair $(x,y)$ along the constant-speed straight line
\[
X_t=(1-t)x+ty,\qquad \dot X_t=y-x.
\]
Define
\[
\rho_t=\bigl((1-t)x+ty\bigr)_\#\gamma^\star.
\]
This path satisfies the continuity equation because it is generated by moving particles. Its kinetic action is
\[
\int_0^1\!\int |y-x|^2\,d\gamma^\star(x,y)\,dt
=
\int |y-x|^2\,d\gamma^\star(x,y)
=
W_2^2(\mu_0,\mu_1).
\]
So the lower bound is achieved, and we obtain
\[
\boxed{
\inf_{\substack{\partial_t\rho+\nabla\cdot(\rho v)=0\\
\rho_0=\mu_0,\;\rho_1=\mu_1}}
\int_0^1\!\int |v_t|^2\rho_t\,dx\,dt
=
W_2^2(\mu_0,\mu_1)
}
\]
This is the Benamou--Brenier formula.

If $\mu_0$ is sufficiently regular, the optimal coupling is induced by a Brenier map $T$, so $y=T(x)$ and the minimizer becomes
\[
\rho_t=\bigl((1-t)\id+tT\bigr)_\#\mu_0.
\]
This is the \textbf{McCann displacement interpolation}: every infinitesimal piece of mass travels along a straight line at constant speed according to the optimal transport map. This is why the OT path is the ``straight line'' in Wasserstein space.

\begin{keypoint}
The continuity equation provides the \emph{constraint}: admissible mass-preserving paths.
The kinetic energy provides the \emph{criterion}: choose the least costly path.
Together they recover the optimal transport path, i.e.\ the Wasserstein geodesic.
\end{keypoint}

\subsection{Otto's Riemannian interpretation}

Otto (2001) proposed viewing $(\Prob_2(\R^d), W_2)$ as an infinite-dimensional Riemannian manifold:

\begin{definition}[Tangent space at $\rho$]
The tangent space at a point $\rho \in \Prob_2(\R^d)$ is:
\begin{equation}
    T_\rho\Prob_2 := \overline{\{\nabla\phi : \phi \in C_c^\infty(\R^d)\}}^{L^2(\rho;\R^d)}
\end{equation}
i.e., the $L^2(\rho;\R^d)$-closure of gradient vector fields (here $L^2(\rho;\R^d)$ denotes the space of vector fields $v:\R^d\to\R^d$ with $\int|v|^2\rho\,dx<\infty$).
\end{definition}

\textbf{Why are tangent vectors gradient fields?}

A ``tangent vector'' describes the instantaneous direction of change of $\rho$. By the continuity equation, the evolution of $\rho$ is determined by the velocity field $v$:
$\dot\rho = -\nabla\cdot(\rho v)$.
However, the same $\dot\rho$ may correspond to multiple different $v$ (since $\nabla\cdot(\rho v) = \nabla\cdot(\rho(v+w))$ whenever $\nabla\cdot(\rho w)=0$).

To remove this ambiguity, we choose the $v$ with \textbf{minimum $L^2(\rho)$ norm}.
For given $\rho>0$, any vector field $v$ can be uniquely decomposed as $v = \nabla\phi + w$, where $\nabla\cdot(\rho w) = 0$.
These two components are orthogonal in the $L^2(\rho)$ inner product (verification: $\int\rho\,\nabla\phi\cdot w\,dx = -\int\phi\,\nabla\cdot(\rho w)\,dx = 0$).
Therefore $\|v\|_{L^2(\rho)}^2 = \|\nabla\phi\|_{L^2(\rho)}^2 + \|w\|_{L^2(\rho)}^2$,
and the minimum-norm $v$ must be a gradient field $v = \nabla\phi$.

\begin{definition}[Riemannian metric]
For tangent vectors $\xi = \nabla\phi$, $\eta = \nabla\psi \in T_\rho\Prob_2$:
\begin{equation}\label{eq:otto_metric}
    \langle\xi,\eta\rangle_\rho := \int_{\R^d}\nabla\phi(x)\cdot\nabla\psi(x)\,\rho(x)\,dx
\end{equation}
\end{definition}

This is the inner product version of the instantaneous cost $\int|v|^2\rho\,dx = \|\nabla\phi\|_\rho^2$ appearing in the Benamou--Brenier formula.

\subsection{Geodesics = displacement interpolation}

The geodesic (shortest path) between $\mu_0$ and $\mu_1$ in $(\Prob_2, W_2)$ is the \textbf{McCann displacement interpolation}:
\begin{equation}
    \rho_t = \bigl((1-t)\id + t\,T\bigr)_\#\mu_0, \quad t \in [0,1]
\end{equation}
where $T = \nabla\phi$ is the optimal transport map from $\mu_0$ to $\mu_1$.

Intuition: each mass particle travels in a straight line from its initial position $x$ to its target position $T(x)$, and at time $t$ it is located at $(1-t)x + tT(x)$.

\medskip
\textbf{Summary of progress so far:} We have established the geometric structure of probability space $(\Prob_2,W_2)$---it is a Riemannian manifold whose geodesics are given by the Benamou--Brenier formula. \textbf{Next step}: we will derive a physical equation (Fokker--Planck), and then in Section~5 prove that it is precisely the \textbf{gradient flow} in this geometry.

\section{The Fokker--Planck Equation}

\textbf{Where we are:} The first three sections established the geometric framework---the Wasserstein distance (Section~1), the continuity equation (Section~2), and the Riemannian structure (Section~3). But these are all descriptions of the ``space'' itself, without a concrete physical equation. This section derives the Fokker--Planck equation: a PDE describing ``how the density evolves when particles undergo Brownian motion in a potential field''. The crucial final step is to rewrite it in the form of a continuity equation $\partial_t\rho + \nabla\cdot(\rho v)=0$, thereby identifying the velocity field $v$. The next section (Section~5) will prove that this $v$ is precisely the negative Wasserstein gradient of the free energy.

\subsection{Overview}

We begin with a microscopic stochastic process---particles undergoing Brownian motion in a potential field---and ask how the macroscopic \textbf{probability density} of the particles evolves. The answer is the Fokker--Planck equation,
a partial differential equation governing the density over time.

The goals of this section:
\begin{enumerate}
    \item Explain the physical meaning of SDEs (stochastic differential equations).
    \item Derive the Fokker--Planck equation from SDEs (with detailed explanation of each step).
    \item Rewrite the Fokker--Planck equation in the form of a continuity equation (preparing the connection to gradient flows).
\end{enumerate}

\subsection{Motivation: the physical setup}

\textbf{Setting:} Imagine a pollen grain floating in water (Brownian motion). At the same time, the bottom of the pool has a bowl shape---the pollen tends toward the bottom due to gravity. The particle's motion is driven by two forces:
\begin{enumerate}
    \item \textbf{Deterministic force} $-\nabla V(x)$: the negative gradient of the potential $V(x)$, pushing the particle toward the location of lowest potential energy (the bottom of the bowl)
    \item \textbf{Random force} (noise): random collisions from water molecules, causing the particle to move irregularly
\end{enumerate}

\begin{definition}[The SDE]
The particle's position $X_t\in\R^d$ satisfies the following stochastic differential equation (SDE):
\begin{equation}\label{eq:sde}
    dX_t = \underbrace{-\nabla V(X_t)\,dt}_{\text{drift: deterministic force}} + \underbrace{\sqrt{2}\,dB_t}_{\text{diffusion: random noise}}
\end{equation}
where $B_t$ is standard $d$-dimensional Brownian motion.
\end{definition}

\begin{intuition}
\textbf{Term-by-term explanation of the SDE:}

\textbf{$dX_t$}: the displacement of the particle during the time interval $[t, t+dt]$ (how far it moves in an infinitesimal time step).

\textbf{$-\nabla V(X_t)\,dt$} (drift term):
\begin{itemize}
    \item $V(x)$ is the potential energy function---for example, $V(x) = \frac{1}{2}|x|^2$ represents a bowl-shaped potential (low at center, high at edges)
    \item $-\nabla V$ is the direction of steepest descent of the potential (like a ball rolling down the bowl wall toward the bottom)
    \item The factor $dt$ arises because this is displacement produced by a deterministic velocity: displacement $=$ velocity $\times$ time
\end{itemize}

\textbf{$\sqrt{2}\,dB_t$} (diffusion term):
\begin{itemize}
    \item $dB_t$ is the increment of Brownian motion---a random vector with zero mean and variance $dt$
    \item Intuition: at each time step $dt$, the particle receives a random kick in a random direction with magnitude $\sim\sqrt{dt}$
    \item $\sqrt{2}$ is the diffusion strength (temperature)---the larger the coefficient, the stronger the randomness
\end{itemize}

\textbf{Competition between the two terms:}
\begin{itemize}
    \item The drift term seeks to ``concentrate'' particles near the minimum of $V$
    \item The diffusion term seeks to ``spread'' particles everywhere
    \item The eventual balance (stationary distribution) is determined by the ratio of their strengths
\end{itemize}
\end{intuition}

\textbf{Central question:} If we simultaneously release a \textbf{large number} of particles (each independently following the SDE above), with initial distribution $\rho_0(x)$, what PDE does the probability density $\rho_t(x)$ at time $t$ satisfy?

This is precisely the question that the Fokker-Planck equation answers---going from ``individual particle's random trajectory'' to ``macroscopic density evolution of many particles.''

\subsection{Prerequisite: It\^o's formula}

Deriving the Fokker-Planck equation requires one tool: It\^o's formula (It\^o, 1951).
It is the ``chain rule'' of stochastic calculus---but with one extra term compared to the ordinary chain rule.

\begin{theorem}[It\^o's formula --- informal version]
Let $X_t$ satisfy $dX_t = b\,dt + \sigma\,dB_t$, and let $f$ be a twice-differentiable function. Then:
\begin{equation}\label{eq:ito_formula}
df(X_t) = \underbrace{\nabla f(X_t)\cdot dX_t}_{\text{ordinary chain rule}} + \underbrace{\frac{1}{2}\sum_{i,j}\frac{\partial^2 f}{\partial x_i\partial x_j}(X_t)\cdot(\sigma\sigma^T)_{ij}\,dt}_{\text{It\^o correction term}}
\end{equation}
\end{theorem}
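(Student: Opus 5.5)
The plan is to derive the formula from a second-order Taylor expansion of $f$ around $X_t$, and then discard the terms that vanish as $dt\to 0$. The discarding rests on the Itô multiplication table: $dt\cdot dt=0$, $dt\cdot dB^k=0$, and $dB^k\,dB^l=\delta_{kl}\,dt$. This is an informal version, so we work at the level of increments over a small interval $[t,t+h]$ and let $h\to 0$. The rigorous content is the limit of sums over a partition, and we indicate how that goes.

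\textbf{Step 1 (Taylor expansion).} Set $\Delta X = X_{t+h}-X_t \approx b\,h + \sigma\,\Delta B$, where $\Delta B = B_{t+h}-B_t\sim\mathcal{N}(0,hI)$. Taylor's theorem gives
\[
f(X_{t+h})-f(X_t) = \nabla f(X_t)\cdot\Delta X + \tfrac12 \Delta X^T\nabla^2 f(X_t)\,\Delta X + R,
\]
with $|R| = o(|\Delta X|^2)$.

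\textbf{Step 2 (order counting).} The key point is that $|\Delta B|\sim\sqrt h$, not $h$. This is what distinguishes the stochastic chain rule from the ordinary one.
\begin{itemize}
\item In the quadratic term, $\Delta X\,\Delta X^T = b b^T h^2 + (b\,\Delta B^T\sigma^T + \sigma\Delta B\, b^T)h + \sigma\,\Delta B\,\Delta B^T\sigma^T$.
\item The first two pieces are $O(h^2)$ and $O(h^{3/2})$, so they vanish after summing over the roughly $1/h$ subintervals.
\item The remainder $R$ is $o(h)$ per step and is negligible for the same reason.
\end{itemize}

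\textbf{Step 3 (quadratic variation).} This is the main obstacle: justifying the replacement $\Delta B\,\Delta B^T \rightsquigarrow I\,h$. The expectation is $\E[\Delta B^k\Delta B^l]=\delta_{kl}h$. The variance is $O(h^2)$, since $\mathrm{Var}((\Delta B^k)^2)=2h^2$ and the cross terms behave similarly.

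Now sum over a partition $t_0<\dots<t_n$ of mesh $h$, with $\Delta_i B = B_{t_{i+1}}-B_{t_i}$ and $H_i=\nabla^2 f(X_{t_i})$. Consider the fluctuation
\[
\sum_i \tfrac12 \tr\!\bigl(\sigma^T H_i\sigma\,(\Delta_i B\,\Delta_i B^T - hI)\bigr).
\]
Its terms are uncorrelated, because increments are independent of the past and $H_i$ is measurable at time $t_i$. Each term has variance $O(h^2)$, so the total variance is $O(n h^2)=O(h)\to 0$, assuming $f\in C^2$ with bounded second derivatives (localize otherwise). The sum therefore converges in $L^2$ to $0$. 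What survives is
\[
\tfrac12\sum_i \tr(\sigma^T H_i \sigma)\,h \;\to\; \int \tfrac12\sum_{i,j}\partial_{ij}f\,(\sigma\sigma^T)_{ij}\,dt,
\]
where we used the identity $\tr(\sigma^T H\sigma)=\sum_{ij}H_{ij}(\sigma\sigma^T)_{ij}$.

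\textbf{Step 4 (assembly).} The first-order sums $\sum_i\nabla f(X_{t_i})\cdot\Delta_i X$ converge to $\int \nabla f\cdot dX$, which is the Itô integral by definition, taken with left endpoints. Combining this with Step 3 gives the stated differential identity.
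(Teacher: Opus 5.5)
Your proposal is correct and follows the same route as the paper: a second-order Taylor expansion in which the quadratic term survives because $(dB_t)^2=dt$, while $dB_t\,dt$ and $(dt)^2$ are negligible. The difference is one of rigor. The paper simply asserts $(dB_t)^2=dt$ from the scaling $|dB_t|\sim\sqrt{dt}$, whereas you justify replacing $\Delta B\,\Delta B^T$ by $hI$ with the martingale-difference variance bound; this correctly relies on taking the Peano remainder at the left endpoint, so that $H_i$ is adapted. You also sensibly strengthen ``twice-differentiable'' to $C^2$, which summing the remainders requires.
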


\textbf{Why is there an extra term?}

In ordinary calculus, $df = f'(x)\,dx$, and we ignore the $(dx)^2$ term (because $(dx)^2$ is of higher order than $dx$).

But in stochastic calculus, the ``size'' of the Brownian motion increment $dB_t$ is $\sqrt{dt}$ rather than $dt$.
This means $(dB_t)^2 \approx dt$ is \emph{not} a higher-order infinitesimal; it is of the same order as $dt$.
Therefore, the second-order term in the Taylor expansion cannot be discarded and must be retained. This is the It\^o correction.

\textbf{Specifically:}
\begin{itemize}
    \item Ordinary calculus: $(dx)^2 = 0$ (higher-order infinitesimal, discarded)
    \item Stochastic calculus: $(dB_t)^2 = dt$ (same order as $dt$, \textbf{cannot} be discarded)
    \item Cross term: $dB_t\cdot dt = 0$ ($\sqrt{dt}\cdot dt = dt^{3/2}$, higher-order, discarded)
\end{itemize}

\subsection{Derivation of Fokker--Planck: complete details}

We now derive the equation step by step. The strategy is:
\begin{enumerate}
    \item For an arbitrary ``test function'' $\varphi$, compute the rate of change of $\E[\varphi(X_t)]$
    \item Using $\E[\varphi(X_t)] = \int\varphi\,\rho_t\,dx$, translate the result into an equation for $\rho_t$
\end{enumerate}

\medskip
\textbf{Step 1: Apply It\^o's formula.}

Take an arbitrary smooth function $\varphi\in C_c^\infty(\R^d)$ (a ``probe''---we will use it to ``probe'' changes in the density).

For our SDE \eqref{eq:sde}, $b = -\nabla V$, $\sigma = \sqrt{2}\,I$, so $\sigma\sigma^T = 2I$.
Applying It\^o's formula:
\begin{align}
    d\varphi(X_t) &= \nabla\varphi(X_t)\cdot dX_t + \frac{1}{2}\sum_{i,j}\frac{\partial^2\varphi}{\partial x_i\partial x_j}(X_t)\cdot 2\delta_{ij}\,dt \notag\\
    &= \nabla\varphi(X_t)\cdot dX_t + \Delta\varphi(X_t)\,dt \label{eq:ito1}
\end{align}

\textbf{Explanation of each term:}
\begin{itemize}
    \item $\nabla\varphi\cdot dX_t$: the ``ordinary'' chain rule part. The change in $\varphi \approx$ gradient $\times$ displacement.
    \item $\Delta\varphi\,dt$: the It\^o correction. $\Delta\varphi = \sum_i\frac{\partial^2\varphi}{\partial x_i^2}$ is the Laplacian (the sum of second derivatives of $\varphi$ in all directions). It arises from $(dB_t)^2 = dt$.
    \item $2\delta_{ij}$ comes from $(\sigma\sigma^T)_{ij} = (\sqrt{2}I)(\sqrt{2}I)^T_{ij} = 2\delta_{ij}$.
    \item $\frac{1}{2}\times 2 = 1$, so the correction term simplifies to $\Delta\varphi\,dt$.
\end{itemize}

\medskip
\textbf{Step 2: Expand $\nabla\varphi\cdot dX_t$.}

Substituting $dX_t = -\nabla V\,dt + \sqrt{2}\,dB_t$:
\begin{align}
    d\varphi(X_t) &= \nabla\varphi(X_t)\cdot\bigl(-\nabla V(X_t)\,dt + \sqrt{2}\,dB_t\bigr) + \Delta\varphi(X_t)\,dt \notag\\
    &= \bigl[-\nabla\varphi\cdot\nabla V + \Delta\varphi\bigr](X_t)\,dt + \sqrt{2}\,\nabla\varphi(X_t)\cdot dB_t \label{eq:ito2}
\end{align}

Rearranging: $d\varphi(X_t) = $ (deterministic part) $dt$ + (stochastic part) $dB_t$.

\medskip
\textbf{Step 3: Take expectations.}

Take expectations of both sides. The key fact: \textbf{the expectation of an It\^o integral is zero}.

Why is $\E[\int\nabla\varphi\cdot dB_t] = 0$?
Because the It\^o integral is a \textbf{martingale}---intuitively, $dB_t$ has zero expectation (random noise does not favor any direction on average), so its contribution to the expected value of $\varphi$ is \textbf{zero on average}.

Therefore:
\begin{equation}\label{eq:expect}
    \frac{d}{dt}\E[\varphi(X_t)] = \E\bigl[-\nabla\varphi(X_t)\cdot\nabla V(X_t) + \Delta\varphi(X_t)\bigr]
\end{equation}

\textbf{Interpretation:} The rate of change of $\E[\varphi(X_t)]$ = ``contribution of deterministic drift to $\varphi$'' + ``contribution of diffusion to $\varphi$.'' The noise term vanishes under expectation.

\medskip
\textbf{Step 4: Rewrite using density $\rho_t$.}

If $X_t$ has probability density $\rho_t(x)$, then $\E[g(X_t)] = \int g(x)\rho_t(x)\,dx$ (expectation = integral against the density).

Substituting:
\begin{equation}\label{eq:density_form}
    \frac{d}{dt}\int_{\R^d}\varphi(x)\,\rho_t(x)\,dx = \int_{\R^d}\bigl[-\nabla\varphi(x)\cdot\nabla V(x) + \Delta\varphi(x)\bigr]\rho_t(x)\,dx
\end{equation}

The left side $= \int\varphi\,\partial_t\rho_t\,dx$ (since $\varphi$ does not depend on $t$, the derivative acts on $\rho_t$).

Now the problem becomes: the right side contains $\nabla\varphi$ and $\Delta\varphi$---but the equation we want should be about $\rho_t$, not about $\varphi$.
The solution: \textbf{integration by parts}, transferring the derivatives from $\varphi$ onto $\rho_t$.

\medskip
\textbf{Step 5: Integration by parts --- the key step.}

We use integration by parts (see Appendix~\ref{app:analysis} for details) to transfer the derivatives from $\varphi$ onto $\rho_t$. The key formulas are:
\[
\int\nabla\varphi\cdot\mathbf{F}\,dx = -\int\varphi\,(\nabla\cdot\mathbf{F})\,dx, \qquad
\int(\Delta\varphi)\,g\,dx = \int\varphi\,(\Delta g)\,dx
\]

\textbf{Term 1:} $-\int\nabla\varphi\cdot\nabla V\,\rho_t\,dx$

Treating $\rho_t\nabla V$ as the vector field $\mathbf{F} = \rho_t\nabla V$ and applying integration by parts:
\begin{align}
    -\int\nabla\varphi\cdot(\rho_t\nabla V)\,dx &= +\int\varphi\,\nabla\cdot(\rho_t\nabla V)\,dx \label{eq:ibp1}
\end{align}

\textbf{Term 2:} $\int\Delta\varphi\,\rho_t\,dx$

This requires two integrations by parts. The first:
\begin{align}
    \int(\Delta\varphi)\,\rho_t\,dx &= \int\nabla\cdot(\nabla\varphi)\,\rho_t\,dx = -\int\nabla\varphi\cdot\nabla\rho_t\,dx \label{eq:ibp2a}
\end{align}
The second (applying integration by parts once more to $-\int\nabla\varphi\cdot\nabla\rho_t\,dx$):
\begin{align}
    -\int\nabla\varphi\cdot\nabla\rho_t\,dx &= +\int\varphi\,\nabla\cdot(\nabla\rho_t)\,dx = +\int\varphi\,\Delta\rho_t\,dx \label{eq:ibp2}
\end{align}

\textbf{Summary:} Two integrations by parts convert $\Delta\varphi$ into $\Delta\rho_t$---``the second-order derivative has been transferred from $\varphi$ to $\rho_t$''.

\medskip
\textbf{Step 6: Combine and conclude.}

Substituting the results of Step 5 back into Step 4:
\[
    \int\varphi\,\partial_t\rho_t\,dx = \int\varphi\,\bigl[\nabla\cdot(\rho_t\nabla V) + \Delta\rho_t\bigr]\,dx
\]

Since $\varphi$ is an \textbf{arbitrary} smooth function, if $\int\varphi\cdot(A-B)\,dx = 0$ holds for all $\varphi$, then $A = B$
(the ``fundamental lemma'' / du Bois-Reymond lemma). Therefore:

\begin{equation}\label{eq:FP}
    \boxed{\partial_t\rho_t = \nabla\cdot(\rho_t\nabla V) + \Delta\rho_t}
\end{equation}

This is the \textbf{Fokker--Planck equation} (also known as the Kolmogorov forward equation).

\begin{intuition}
\textbf{What do the two terms in the Fokker--Planck equation mean?}

$\partial_t\rho = \underbrace{\nabla\cdot(\rho\nabla V)}_{\text{(a) drift term}} + \underbrace{\Delta\rho}_{\text{(b) diffusion term}}$

\textbf{(a) $\nabla\cdot(\rho\nabla V)$}: the effect driven by the external potential. Let us unpack this layer by layer:

\textbf{Layer 1: What is $\nabla V$?}
\begin{itemize}
    \item $V(x)$ is the potential energy function. $\nabla V(x)$ is a vector pointing in the direction of \textbf{steepest increase} of $V$ at $x$.
    \item For example, if $V(x) = \frac{1}{2}|x|^2$ (bowl-shaped), then $\nabla V(x) = x$, pointing from the bottom toward the rim (outward).
\end{itemize}

\textbf{Layer 2: What is $\rho\nabla V$?}
\begin{itemize}
    \item This is a vector field (with direction and magnitude at each point), representing the ``mass flux''.
    \item Imagine each particle being pushed by $\nabla V$. Then $\rho(x)\nabla V(x)$ = at position $x$, \textbf{mass per unit volume} $\times$ \textbf{the direction and speed each mass element is pushed}.
    \item It is density times velocity, whose physical meaning is ``mass passing through a unit area per unit time''---the same as $\rho v$ in the continuity equation.
    \item Note the direction: $\rho\nabla V$ points toward \textbf{increasing} potential energy (``uphill''). So in the Fokker--Planck equation $\partial_t\rho = +\nabla\cdot(\rho\nabla V)$,
    density is actually ``flowing away'' from regions of high potential---we explain why below.
\end{itemize}

\textbf{Layer 3: What is $\nabla\cdot(\rho\nabla V)$?}
\begin{itemize}
    \item The divergence $\nabla\cdot\mathbf{F}$ measures the ``degree of outward flux'' of a vector field $\mathbf{F}$ at a point:
    \begin{itemize}
        \item $\nabla\cdot\mathbf{F}(x) > 0$: $x$ is a ``source''---flux diverges outward from $x$, matter is leaving
        \item $\nabla\cdot\mathbf{F}(x) < 0$: $x$ is a ``sink''---flux converges toward $x$, matter is accumulating
    \end{itemize}
    \item So $\nabla\cdot(\rho\nabla V)(x) > 0$ means: at $x$, the flux $\rho\nabla V$ is \textbf{diverging outward}.
\end{itemize}

\textbf{Layer 4: Why does $\partial_t\rho = +\nabla\cdot(\rho\nabla V)$ imply that density accumulates at low potential?}

\textbf{Reading the sign:} At first glance, the positive sign in $+\nabla\cdot(\rho\nabla V)$ seems to suggest density is ``moving uphill''.
But in fact this positive sign corresponds precisely to ``particles moving downhill''. Here is why:

The continuity equation reads $\partial_t\rho + \nabla\cdot(\rho\,v) = 0$, and the particle velocity is $v = -\nabla V$ (in the direction of decreasing potential). Substituting:
\[
\partial_t\rho = -\nabla\cdot(\rho\,v) = -\nabla\cdot\bigl(\rho\cdot(-\nabla V)\bigr) = +\nabla\cdot(\rho\nabla V)
\]
So the physical meaning of $+\nabla\cdot(\rho\nabla V)$ is ``the density change caused by particles moving downhill with velocity $-\nabla V$''.

\medskip
\textbf{One-dimensional verification:} Let $V(x) = \frac{1}{2}x^2$, $\nabla V = x$, $\Delta V = 1$. Expanding the divergence (product rule):
\[
\nabla\cdot(\rho\nabla V) = \nabla\rho\cdot\nabla V + \rho\,\Delta V = x\rho' + \rho
\]
Analyzing the behavior at different locations:
\begin{itemize}
    \item At $x=0$ (bottom of bowl): $\nabla\cdot(\rho\nabla V) = \rho(0) > 0$, so $\partial_t\rho(0) > 0$---density at the bottom is \textbf{increasing}
    \item At large $x$ (bowl rim): $\rho'<0$ and $|x|$ is large, so $x\rho'$ is a large negative number, giving $\partial_t\rho < 0$ overall---density at the rim is \textbf{decreasing}
\end{itemize}
Conclusion: density flows from the rim to the bottom---particles are indeed \textbf{accumulating} at the low-potential region, consistent with the intuition from $v=-\nabla V$.

\textbf{(b) $\Delta\rho$}: diffusion (the ``heat equation'' effect).
\begin{itemize}
    \item $\Delta\rho = \nabla^2\rho$ is the Laplacian of the density
    \item Sign of the Laplacian: if $\rho(x)$ is \textbf{larger} than the average of its neighbors (a density peak), then $\Delta\rho < 0$ and the density \textbf{decreases}
    \item Conversely, if $\rho(x)$ is \textbf{smaller} than its neighbors (a density trough), then $\Delta\rho > 0$ and the density \textbf{increases}
    \item The effect: smoothing out density inhomogeneities---high-density regions spread out, low-density regions fill in (just like a drop of ink naturally diffusing in water)
\end{itemize}

\textbf{Competition between the two terms:}
\begin{itemize}
    \item The drift term tries to concentrate particles at the minimum of $V$ (density becomes sharper)
    \item The diffusion term tries to spread particles uniformly (density becomes flatter)
    \item The final equilibrium = stationary distribution $\rho_\infty \propto e^{-V}$
\end{itemize}
\end{intuition}

\subsection{Rewriting as a continuity equation}

This step is the key link between Fokker--Planck and Wasserstein gradient flows. We rewrite the Fokker--Planck equation in the form
$\partial_t\rho + \nabla\cdot(\rho v) = 0$, thereby identifying the ``velocity field $v$''.
\medskip
\textbf{Step 1:} Note that $\Delta\rho = \nabla\cdot(\nabla\rho)$ (Laplacian = divergence of gradient):
\begin{align}
    \partial_t\rho_t &= \nabla\cdot(\rho_t\nabla V) + \nabla\cdot(\nabla\rho_t) \notag\\
    &= \nabla\cdot\bigl(\rho_t\nabla V + \nabla\rho_t\bigr) \label{eq:FP_div}
\end{align}

(Combining both terms into a single divergence---since ``$\nabla\cdot A + \nabla\cdot B = \nabla\cdot(A+B)$''.)
\textbf{Step 2:} Use the key identity $\nabla\rho = \rho\,\nabla\log\rho$:

\textbf{Deriving this identity:} By the chain rule, $\nabla\log\rho = \frac{1}{\rho}\nabla\rho$, so $\rho\,\nabla\log\rho = \nabla\rho$. \checkmark

Substituting:
\begin{align}
    \partial_t\rho_t &= \nabla\cdot\bigl(\rho_t\nabla V + \rho_t\nabla\log\rho_t\bigr) \notag\\
    &= \nabla\cdot\bigl(\rho_t\,\underbrace{(\nabla V + \nabla\log\rho_t)}_{= -v_t}\bigr) \label{eq:FP_continuity}
\end{align}

\textbf{Step 3:} Compare with the continuity equation $\partial_t\rho + \nabla\cdot(\rho\,v) = 0$, i.e., $\partial_t\rho = -\nabla\cdot(\rho\,v)$:
\begin{equation}\label{eq:FP_velocity}
    \boxed{v_t = -\nabla V - \nabla\log\rho_t}
\end{equation}

\begin{intuition}
\textbf{The two terms in the velocity field---the physics of two ``forces'':}

\textbf{(1) $-\nabla V$: external force (deterministic)}
\begin{itemize}
    \item This is the ``push'' from the potential---driving particles toward the minimum of $V$
    \item Analogy: a ball rolling in a bowl, always pushed toward the bottom
    \item This force \textbf{does not depend on the density} $\rho$---it is the same regardless of how many other particles are nearby
\end{itemize}

\textbf{(2) $-\nabla\log\rho_t = -\frac{\nabla\rho_t}{\rho_t}$: entropic/diffusive force (generated by the density itself)}
\begin{itemize}
    \item This is a force ``from crowded to sparse''---$\nabla\rho$ points toward increasing density; the negative sign reverses it to point toward \textbf{decreasing} density
    \item Division by $\rho$ is normalization: it is the \textbf{relative} density difference, not the absolute one, that drives diffusion
    \item Analogy: in a subway car, people naturally ``flow'' from crowded cars to empty ones
    \item This force \textbf{depends on the density itself}---the more non-uniform the density, the stronger the force
    \item Physically called ``osmotic pressure'' or ``entropic force''---no one is actually pushing; it is the macroscopic manifestation of statistical effects
\end{itemize}

\textbf{Key differences:}

\begin{center}
\renewcommand{\arraystretch}{1.3}
\begin{tabular}{@{}lll@{}}
\toprule
& $-\nabla V$ & $-\nabla\log\rho$ \\
\midrule
Source & External potential field & The density itself \\
Direction & toward minimum of $V$ & toward minimum of $\rho$ \\
Effect & concentration/aggregation & diffusion/dispersion \\
Depends on $\rho$? & no & yes \\
\end{tabular}
\end{center}
\end{intuition}

\subsection{Stationary distribution}

When the system reaches equilibrium ($\partial_t\rho = 0$), the density no longer changes. For our gradient flow, the velocity $v = -\nabla(\log\rho + V)$ is a gradient field, and $\nabla\cdot(\rho v)=0$ implies $v = 0$ (setting $\psi=\log\rho+V$ and integrating by parts gives $\int\rho|v|^2\,dx = \int\rho|\nabla\psi|^2\,dx = -\int\psi\,\nabla\cdot(\rho\nabla\psi)\,dx = \int\psi\,\nabla\cdot(\rho v)\,dx = 0$). So the equilibrium condition is equivalent to $v = 0$:
\begin{align}
    v = 0 &\implies \nabla V + \nabla\log\rho_\infty = 0 \notag\\
    &\implies \nabla\log\rho_\infty = -\nabla V \notag\\
    &\implies \log\rho_\infty = -V + C \quad\text{(integrate both sides)} \notag\\
    &\implies \rho_\infty = e^C\cdot e^{-V} \propto e^{-V} \label{eq:gibbs}
\end{align}

The normalization constant: $e^C = 1/Z$, where $Z = \int e^{-V(x)}\,dx$ (the partition function). Therefore:
\[
\rho_\infty(x) = \frac{e^{-V(x)}}{\int e^{-V(y)}\,dy}
\]

This is the \textbf{Gibbs--Boltzmann distribution}.

\begin{intuition}
\textbf{Intuition for the equilibrium:}
$\rho_\infty \propto e^{-V}$ means particles tend to accumulate at low-potential regions, but do not concentrate entirely at a single point (diffusion prevents this).

\textbf{Verification:} At $\rho_\infty \propto e^{-V}$,
the potential force $-\nabla V$ (toward the bottom) and the diffusive force $-\nabla\log\rho_\infty = +\nabla V$ (away from the bottom) exactly cancel, giving zero net velocity.

This is the \textbf{Gibbs--Boltzmann distribution}. For the role of temperature (concentration at low temperature vs.\ uniformity at high temperature) and its wide applications in statistical mechanics, machine learning, and Bayesian inference, see Appendix~\ref{app:gibbs}.
\end{intuition}

\subsection{Connection to modern generative models}

The Fokker--Planck velocity field $v_t = -\nabla V - \nabla\log\rho_t$ directly reveals the relationship between score and velocity.
For the more general SDE $dX_t = f(X_t,t)\,dt + g(t)\,dB_t$, the Fokker--Planck equation $\partial_t\rho = -\nabla\cdot(\rho f) + \frac{1}{2}g^2\Delta\rho$ can be rewritten as the continuity equation $\partial_t\rho + \nabla\cdot(\rho\,v_t) = 0$, where the probability flow ODE velocity field is:
\[
\boxed{v_t = \underbrace{f(x,t)}_{\text{drift}} - \frac{1}{2}g(t)^2\underbrace{\nabla\log\rho_t}_{\text{score}}}
\]
(Verification: for our setting $f=-\nabla V$, $g=\sqrt{2}$, we get $v_t = -\nabla V - \nabla\log\rho_t$~\checkmark.)
Here, the score $s_t := \nabla\log\rho_t$ is what diffusion models learn, while the velocity $v_t$ is what flow matching learns.
They differ only by a known drift term---they are two equivalent descriptions of the same probability evolution.
\begin{center}
\renewcommand{\arraystretch}{1.3}
\begin{tabular}{@{}ll@{}}
\toprule
\textbf{Language of this article} & \textbf{Language of generative models} \\
\midrule
Fokker--Planck equation & Density evolution of forward/reverse SDE \\
Wasserstein gradient flow & Probability flow ODE \\
JKO discretization & Discrete-time denoising steps (each step of DDPM) \\
Continuity equation & Fundamental constraint of Flow Matching \\
\bottomrule
\end{tabular}
\end{center}

For the geometric meaning of the score ($=$ the Wasserstein gradient of negative entropy), detailed formulas, and comparisons among major models (DDPM, NCSN, VE-SDE, VP-SDE, Flow Matching), see Appendix~\ref{app:genmodels}.
\medskip

\subsection{Conceptual clarification: what needs an energy, and what does not?}

The continuity equation is more general than the Fokker--Planck equation. It is a \textbf{kinematic} statement:
\[
\partial_t\rho_t+\nabla\cdot(\rho_t v_t)=0.
\]
It only says: if probability mass moves with velocity field $v_t$, then the density changes by mass conservation. No potential $V$, no free energy $\mathcal{F}$, and no entropy term is needed to write this equation.

However, the continuity equation does not decide the motion by itself. One must still provide the velocity field $v_t$. Different choices of $v_t$ produce different dynamics:
\[
\text{continuity equation} + \text{chosen velocity field}
\quad\Longrightarrow\quad
\text{a probability path}.
\]

The Fokker--Planck equation is a special case where the velocity field is not arbitrary. It is generated by a free energy:
\[
v_t
=
-\nabla\frac{\delta\mathcal{F}}{\delta\rho}(\rho_t).
\]
For
\[
\mathcal{F}(\rho)=\int V\rho\,dx+\int\rho\log\rho\,dx,
\]
we get
\[
v_t=-\nabla V-\nabla\log\rho_t,
\]
and substituting this into the continuity equation gives exactly
\[
\partial_t\rho_t=\nabla\cdot(\rho_t\nabla V)+\Delta\rho_t.
\]

\begin{keypoint}
The continuity equation defines ``how mass can flow.'' The Fokker--Planck equation additionally specifies ``why it flows that way'': it follows the negative Wasserstein gradient of a free energy.
\end{keypoint}

\subsection{Where is the modeling choice in diffusion?}

The hand-crafted part of diffusion models is not the It\^o rule itself. Once Brownian motion is chosen, the identity $(dB_t)^2=dt$ is a mathematical consequence of its quadratic variation. The modeling choice is the SDE:
\[
dX_t=f(X_t,t)\,dt+g(t)\,dB_t.
\]
Choosing the drift $f$, the diffusion strength $g$, and the noise schedule (for example the $\beta(t)$ schedule in VP-SDE) specifies the forward noising process. The Fokker--Planck equation is then the deterministic density evolution implied by that SDE.

Flow Matching removes this SDE modeling layer. It does not first choose Brownian noise and then derive a Fokker--Planck equation. Instead, it directly learns a velocity field $v_t$ for a chosen probability path, often an OT-like path from noise to data.

\textbf{What we now know:} The Fokker-Planck equation describes the evolution of particle density and has the unique stationary distribution $\pi\propto e^{-V}$. \textbf{Central question:} Why must $\rho_t$ converge to $\pi$? What objective is it ``optimizing''? The answer is in the next section.
\section{Free Energy and the Wasserstein Gradient}

\textbf{Where we are:} Section~4 derived the Fokker-Planck equation and wrote it as $\partial_t\rho = -\nabla\cdot(\rho\,v)$ with $v = -\nabla\log\rho - \nabla V$. Section~3 showed that the space of probability distributions carries a Riemannian structure. This section brings the two together: we define the free energy $\mathcal{F}(\rho) = \KL(\rho\|\pi)$, compute its Wasserstein gradient $\mathrm{grad}_W\mathcal{F} = \nabla\log\rho + \nabla V$, and thereby prove $v = -\mathrm{grad}_W\mathcal{F}$---\textbf{the Fokker-Planck equation is the gradient flow of the free energy in probability space}. This is the central theorem of the entire article.

\subsection{The free energy functional}

We now address a central question: What is the Fokker-Planck equation ``optimizing''?

The answer: it is minimizing the \textbf{free energy}---the KL divergence between the distribution $\rho$ and the equilibrium $\pi \propto e^{-V}$.
But before writing down the formula, let us first build physical intuition for what free energy is.

\begin{intuition}
\textbf{Free energy as the balance of two competing tendencies}

A system in statistical mechanics (say, a confined gas) is governed by two opposing tendencies.

\textbf{Tendency one: lower the energy.}
\begin{itemize}
    \item Each particle is driven toward the location of lowest potential $V(x)$ (the bottom of the well)
    \item Acting alone, this collapses all mass onto the global minimum of $V$, forming $\delta_{x_{\min}}$
    \item It is the dominant tendency at low temperature, where the system settles toward its ground state
\end{itemize}

\textbf{Tendency two: raise the entropy.}
\begin{itemize}
    \item Thermal agitation drives the particles to spread out
    \item Acting alone, this produces a uniform distribution, independent of the potential
    \item It is the dominant tendency at high temperature
\end{itemize}

\textbf{Free energy is the quantitative trade-off between the two:}
\[
\text{Free energy} = \text{Potential energy (favors concentration)} - \text{Temperature}\times\text{Entropy (favors dispersion)}
\]
Minimizing free energy selects the optimal balance between concentration and dispersion.

\textbf{The role of temperature:}
\begin{itemize}
    \item Low temperature: the energy term dominates; mass concentrates near the minimum of $V$
    \item High temperature: the entropy term dominates; the distribution approaches uniform
    \item Temperature $=1$ (our setting): the two terms carry equal weight
\end{itemize}
\end{intuition}

\begin{definition}[Free energy]
Define $\pi(x) = \frac{1}{Z}e^{-V(x)}$ where $Z = \int e^{-V}dx$. The free energy is:
\begin{align}
    \mathcal{F}(\rho) &:= \KL(\rho\|\pi) = \int_{\R^d}\rho\log\frac{\rho}{\pi}\,dx \notag\\
    &= \int\rho\log\rho\,dx + \int V\rho\,dx + \log Z \label{eq:free_energy}
\end{align}
\end{definition}

\textbf{Term-by-term breakdown of the formula:}

\textbf{Step one: Why does $\KL(\rho\|\pi)$ expand as above?}
\begin{align*}
\KL(\rho\|\pi) &= \int\rho\log\frac{\rho}{\pi}\,dx = \int\rho\,(\log\rho - \log\pi)\,dx\\
&= \int\rho\log\rho\,dx - \int\rho\log\pi\,dx
\end{align*}
Since $\pi = \frac{1}{Z}e^{-V}$, we have $\log\pi = -V - \log Z$. Substituting:
\begin{align*}
&= \int\rho\log\rho\,dx - \int\rho(-V-\log Z)\,dx\\
&= \int\rho\log\rho\,dx + \int V\rho\,dx + \log Z\underbrace{\int\rho\,dx}_{=1}
\end{align*}

Ignoring the constant $\log Z$ (which doesn't depend on $\rho$), the free energy decomposes as:
\begin{equation}
    \mathcal{F}(\rho) = \underbrace{\int\rho\log\rho\,dx}_{\text{negative entropy: }-S(\rho)} + \underbrace{\int V\,\rho\,dx}_{\text{potential energy: }\E_\rho[V]}
\end{equation}

More generally, if the diffusion strength is $\varepsilon>0$, the free energy is
\begin{equation}\label{eq:free_energy_epsilon}
    \mathcal{F}_\varepsilon(\rho)
    =
    \int V\rho\,dx
    +
    \varepsilon\int\rho\log\rho\,dx.
\end{equation}
Then
\[
\frac{\delta\mathcal{F}_\varepsilon}{\delta\rho}
=
V+\varepsilon(\log\rho+1),
\qquad
v
=
-\nabla\frac{\delta\mathcal{F}_\varepsilon}{\delta\rho}
=
-\nabla V-\varepsilon\nabla\log\rho.
\]
This is the clean way to ``add the diffusion force'': we do \emph{not} add it into the external potential $V(x)$. Instead, we add the negative entropy term $\varepsilon\int\rho\log\rho$ to the free energy. The resulting force $-\varepsilon\nabla\log\rho$ depends on the current density itself, so it is a collective entropic force rather than a fixed external potential.

\begin{intuition}
\textbf{What does each term mean?}

\textbf{(1) $\int\rho\log\rho\,dx$: negative entropy}

\begin{itemize}
    \item The entropy $S(\rho) = -\int\rho\log\rho\,dx$ measures the ``degree of disorder / spread'' of the distribution
    \item The more uniform the distribution, the higher the entropy (maximum entropy = uniform distribution)
    \item The more concentrated the distribution (e.g., $\rho\approx\delta_x$), the lower the entropy (extreme case: Dirac entropy $=-\infty$)
    \item The free energy contains $+\int\rho\log\rho = -S$ (negative entropy), so minimizing free energy is equivalent to \textbf{maximizing entropy} (subject to the energy constraint)
    \item Intuition: this term ``penalizes concentration'' and encourages the distribution to spread out
\end{itemize}

\textbf{(2) $\int V\rho\,dx = \E_\rho[V]$: mean potential energy}

\begin{itemize}
    \item This is the expected value of the potential $V(x)$ under the distribution $\rho$
    \item If $\rho$ concentrates where $V$ is low, $\E_\rho[V]$ is small
    \item If $\rho$ has mass where $V$ is high, $\E_\rho[V]$ is large
    \item Minimizing this term $\Rightarrow$ making $\rho$ concentrate near the minimum of $V$
    \item Intuition: this term ``rewards concentration'' in low-energy regions
\end{itemize}

\textbf{The tension between the two terms:}
\begin{center}
\renewcommand{\arraystretch}{1.3}
\begin{tabular}{@{}lll@{}}
\toprule
& \textbf{Neg.\ entropy $\int\rho\log\rho$} & \textbf{Potential energy $\int V\rho$} \\
\midrule
What it wants & $\rho$ as uniform as possible (spread) & $\rho$ concentrated at the minimum of $V$ \\
Extreme case & Uniform distribution (minimized) & Dirac at $\arg\min V$ (minimized) \\
Physical driver & Thermal motion / diffusion & External force / potential \\
\bottomrule
\end{tabular}
\end{center}

\textbf{The minimum of the free energy} is the optimal compromise between the two---which turns out to be the Gibbs distribution $\rho^* = \frac{1}{Z}e^{-V}$.
\end{intuition}

\textbf{Verification: Why is the minimum of $\mathcal{F}$ attained at $\rho^* = \pi$?}

Because $\mathcal{F}(\rho) = \KL(\rho\|\pi)$, and the KL divergence satisfies:
\begin{itemize}
    \item $\KL(\rho\|\pi) \geq 0$ (always non-negative---Gibbs' inequality)
    \item $\KL(\rho\|\pi) = 0 \iff \rho = \pi$ (equals zero if and only if the two distributions are identical)
\end{itemize}
So the global minimum of $\mathcal{F}$ is $0$, attained at $\rho = \pi = \frac{1}{Z}e^{-V}$.

\textbf{In one sentence:} The Fokker-Planck equation describes the process of the system continuously decreasing $\KL(\rho_t\|\pi)$---starting from any initial distribution $\rho_0$, $\rho_t$ gradually approaches $\pi$, with the free energy monotonically decreasing to zero.

\textbf{What is the justification for this claim? We give a rigorous proof below.}

\textbf{Theorem:} Let $\rho_t$ be a solution of the Fokker-Planck equation and $\pi \propto e^{-V}$. Then:
\[
\frac{d}{dt}\KL(\rho_t\|\pi) = -\int\rho_t\left|\nabla\log\frac{\rho_t}{\pi}\right|^2\,dx \leq 0
\]
That is, the free energy is \textbf{strictly monotonically decreasing} (unless $\rho_t = \pi$).

\textbf{Complete derivation:}

\textbf{Step 1: Write out $\frac{d}{dt}\KL(\rho_t\|\pi)$.}

$\pi$ does not depend on time, so:
\begin{align}
\frac{d}{dt}\KL(\rho_t\|\pi) &= \frac{d}{dt}\int\rho_t\log\frac{\rho_t}{\pi}\,dx \notag\\
&= \int\partial_t\rho_t\cdot\log\frac{\rho_t}{\pi}\,dx + \int\rho_t\cdot\frac{\partial_t\rho_t}{\rho_t}\,dx \notag\\
&= \int\partial_t\rho_t\cdot\log\frac{\rho_t}{\pi}\,dx + \int\partial_t\rho_t\,dx \notag\\
&= \int\partial_t\rho_t\cdot\log\frac{\rho_t}{\pi}\,dx + 0 \label{eq:dKLdt_step1}
\end{align}

(The last step uses $\int\partial_t\rho_t\,dx = \frac{d}{dt}\int\rho_t\,dx = \frac{d}{dt}1 = 0$, since total mass is conserved.)

\textbf{Step 2: Substitute the Fokker--Planck equation.}

The Fokker-Planck equation can be written as $\partial_t\rho_t = \nabla\cdot\bigl(\rho_t\nabla\log\frac{\rho_t}{\pi}\bigr)$.

Why? Because $\log\frac{\rho}{\pi} = \log\rho - \log\pi = \log\rho + V + \log Z$, so $\nabla\log\frac{\rho}{\pi} = \nabla\log\rho + \nabla V$.
And we previously derived $\partial_t\rho = \nabla\cdot(\rho(\nabla\log\rho + \nabla V))$---which is precisely $\nabla\cdot(\rho\nabla\log\frac{\rho}{\pi})$.

Substituting into Step 1:
\begin{equation}
\frac{d}{dt}\KL(\rho_t\|\pi) = \int\nabla\cdot\!\left(\rho_t\nabla\log\frac{\rho_t}{\pi}\right)\cdot\log\frac{\rho_t}{\pi}\,dx \label{eq:dKLdt_step2}
\end{equation}

\textbf{Step 3: Integration by parts.}

Apply integration by parts to \eqref{eq:dKLdt_step2}. Let $\mathbf{F} = \rho_t\nabla\log\frac{\rho_t}{\pi}$ and $g = \log\frac{\rho_t}{\pi}$:
\[
\int(\nabla\cdot\mathbf{F})\,g\,dx = -\int\mathbf{F}\cdot\nabla g\,dx
\]
(The boundary terms vanish.) Substituting $\mathbf{F}$ and $\nabla g$:
\begin{align}
\frac{d}{dt}\KL(\rho_t\|\pi) &= -\int\rho_t\nabla\log\frac{\rho_t}{\pi}\cdot\nabla\log\frac{\rho_t}{\pi}\,dx \notag\\
&= -\int\rho_t\left|\nabla\log\frac{\rho_t}{\pi}\right|^2\,dx \label{eq:dKLdt_final}
\end{align}

\textbf{Step 4: Conclude.}

The right side $= -\int\rho_t|\nabla\log\frac{\rho_t}{\pi}|^2\,dx$. Since $\rho_t \geq 0$ and $|\cdot|^2\geq 0$, the entire integral is $\geq 0$, so with the negative sign it is $\leq 0$.

\begin{equation}
\boxed{\frac{d}{dt}\KL(\rho_t\|\pi) = -\int\rho_t\left|\nabla\log\frac{\rho_t}{\pi}\right|^2\,dx \leq 0}
\end{equation}

Equality holds if and only if $\nabla\log\frac{\rho_t}{\pi} = 0$ everywhere, i.e., $\rho_t = c\cdot\pi$.
By the normalization condition $\int\rho_t = 1 = \int\pi$, we get $c = 1$, i.e., $\rho_t = \pi$.

\textbf{Summary:}
\begin{itemize}
    \item $\KL(\rho_t\|\pi)$ is \textbf{monotonically decreasing} along the Fokker-Planck equation
    \item The rate of decrease $= \int\rho_t|\nabla\log\frac{\rho_t}{\pi}|^2\,dx$, a quantity called the \textbf{relative Fisher information}
    \item The rate of decrease is zero only when $\rho_t = \pi$---once equilibrium is reached, there is no further change
    \item Physical intuition: the system cannot spontaneously move away from equilibrium (a mathematical version of the second law of thermodynamics)
\end{itemize}

For a complete introduction to Fisher information (the parametric version in statistics, the distributional version in information theory, how they are unified, and the definition of relative Fisher information and information inequalities), see Appendix~\ref{app:fisher}.

\begin{intuition}
\textbf{Why is the result a negative ``squared term''? This is no coincidence.}

This structure is ubiquitous in mathematics. Consider a gradient flow in Euclidean space, $\dot x = -\nabla f(x)$:
\[
\frac{d}{dt}f(x_t) = \nabla f(x_t)\cdot\dot x_t = \nabla f\cdot(-\nabla f) = -|\nabla f|^2 \leq 0
\]
The rate of decrease of $f$ = the squared norm of the gradient (always non-positive).

Fokker-Planck is a perfect analogue: it is the gradient flow of $\mathcal{F} = \KL(\cdot\|\pi)$ in Wasserstein space.
The ``Wasserstein gradient'' $= \nabla\log\frac{\rho}{\pi}$, so the rate of decrease $= -\|\text{gradient}\|_\rho^2 = -\int\rho|\nabla\log\frac{\rho}{\pi}|^2\,dx$.

The structure is identical, lifted from finite to infinite dimensions.
\end{intuition}

\textbf{A concrete example:}

Let $V(x) = \frac{1}{2}x^2$ (one-dimensional quadratic potential), so $\pi = \mathcal{N}(0,1)$.

\begin{itemize}
    \item Initial $\rho_0 = \delta_5$ (all particles at $x=5$):
    \begin{itemize}
        \item Potential energy $\int V\rho_0 = V(5) = 12.5$ (very high---the particles are on the bowl rim)
        \item Negative entropy $= -\infty$ (completely concentrated at a single point)
        \item Fokker-Planck will cause $\rho_t$ to slide toward the bottom while simultaneously spreading out, eventually becoming $\mathcal{N}(0,1)$
    \end{itemize}
    \item Initial $\rho_0 = \text{Uniform}[-10,10]$ (uniformly distributed over a large range):
    \begin{itemize}
        \item Potential energy $\int V\rho_0 = \frac{1}{20}\int_{-10}^{10}\frac{x^2}{2}dx = \frac{100}{6}\approx 16.7$ (very high---too much mass on the bowl rim)
        \item Negative entropy $= \log(20)\approx 3$ (very spread out, high entropy, low negative entropy)
        \item Fokker-Planck will cause the mass at the edges to contract toward the center, eventually also becoming $\mathcal{N}(0,1)$
    \end{itemize}
    \item Regardless of the starting $\rho_0$, the endpoint is always $\pi = \mathcal{N}(0,1)$---the unique minimizer of the free energy.
\end{itemize}

\textbf{Free energy in the real world}---milk diffusing in coffee ($V\approx 0$, purely entropy-driven); atmospheric pressure $\rho(h)\propto e^{-mgh/kT}$ (gravity vs.\ thermal motion); protein folding (conformational energy vs.\ conformational entropy); in Bayesian inference, SGLD (Welling \& Teh, 2011) is a discrete version of Fokker-Planck.

\begin{keypoint}
\textbf{Free energy vs.\ ordinary potential: the essential distinction}

\begin{center}
\renewcommand{\arraystretch}{1.3}
\begin{tabular}{@{}p{3.5cm}p{5cm}p{5.5cm}@{}}
\toprule
& \textbf{Potential $V(x)$} & \textbf{Free energy $\mathcal{F}(\rho)$} \\
\midrule
Input & A single point $x\in\R^d$ & An entire distribution $\rho:\R^d\to\R$ \\
Describes & Energy of one particle at $x$ & How far $\rho$ is from equilibrium \\
Gradient & $\nabla V(x)\in\R^d$ & $\grad_W\mathcal{F}(\rho)$ (a vector field) \\
Gradient flow & $\dot x = -\nabla V$ & $\partial_t\rho = -\nabla\cdot(\rho\,\grad_W\mathcal{F})$ \\
\bottomrule
\end{tabular}
\end{center}

Free energy $=$ mean potential energy $+$ negative entropy. The extra ``entropy'' term is a purely collective effect---a single particle has no entropy to speak of.
\end{keypoint}

\subsection{First variation (functional derivative)}

\textbf{Goal:} Compute $\frac{\delta\mathcal{F}}{\delta\rho}$---that is, how $\mathcal{F}$ responds when the density changes from $\rho$ to $\rho+\epsilon\,\delta\rho$.

\begin{align}
    \mathcal{F}(\rho + \epsilon\,\delta\rho) 
    &= \int(\rho+\epsilon\,\delta\rho)\log(\rho+\epsilon\,\delta\rho)\,dx + \int V(\rho+\epsilon\,\delta\rho)\,dx \notag\\
    &\approx \int\rho\log\rho\,dx + \epsilon\int\delta\rho\,(\log\rho + 1)\,dx + \int V\rho\,dx + \epsilon\int V\,\delta\rho\,dx \notag
\end{align}

(Expanding $(\rho+\epsilon\delta\rho)\log(\rho+\epsilon\delta\rho) \approx \rho\log\rho + \epsilon\delta\rho(\log\rho+1)$, which is a first-order Taylor expansion.)

Therefore:
\begin{equation}\label{eq:first_variation}
    \frac{\delta\mathcal{F}}{\delta\rho} = \log\rho + 1 + V
\end{equation}

\subsection{Computing the Wasserstein gradient}

\textbf{The core computation:} We now compute the gradient of $\mathcal{F}$ with respect to the Wasserstein metric.

\medskip
\textbf{Recall first: Why does the Euclidean gradient $\nabla f$ satisfy $\langle\nabla f, h\rangle = Df[h]$?}

This is in fact the \textbf{definition} of the gradient. Specifically:

For $f:\R^n\to\R$, the directional derivative (the Fr\'echet derivative acting on the direction $h$) is:
\[
Df(x)[h] = \lim_{\epsilon\to 0}\frac{f(x+\epsilon h)-f(x)}{\epsilon} = \sum_i\frac{\partial f}{\partial x_i}h_i
\]
The right side can be written as an inner product: $Df(x)[h] = \left(\frac{\partial f}{\partial x_1},\ldots,\frac{\partial f}{\partial x_n}\right)\cdot(h_1,\ldots,h_n) = \langle\nabla f, h\rangle$.

In other words: $Df(x)$ is a \textbf{linear functional} (takes a direction $h$ as input and outputs a real number),
and $\nabla f$ is the \textbf{vector} that ``represents'' this linear functional via the inner product---the Riesz representative.

\textbf{Why is an inner product needed to define the gradient?} Because $Df$ by itself is only a ``cotangent vector'' (linear functional);
to ``raise'' it to a ``tangent vector'' (vector), one needs a metric. Under the standard Euclidean metric $\langle\cdot,\cdot\rangle$,
$\nabla f$ is simply the vector of partial derivatives. But under a different metric $g$, the gradient changes---$\mathrm{grad}_g f = g^{-1}\nabla f$ (see the differential geometry appendix).

\textbf{Core idea:} Gradient $=$ using the metric to convert a ``differential'' (linear functional) into a ``direction'' (vector).
Different metric $\to$ different gradient $\to$ different gradient flow.

\medskip
In Wasserstein space, by complete analogy, $\grad_W\mathcal{F}(\rho)$ is a vector field $\xi$ satisfying:
\[
\langle\xi,\eta\rangle_\rho = D\mathcal{F}(\rho)[\dot\rho] \quad\text{for all tangent vectors}\;\eta
\]
where $\dot\rho = -\nabla\cdot(\rho\eta)$ is the density change induced by $\eta$.

\medskip
\textbf{Detailed explanation of each concept:}

\textbf{(i) What is a ``tangent vector'' in Wasserstein space?}

Recall: in finite dimensions, a tangent vector $v\in T_pM$ at a point $p$ on a manifold $M$ describes the ``instantaneous velocity starting from $p$.''

In Wasserstein space, a ``point'' is a probability density $\rho$. A ``motion'' starting from $\rho$ is a time-varying density $\rho_t$.
The continuity equation $\partial_t\rho_t + \nabla\cdot(\rho_t\,v_t) = 0$ tells us:
any change in density is driven by a \textbf{velocity field} $v_t$.

Therefore, a tangent vector at $\rho$ in Wasserstein space is a \textbf{velocity field} $\eta(x)$.
But not an arbitrary vector field---Otto showed that the tangent space is spanned by \textbf{gradient fields} $\eta = \nabla\psi$
(i.e., the velocity field must be the gradient of some scalar potential---``irrotational flow'').

\textbf{(ii) Why is $\dot\rho = -\nabla\cdot(\rho\eta)$? (What does ``density perturbation induced by a tangent vector'' mean?)}

In Wasserstein space: ``point'' $=$ density $\rho(x)$, ``tangent vector'' $=$ velocity field $\eta(x)$.

\textbf{Question: If all mass flows according to the velocity field $\eta$ for an instant, how does the density change?}

The answer is a rewriting of the continuity equation $\partial_t\rho + \nabla\cdot(\rho\,v) = 0$:
\[
\dot\rho := \partial_t\rho\big|_{t=0} = -\nabla\cdot(\rho\,\eta)
\]
Intuition: $\rho\eta$ is the mass flux (density $\times$ velocity), $\nabla\cdot(\rho\eta)$ is the net outflow,
so the density change $= -$ net outflow (however much flows out is subtracted).

This is the ``density perturbation induced by $\eta$''---translating from ``how mass flows'' to ``how density changes.''

\textbf{One-dimensional example:}

Let $\rho(x) = 1$ (uniform distribution on $[0,1]$).

\begin{itemize}
\item Take $\eta(x) = x$ (the further right, the faster the flow):
\[
\dot\rho = -\frac{\partial}{\partial x}(\rho\cdot x) = -\frac{\partial}{\partial x}(x) = -1
\]
Density decreases everywhere---mass accelerates to the right, and the mass that flows away from the left is not replenished.

\item Take $\eta(x) = 1$ (all mass translates to the right at the same speed):
\[
\dot\rho = -\frac{\partial}{\partial x}(\rho\cdot 1) = -\rho'(x) = 0
\]
The uniform distribution shifts as a whole; its shape is unchanged, $\dot\rho=0$.

\item Take $\eta(x) = -x$ (contraction toward the origin):
\[
\dot\rho = -\frac{\partial}{\partial x}(\rho\cdot(-x)) = -\frac{\partial}{\partial x}(-x) = 1
\]
Density increases everywhere---mass gathers from both sides toward the center.
\end{itemize}

\textbf{Finite-dimensional analogy:} Given a velocity $v$ (tangent vector), the rate of change of position is $\dot x = v$.
Here, ``position'' becomes ``density,'' ``velocity'' becomes ``velocity field,''
and ``$\dot x = v$'' becomes ``$\dot\rho = -\nabla\cdot(\rho\eta)$'' (the continuity equation).

\textbf{(iii) Why is $D\mathcal{F}(\rho)[\dot\rho] = \int\frac{\delta\mathcal{F}}{\delta\rho}\cdot\dot\rho\,dx$?}

This is the \textbf{definition} of the functional derivative. Recall:
\[
\frac{d}{d\epsilon}\bigg|_{\epsilon=0}\mathcal{F}(\rho + \epsilon\,h) = \int\frac{\delta\mathcal{F}}{\delta\rho}(x)\,h(x)\,dx
\]
Here $h = \dot\rho$ is the perturbation direction of the density. The left side is the directional derivative of $\mathcal{F}$ in direction $\dot\rho$, namely $D\mathcal{F}(\rho)[\dot\rho]$;
the right side is the $L^2$ inner product of the functional derivative $\frac{\delta\mathcal{F}}{\delta\rho}$ with the perturbation $\dot\rho$.

This is the exact analogue of the finite-dimensional identity $Df(x)[h] = \langle\nabla f, h\rangle = \sum_i\frac{\partial f}{\partial x_i}h_i$.

\medskip
\textbf{Step 1: Directional derivative along a Wasserstein tangent vector.}

Take a tangent vector $\eta = \nabla\psi$ (the gradient of some scalar function $\psi$). The density perturbation it induces is:
\[
    \dot\rho = -\nabla\cdot(\rho\,\nabla\psi)
\]

The directional derivative of $\mathcal{F}$ in this direction is (chain: first expand using the definition of functional derivative, then substitute the expression for $\dot\rho$):
\begin{align}
    D\mathcal{F}(\rho)[\dot\rho] &= \int\frac{\delta\mathcal{F}}{\delta\rho}(x)\cdot\dot\rho(x)\,dx
    &&\text{(definition of functional derivative)}\notag\\
    &= \int\underbrace{(\log\rho + 1 + V)}_{\frac{\delta\mathcal{F}}{\delta\rho}\text{ from previous section}}\cdot\underbrace{\bigl(-\nabla\cdot(\rho\nabla\psi)\bigr)}_{\text{expression for }\dot\rho}\,dx \label{eq:dir_deriv1}
\end{align}

The task now: use integration by parts to simplify the above into the form $\int\rho\,(\text{something})\cdot\nabla\psi\,dx$,
then compare with the Wasserstein inner product $\langle\xi,\eta\rangle_\rho = \int\rho\,\xi\cdot\eta\,dx = \int\rho\,\xi\cdot\nabla\psi\,dx$ to read off what $\xi$ is.

\textbf{Step 2: Integration by parts.}

The integral obtained from Step 1 is:
\[
D\mathcal{F}(\rho)[\dot\rho] = \int\underbrace{(\log\rho + 1 + V)}_{=:f(x)}\cdot\underbrace{\bigl(-\nabla\cdot(\rho\nabla\psi)\bigr)}_{=:-\nabla\cdot\mathbf{F}}\,dx
\]
where we denote $f := \log\rho+1+V$ (a scalar function) and $\mathbf{F} := \rho\nabla\psi$ (a vector field).

\textbf{Integration by parts formula} (see Appendix~\ref{app:analysis}): for compactly supported functions,
\[
\int f\,(\nabla\cdot\mathbf{F})\,dx = -\int\nabla f\cdot\mathbf{F}\,dx
\]
This formula follows from the product rule $\nabla\cdot(f\mathbf{F}) = \nabla f\cdot\mathbf{F} + f\,\nabla\cdot\mathbf{F}$, integrating both sides,
with the left side $\int\nabla\cdot(f\mathbf{F})\,dx = 0$ (divergence theorem $+$ vanishing boundary terms), and rearranging.

\textbf{Applying to our integral} (note the negative sign $-\nabla\cdot\mathbf{F}$):
\begin{align}
\int f\cdot(-\nabla\cdot\mathbf{F})\,dx &= -\int f\,(\nabla\cdot\mathbf{F})\,dx \notag\\
&= -\left(-\int\nabla f\cdot\mathbf{F}\,dx\right) \qquad\text{(by the integration by parts formula)}\notag\\
&= \int\nabla f\cdot\mathbf{F}\,dx \label{eq:ibp_grad}
\end{align}

Substituting back $f = \log\rho+1+V$, $\mathbf{F} = \rho\nabla\psi$:
\begin{align}
&= \int\nabla(\log\rho+1+V)\cdot(\rho\nabla\psi)\,dx \notag\\
&= \int\rho\,\nabla(\log\rho+V)\cdot\nabla\psi\,dx \notag
\end{align}
The last step uses $\nabla 1 = 0$ (the gradient of a constant is zero), so $\nabla(\log\rho+1+V) = \nabla(\log\rho+V)$.

\medskip
\textbf{Summary of Step 2}: Integration by parts transfers the ``$\nabla\cdot$'' from $\mathbf{F}=\rho\nabla\psi$ into a ``$\nabla$'' acting on $f = \log\rho+1+V$.
Result: the integral becomes $\int\rho\,\nabla(\log\rho+V)\cdot\nabla\psi\,dx$, already of the form $\int\rho\,(\text{vector field})\cdot\nabla\psi\,dx$.

\textbf{Step 3: Identify the gradient.}

We need $\xi$ such that $\langle\xi,\nabla\psi\rangle_\rho = D\mathcal{F}(\rho)[\dot\rho]$ for all $\psi$:
\[
    \int\rho\,\xi\cdot\nabla\psi\,dx = \int\rho\,\nabla(\log\rho + V)\cdot\nabla\psi\,dx \quad \forall\psi
\]

This forces:
\begin{equation}\label{eq:wass_grad}
    \boxed{\grad_W\mathcal{F}(\rho) = \nabla(\log\rho + V) = \nabla\log\rho + \nabla V}
\end{equation}

\subsection{The main result: Fokker--Planck as a Wasserstein gradient flow}

The definition of the Wasserstein gradient flow: ``the curve of steepest descent of $\mathcal{F}$ in probability space.''

The Wasserstein gradient flow of $\mathcal{F}$ is defined by:
\begin{equation}
    \partial_t\rho_t + \nabla\cdot\bigl(\rho_t\,v_t\bigr) = 0, \quad v_t = -\grad_W\mathcal{F}(\rho_t)
\end{equation}

Substituting \eqref{eq:wass_grad}:
\begin{align}
    v_t &= -(\nabla\log\rho_t + \nabla V) \notag\\
    \partial_t\rho_t &= -\nabla\cdot(\rho_t\,v_t) = \nabla\cdot\bigl(\rho_t(\nabla\log\rho_t + \nabla V)\bigr) \notag\\
    &= \nabla\cdot(\nabla\rho_t + \rho_t\nabla V) \notag\\
    &= \Delta\rho_t + \nabla\cdot(\rho_t\nabla V) \label{eq:FP_again}
\end{align}

\begin{keypoint}
\textbf{The Fokker-Planck equation is the gradient flow of the free energy $\mathcal{F}(\rho) = \KL(\rho\|\pi)$ in Wasserstein space.}

Just as $\dot x = -\nabla f(x)$ in Euclidean space moves $x$ in the direction of steepest descent of $f$,
the Fokker-Planck equation makes $\rho_t$ ``flow'' in probability space along the direction of steepest descent of $\KL(\cdot\|\pi)$, until reaching the minimizer $\rho_\infty = \pi$.

This is the central contribution of Jordan, Kinderlehrer, and Otto (1998).
\end{keypoint}

\subsection{End-of-section comparison: Euclidean vs.\ Wasserstein gradient-flow logic}

The whole section is the Wasserstein analogue of the most familiar Euclidean story:
\[
\text{function} \;\longrightarrow\; \text{gradient} \;\longrightarrow\; \text{gradient flow} \;\longrightarrow\; \text{energy decreases}.
\]
The only difference is that the ``point'' is now a probability density and the ``velocity'' is represented by a vector field through the continuity equation.

\begin{center}
\renewcommand{\arraystretch}{1.35}
\begin{tabular}{@{}p{3.1cm}p{5.1cm}p{6.2cm}@{}}
\toprule
\textbf{Step in the analysis} & \textbf{Euclidean space $\R^n$} & \textbf{Wasserstein space $\Prob_2(\R^d)$} \\
\midrule
Point
& A vector $x\in\R^n$
& A probability density $\rho\in\Prob_2(\R^d)$ \\[4pt]

Tangent vector
& A velocity $\dot x\in\R^n$
& A velocity field $v$ inducing $\dot\rho=-\nabla\cdot(\rho v)$ through the continuity equation \\[4pt]

Metric
& Euclidean inner product $\langle u,v\rangle=u\cdot v$
& Otto inner product $\langle v,w\rangle_\rho=\int v\cdot w\,\rho\,dx$ \\[4pt]

Objective
& A function $f(x)$
& A functional $\mathcal{F}(\rho)=\int V\rho\,dx+\int\rho\log\rho\,dx=\KL(\rho\|\pi)+\mathrm{const}$ \\[4pt]

First derivative
& $Df(x)[h]=\nabla f(x)\cdot h$
& $D\mathcal{F}(\rho)[\dot\rho]=\int\frac{\delta\mathcal{F}}{\delta\rho}\dot\rho\,dx$ \\[4pt]

Gradient
& $\nabla f(x)$, defined by $\langle\nabla f,h\rangle=Df[h]$
& $\grad_W\mathcal{F}(\rho)=\nabla\frac{\delta\mathcal{F}}{\delta\rho}=\nabla(\log\rho+V)$ \\[4pt]

Gradient flow
& $\dot x=-\nabla f(x)$
& $\partial_t\rho+\nabla\cdot(\rho v)=0,\quad v=-\grad_W\mathcal{F}(\rho)$ \\[4pt]

Concrete PDE
& Ordinary differential equation for $x_t$
& Fokker--Planck equation $\partial_t\rho=\Delta\rho+\nabla\cdot(\rho\nabla V)$ \\[4pt]

Energy dissipation
& $\frac{d}{dt}f(x_t)=-|\nabla f(x_t)|^2\le 0$
& $\frac{d}{dt}\mathcal{F}(\rho_t)=-\int\rho_t|\nabla\log(\rho_t/\pi)|^2dx\le 0$ \\[4pt]

Equilibrium
& Critical point $\nabla f=0$
& Gibbs distribution $\rho_\infty=\pi\propto e^{-V}$ \\
\bottomrule
\end{tabular}
\end{center}

\begin{keypoint}
The proof strategy is exactly the same as in Euclidean space.
First identify the metric, then compute the gradient of the objective under that metric, then write the negative-gradient dynamics.
In Wasserstein space, this negative-gradient dynamics is not an ODE for points but a continuity equation for densities; for the free energy above, it becomes the Fokker--Planck equation.
\end{keypoint}

\medskip
\textbf{The next natural question:} Since Fokker-Planck is a gradient flow, can we---just as in $\R^n$---use discrete optimization to ``approximate'' this continuous descent process? This is the JKO scheme.

\section{The JKO Scheme}

\textbf{Where we are:} Section~5 proved that the Fokker-Planck equation is the gradient flow of the free energy $\mathcal{F}$ in Wasserstein space. This section completes the final step: \textbf{discretizing} this continuous process into a step-by-step optimization problem.

\subsection{Implicit Euler method: from gradient flow to optimization problem}

Consider the gradient flow $\dot x = -\nabla f(x)$ in Euclidean space. The implicit Euler method discretizes it using a backward difference:
\[
\frac{x_{k+1}-x_k}{\tau} = -\nabla f(x_{k+1})
\]
Rearranging gives $\nabla f(x_{k+1}) + \frac{1}{\tau}(x_{k+1}-x_k) = 0$. The left side is precisely $\nabla_x\bigl[f(x)+\frac{1}{2\tau}|x-x_k|^2\bigr]\big|_{x_{k+1}}$, so implicit Euler is equivalent to:
\begin{equation}\label{eq:implicit_euler}
x_{k+1} = \arg\min_{x}\left\{f(x) + \frac{1}{2\tau}|x - x_k|^2\right\}
\end{equation}

\textbf{Intuition}: Each step finds the optimal trade-off between decreasing $f$ and not straying too far from $x_k$. The step size $\tau$ controls the step length: small $\tau$ lets the penalty term dominate, giving small but stable steps; large $\tau$ lets the objective term dominate, giving large steps that may overshoot.

\textbf{Why is the stationary point a minimum?} The Hessian of $g(x) = f(x)+\frac{1}{2\tau}|x-x_k|^2$ is $\nabla^2 f + \frac{1}{\tau}I$. When $\tau$ is sufficiently small ($1/\tau$ exceeds the most negative eigenvalue of $\nabla^2 f$), $g$ is strictly convex and the stationary point is the unique global minimum.

\begin{center}
\renewcommand{\arraystretch}{1.3}
\begin{tabular}{@{}lll@{}}
\toprule
& \textbf{Explicit Euler} & \textbf{Implicit Euler} \\
\midrule
Formula & $x_{k+1} = x_k - \tau\nabla f(x_k)$ & $x_{k+1} = \arg\min\{f(x)+\frac{1}{2\tau}|x-x_k|^2\}$ \\
Gradient evaluated at & Current point $x_k$ & New point $x_{k+1}$ \\
Stability & Step size restricted (CFL condition) & Unconditionally stable \\
\bottomrule
\end{tabular}
\end{center}

\subsection{JKO scheme: Implicit Euler in Wasserstein space}

The core idea of Jordan--Kinderlehrer--Otto (1998): replace every Euclidean concept in implicit Euler with its Wasserstein-space counterpart.

\begin{definition}[JKO scheme]\label{def:jko}
Given an initial distribution $\rho_0^\tau = \rho_0$ and step size $\tau>0$, the JKO iteration is:
\begin{equation}\label{eq:jko}
\boxed{\rho_{k+1}^\tau = \arg\min_{\rho\in\Prob_2(\R^d)}\left\{\mathcal{F}(\rho) + \frac{1}{2\tau}W_2^2(\rho, \rho_k^\tau)\right\}}
\end{equation}
where $\mathcal{F}(\rho) = \int\rho\log\rho\,dx + \int V\rho\,dx$ is the free energy functional.
\end{definition}

\begin{keypoint}
\textbf{Orientation convention for JKO.}
In the JKO step
\[
\rho_{k+1}
=
\arg\min_\rho
\left\{
\mathcal{F}(\rho)+\frac{1}{2\tau}W_2^2(\rho,\rho_k)
\right\},
\]
$\rho_k$ is the old distribution and $\rho_{k+1}$ is the new distribution.
We use the optimal map
\[
T_k:\rho_{k+1}\to\rho_k,\qquad (T_k)_\#\rho_{k+1}=\rho_k.
\]
Thus, for $x\sim\rho_{k+1}$, the point $T_k(x)$ is its previous location in $\rho_k$, and
\[
x-T_k(x)
\]
is the displacement from old to new. Therefore the discrete velocity is
\[
\frac{x-T_k(x)}{\tau}.
\]
The JKO optimality condition is consequently
\[
\boxed{
\frac{x-T_k(x)}{\tau}
=
-\nabla\frac{\delta\mathcal{F}}{\delta\rho}(x)
}
\]
with the force evaluated at the \textbf{new} point $x$ and the \textbf{new} density $\rho_{k+1}$.
\end{keypoint}

\textbf{Complete analogy dictionary:}
\begin{center}
\renewcommand{\arraystretch}{1.4}
\begin{tabular}{@{}lcc@{}}
\toprule
& \textbf{Euclidean space} & \textbf{Wasserstein space} \\
\midrule
Space & $\R^n$ & $\Prob_2(\R^d)$ \\
Point & $x$ & $\rho$ \\
Distance$^2$ & $|x-y|^2$ & $W_2^2(\mu,\nu)$ \\
Objective & $f(x)$ & $\mathcal{F}(\rho) = \KL(\rho\|\pi)$ \\
Gradient & $\nabla f$ & $\mathrm{grad}_W\mathcal{F} = \nabla(\log\rho+V)$ \\
Gradient flow & $\dot x = -\nabla f$ & $\partial_t\rho = \Delta\rho + \nabla\cdot(\rho\nabla V)$ (FP equation) \\
Implicit Euler & $\min_x\{f(x)+\frac{1}{2\tau}|x-x_k|^2\}$ & $\min_\rho\{\mathcal{F}(\rho)+\frac{1}{2\tau}W_2^2(\rho,\rho_k)\}$ \\
\bottomrule
\end{tabular}
\end{center}

\textbf{Why must the objective be the free energy $\mathcal{F}$?} This is not a ``choice.'' Section~5 already showed that the velocity field of the Fokker-Planck equation is $v = -\nabla(\log\rho + V)$, while the Wasserstein gradient flow satisfies $v = -\mathrm{grad}_W\mathcal{F} = -\nabla\frac{\delta\mathcal{F}}{\delta\rho}$. Comparing the two immediately gives $\frac{\delta\mathcal{F}}{\delta\rho} = \log\rho + V + C$, and the unique functional satisfying this is $\mathcal{F}(\rho) = \int\rho\log\rho + \int V\rho$. The Fokker-Planck equation \textbf{determines} the JKO objective.

\subsection{Deriving JKO from the gradient flow}

\textbf{Starting point:} The Wasserstein gradient flow (conclusion of Section~5):
\[
v_t = -\mathrm{grad}_W\mathcal{F}(\rho_t) = -\nabla(\log\rho_t + V)
\]

\textbf{Step 1: Backward difference.}
Let $T_k$ be the optimal transport map from $\rho_{k+1}$ to $\rho_k$ ($(T_k)_\#\rho_{k+1} = \rho_k$). Then $x-T_k(x)$ is the displacement from the previous step to the current step, and the approximate velocity is $v\approx\frac{\mathrm{id}-T_k}{\tau}$. Substituting into the gradient flow equation:
\begin{equation}\label{eq:jko_implicit}
\frac{\mathrm{id} - T_k}{\tau} = -\mathrm{grad}_W\mathcal{F}(\rho_{k+1}) = -\nabla\log\rho_{k+1} - \nabla V
\end{equation}
The right-hand side is evaluated at the \textbf{new} distribution $\rho_{k+1}$---this is precisely the meaning of ``implicit.''

\textbf{Step 2: Recognize the optimality condition of the optimization problem.}
Rearranging \eqref{eq:jko_implicit} as
\[
\mathrm{grad}_W\mathcal{F}(\rho_{k+1}) + \frac{\mathrm{id}-T_k}{\tau} = 0
\]
and $\frac{\mathrm{id}-T_k}{\tau}$ is precisely the Wasserstein gradient of $\frac{1}{2\tau}W_2^2(\rho,\rho_k)$ (proved in the next subsection), so the above is equivalent to
\[
\mathrm{grad}_W\left[\mathcal{F}(\rho) + \frac{1}{2\tau}W_2^2(\rho,\rho_k)\right]\bigg|_{\rho_{k+1}} = 0
\]
The critical point $=$ the minimizer (guaranteed by displacement convexity of the $W_2^2$ term), yielding the JKO formula~\eqref{eq:jko}.

\subsection{First-order optimality conditions of JKO}

The argmin formula of JKO specifies ``what to do at each step,'' but it must be expanded into a computable form. This subsection derives the \textbf{first-order conditions} satisfied by the JKO minimizer, which are the key to subsequently proving convergence to the Fokker-Planck equation.

\subsubsection*{Step 1: Variational condition}

$\rho_{k+1}$ is the solution to the following problem:
\[
\min_{\rho:\,\int\rho=1,\,\rho\geq 0}\left\{\mathcal{F}(\rho) + \frac{1}{2\tau}W_2^2(\rho,\rho_k)\right\}
\]
For admissible perturbations $\delta\rho$ (satisfying $\int\delta\rho=0$ to preserve mass normalization), the necessary condition for a minimum is:
\[
\int\left[\frac{\delta\mathcal{F}}{\delta\rho}\bigg|_{\rho_{k+1}} + \frac{1}{2\tau}\frac{\delta W_2^2(\cdot,\rho_k)}{\delta\rho}\bigg|_{\rho_{k+1}}\right]\delta\rho\,dx = 0, \quad \forall\,\delta\rho:\;\int\delta\rho=0
\]
By the constrained du Bois-Reymond lemma (Appendix~\ref{app:analysis}), the function inside the brackets must be a constant $C$ (the Lagrange multiplier corresponding to the constraint $\int\rho=1$):
\begin{equation}\label{eq:jko_opt}
\frac{\delta\mathcal{F}}{\delta\rho}\bigg|_{\rho_{k+1}} + \frac{1}{2\tau}\frac{\delta W_2^2(\cdot,\rho_k)}{\delta\rho}\bigg|_{\rho_{k+1}} = C
\end{equation}

\subsubsection*{Step 2: Functional derivative of $W_2^2$}

Let $T_k$ be the optimal transport map from $\rho_{k+1}$ to $\rho_k$. The following is a classical result (Santambrogio, \textit{Optimal Transport for Applied Mathematicians}, Prop.\ 7.17):
\begin{equation}\label{eq:W2_variation}
\frac{1}{2}\frac{\delta W_2^2(\cdot,\rho_k)}{\delta\rho}\bigg|_{\rho_{k+1}}(x) = \frac{1}{2}|x - T_k(x)|^2
\end{equation}

\textbf{Intuition}: Adding a small amount of mass at $x$ means that this mass also needs to be transported to $\rho_k$, at a cost of $|x-T_k(x)|^2$.

\textbf{Derivation}: Let $T_\epsilon$ be the optimal map from $\rho_\epsilon := \rho_{k+1}+\epsilon\,\delta\rho$ to $\rho_k$. By the optimality condition, the variation of $T_\epsilon$ is of higher order in $\epsilon$ (the ``envelope theorem''), so
\[
\left.\frac{d}{d\epsilon}\right|_0 \frac{1}{2}\int|x-T_\epsilon(x)|^2\rho_\epsilon\,dx = \frac{1}{2}\int|x-T_k(x)|^2\,\delta\rho\,dx
\]
Comparing with the definition of the functional derivative $\int\frac{\delta(\cdots)}{\delta\rho}\,\delta\rho\,dx$, we read off~\eqref{eq:W2_variation}.

\subsubsection*{Step 3: Substitute and take the spatial gradient}

Substituting $\frac{\delta\mathcal{F}}{\delta\rho} = \log\rho_{k+1} + 1 + V$ and \eqref{eq:W2_variation} into \eqref{eq:jko_opt}:
\[
\log\rho_{k+1}(x) + V(x) + \frac{1}{2\tau}|x-T_k(x)|^2 = C'
\]
Taking the spatial gradient $\nabla_x$ of both sides (the constant $C'$ vanishes):
\begin{equation}\label{eq:jko_grad}
\nabla\log\rho_{k+1} + \nabla V + \frac{1}{\tau}(x - T_k(x)) = 0
\end{equation}
(The last term is the Wasserstein gradient of $\frac{1}{2\tau}W_2^2(\cdot,\rho_k)$ at $\rho_{k+1}$. By a standard result of optimal transport, the first variation $\frac{\delta}{\delta\rho}\big[\frac{1}{2}W_2^2(\cdot,\rho_k)\big]\big|_{\rho_{k+1}}$ is the Kantorovich potential $\varphi_k$, whose spatial gradient is \emph{exactly} the displacement $\nabla\varphi_k(x)=x-T_k(x)$ (by definition of the Brenier map, $T_k(x)=x-\nabla\varphi_k(x)$). Hence the gradient is $\frac{x-T_k(x)}{\tau}$, with \emph{no} $\nabla T_k$ correction term; see Santambrogio Prop.~7.17 / AGS Ch.~10 for the rigorous statement.)

\subsubsection*{Step 4: Recognize the Fokker-Planck velocity field}

Solving for the discrete velocity from \eqref{eq:jko_grad}:
\begin{equation}\label{eq:jko_velocity}
\frac{x - T_k(x)}{\tau} = -\nabla\log\rho_{k+1}(x) - \nabla V(x)
\end{equation}
The left-hand side is the approximate velocity from $\rho_k$ to $\rho_{k+1}$ ($T_k$ pushes $\rho_{k+1}$ back to $\rho_k$, so $x-T_k(x)$ is the displacement).
The right-hand side is precisely the Fokker-Planck velocity field $v = -\nabla\log\rho - \nabla V$.

Therefore: \textbf{the discrete velocity field produced by each JKO step is exactly the Fokker-Planck velocity field at that time instant}. As $\tau\to 0$, the discrete JKO solution converges to the continuous-time solution of the Fokker-Planck equation.

\subsection{Rigorous convergence theorem}

\begin{theorem}[JKO, 1998; Ambrosio--Gigli--Savar\'e, 2008]\label{thm:jko_convergence}
Assume $V$ is $\lambda$-convex ($\nabla^2 V \geq \lambda I$, $\lambda\in\R$) and satisfies a quadratic growth condition. Define the piecewise constant interpolation:
\[
\rho^\tau(t) := \rho_k^\tau \quad\text{for } t \in [k\tau, (k+1)\tau)
\]
Then as $\tau \to 0$, $\rho^\tau(t)$ converges in the $W_2$ metric to the unique solution of the Fokker-Planck equation.
\end{theorem}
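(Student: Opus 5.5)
The plan follows the original JKO compactness argument, with the uniqueness step replaced by the Ambrosio--Gigli--Savar\'e (AGS) $\lambda$-convexity theory.

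\textbf{Well-posedness of each step.} First I would show that every JKO step \eqref{eq:jko} is well-posed. The quadratic growth of $V$, together with the Carleman-type bound $\int\rho\log\rho\geq -C(1+\int|x|^2\rho)$, makes $\mathcal{F}+\frac{1}{2\tau}W_2^2(\cdot,\rho_k^\tau)$ bounded below and coercive for $\tau$ small. Its sublevel sets are tight and uniformly integrable, so they are weakly compact. The functional is lower semicontinuous: entropy and $\int V\rho$ are weakly l.s.c., and $W_2^2$ is l.s.c. under weak convergence. A minimizer therefore exists. Uniqueness follows from convexity of $\rho\mapsto\mathcal{F}(\rho)+\frac{1}{2\tau}W_2^2(\rho,\rho_k)$ along generalized geodesics, which holds for $1+\lambda\tau>0$.

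\textbf{A priori estimates.} Comparing $\rho_{k+1}^\tau$ with the competitor $\rho=\rho_k^\tau$ gives $\mathcal{F}(\rho_{k+1})+\frac{1}{2\tau}W_2^2(\rho_{k+1},\rho_k)\le\mathcal{F}(\rho_k)$. Summing over $k$ yields the discrete energy identity
\[
\mathcal{F}(\rho_N^\tau)+\sum_{k<N}\frac{W_2^2(\rho_{k+1}^\tau,\rho_k^\tau)}{2\tau}\le\mathcal{F}(\rho_0).
\]
Using the lower bound on $\mathcal{F}$ and a Gronwall estimate on second moments, this gives $\sum_k W_2^2(\rho_{k+1},\rho_k)\le C\tau$. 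Cauchy--Schwarz then gives approximate $\tfrac12$-H\"older continuity, $W_2(\rho^\tau(t),\rho^\tau(s))\le C(\sqrt{|t-s|}+\sqrt\tau)$. With uniform moment bounds, a refined Arzel\`a--Ascoli argument (AGS Prop.~3.3.1) extracts a subsequence $\rho^{\tau_n}\to\rho$ uniformly in $W_2$ on $[0,T]$. The entropy bound additionally gives $L^1$ weak compactness, so the limit has densities.

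\textbf{Identifying the limit.} Next I show the limit solves Fokker--Planck. For $\varphi\in C_c^\infty$, I perturb the minimizer by the flow of $\nabla\varphi$, i.e.\ $\rho_\varepsilon=(\id+\varepsilon\nabla\varphi)_\#\rho_{k+1}$. This is the rigorous version of \eqref{eq:jko_velocity}, and it avoids assuming smoothness of $\log\rho$. It yields
\[
\frac1\tau\int (x-T_k(x))\cdot\nabla\varphi\,\rho_{k+1}=\int(\Delta\varphi-\nabla V\cdot\nabla\varphi)\rho_{k+1}.
\]
A Taylor expansion gives $\int\varphi\,\rho_{k+1}-\int\varphi\,\rho_k=\int (\varphi(x)-\varphi(T_kx))\rho_{k+1}=\int\nabla\varphi\cdot(x-T_kx)\rho_{k+1}+O(\|\nabla^2\varphi\|_\infty W_2^2(\rho_{k+1},\rho_k))$. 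Summing in $k$, the error is $O(\sum W_2^2)=O(\tau)\to0$, so the limit satisfies the weak form of \eqref{eq:FP}.

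\textbf{Uniqueness and full convergence.} Finally I pass from subsequences to the whole family. I expect this to be the main obstacle, because the weak-formulation argument alone gives existence, not uniqueness. For $\lambda$-convex $V$ the free energy is $\lambda$-displacement convex: entropy is displacement convex by McCann, and $\int V\rho$ is $\lambda$-convex. I would therefore invoke the AGS theory. Limits of minimizing movements are EVI$_\lambda$ solutions, EVI solutions are unique and $e^{-\lambda t}$-contractive in $W_2$, and AGS Thm.~4.0.4 even gives the rate $W_2(\rho^\tau(t),\rho(t))\le C\sqrt\tau$. Since the Fokker--Planck solution coincides with the EVI flow, every subsequence has the same limit, and the whole family converges in $W_2$.
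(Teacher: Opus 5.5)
Your proposal is correct and follows the same route as the paper's four-step sketch: existence of each JKO minimizer, energy decay from the competitor $\rho_k^\tau$, compactness, and identification of the limit through the optimality condition, which you make rigorous via the push-forward variation $(\id+\varepsilon\nabla\varphi)_\#\rho_{k+1}^\tau$ and then complete with the AGS EVI$_\lambda$ argument for uniqueness and convergence of the whole family, a step the paper's sketch leaves implicit. The only slip is in the compactness step: with merely bounded second moments, AGS Prop.~3.3.1 gives pointwise narrow convergence rather than uniform $W_2$ convergence, but this is harmless because the AGS error estimate you invoke at the end supplies the $W_2$ convergence anyway.
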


\textbf{Four key steps of the proof:}
\begin{enumerate}
\item \textbf{Existence}: Each JKO subproblem has a solution---the $W_2^2$ term provides compactness, and $\mathcal{F}$ is lower semicontinuous in the $W_2$ topology.
\item \textbf{Energy decay}: $\mathcal{F}(\rho_{k+1}^\tau) \leq \mathcal{F}(\rho_k^\tau)$, since $\rho_k$ itself is a feasible solution (with cost $\mathcal{F}(\rho_k)+0$).
\item \textbf{Uniform estimates}: Energy decay $+$ control of the Fisher information yields compactness of $\{\rho_k^\tau\}$, enabling passage to the limit $\tau\to 0$.
\item \textbf{Identification of the limit}: Via the optimality condition \eqref{eq:jko_grad}, the discrete velocity field converges to $-\nabla\log\rho-\nabla V$ as $\tau\to 0$, recovering the Fokker-Planck equation.
\end{enumerate}

\subsection{Structural advantages of JKO}

\begin{enumerate}[label=(\alph*)]
\item \textbf{Positivity and mass conservation are automatic}: Each step optimizes over $\Prob_2(\R^d)$, so $\rho_{k+1}\geq 0$ and $\int\rho_{k+1}=1$ are built-in constraints.
\item \textbf{Unconditional stability}: No CFL condition is needed; arbitrarily large $\tau$ still produces a meaningful (though coarse) approximation.
\item \textbf{Built-in energy dissipation}: $\mathcal{F}(\rho_{k+1})\leq\mathcal{F}(\rho_k)$ without any additional verification.
\item \textbf{Modularity}: Changing $\mathcal{F}$ handles different PDEs:
\end{enumerate}

\begin{center}
\renewcommand{\arraystretch}{1.3}
\begin{tabular}{@{}ll@{}}
\toprule
\textbf{Functional} $\mathcal{F}(\rho)$ & \textbf{Corresponding PDE} \\
\midrule
$\displaystyle\int\rho\log\rho\,dx$(neg.\ entropy) & Heat equation: $\partial_t\rho = \Delta\rho$ \\[6pt]
$\displaystyle\int V\rho\,dx$(potential energy) & Aggregation equation: $\partial_t\rho = \nabla\cdot(\rho\nabla V)$ \\[6pt]
$\displaystyle\int\rho\log\rho + \int V\rho$(free energy) & Fokker--Planck \\[6pt]
$\displaystyle\frac{1}{m-1}\int\rho^m\,dx$(internal energy) & Porous medium equation: $\partial_t\rho = \Delta(\rho^m)$ \\[6pt]
$\displaystyle\frac{1}{2}\iint W(x\!-\!y)\rho(x)\rho(y)\,dxdy$ & McKean--Vlasov equation (McKean, 1966) \\[6pt]
\bottomrule
\end{tabular}
\end{center}

\subsection{Example: Heat equation}

Take $\mathcal{F}(\rho) = \int\rho\log\rho\,dx$ and $V = 0$. Then $\frac{\delta\mathcal{F}}{\delta\rho} = \log\rho + 1$, and the Wasserstein gradient $= \nabla\log\rho$. The gradient flow velocity is $v = -\nabla\log\rho$; substituting into the continuity equation:
\[
\partial_t\rho = -\nabla\cdot(\rho v) = \nabla\cdot(\rho\nabla\log\rho) = \nabla\cdot(\nabla\rho) = \Delta\rho
\]
The heat equation describes ``diffusion purely due to crowding''---the system only seeks to maximize entropy, with no external force.

Each JKO step becomes: $\rho_{k+1} = \arg\min_\rho\bigl\{\int\rho\log\rho\,dx + \frac{1}{2\tau}W_2^2(\rho,\rho_k)\bigr\}$.
This provides an ``optimization perspective'' on the heat equation: the progressive spreading of the Gaussian heat kernel $=$ stepwise entropy maximization (subject to transport cost constraints).

\subsection{Application: Energy Matching (Balcerak et al., 2025)}

The JKO scheme is not only a theoretical device; it is used directly in the design of generative models.

\textbf{Key idea}: Energy Matching uses a single \textbf{time-independent} scalar potential $V_\theta(x)$ as the only learnable parameter,
achieving both transport and equilibrium through the JKO scheme:
\begin{equation}\label{eq:energy_matching_jko}
\rho_{t+\Delta t} = \arg\min_\rho\left\{
\frac{W_2^2(\rho,\rho_t)}{2\Delta t} + 
\int V_\theta\,d\rho + 
\varepsilon(t)\int\rho\log\rho\,dx
\right\}
\end{equation}
Compared to the standard JKO, the innovation lies in allowing the \textbf{temperature $\varepsilon(t)$ to vary with time}.

\textbf{First-order optimality condition} (directly from the derivation in Section~6.4):
\[
\frac{x-y}{\Delta t} + \nabla V_\theta(x) + \varepsilon(t)\nabla\log\rho_{t+\Delta t}(x) = 0
\]
where $y = T(x)$ is the point in $\rho_t$ corresponding to $x$. This condition yields strikingly different behavior in two limiting regimes:

\textbf{Phase 1 ($\varepsilon=0$, far from data)}: The condition reduces to $\frac{x-y}{\Delta t} = -\nabla V_\theta(x)$. Particles undergo deterministic transport with velocity $-\nabla V_\theta$---a pure OT flow, similar to Flow Matching.

\textbf{Phase 2 ($\varepsilon=\varepsilon_{\max}$, near equilibrium)}: Samples barely move ($x\approx y$), and the condition becomes $\nabla V_\theta + \varepsilon_{\max}\nabla\log\rho_\text{eq}=0$, yielding the equilibrium distribution $\rho_\text{eq}\propto e^{-V_\theta/\varepsilon_{\max}}$---a Boltzmann distribution. $V_\theta$ directly encodes the log-likelihood of the data.

\begin{keypoint}
\textbf{One $V_\theta(x)$, two roles}: Far from data, $-\nabla V_\theta$ serves as the transport velocity (Flow Matching); near data, $V_\theta/\varepsilon$ serves as the energy function (EBM). There is no need to learn a time-dependent $s_\theta(x,t)$ or $v_\theta(x,t)$.
\end{keypoint}

\textbf{Correspondence with concepts in this article:}
\begin{center}
\renewcommand{\arraystretch}{1.3}
\begin{tabular}{@{}ll@{}}
\toprule
\textbf{Concept in this article} & \textbf{Role in Energy Matching} \\
\midrule
$W_2$ distance & Transport cost in JKO \\
Continuity equation & Conservation law for sample evolution \\
Fokker-Planck equation & Continuous limit when $\varepsilon>0$ \\
Free energy $\mathcal{F}$ & Per-step minimization objective \\
Gibbs distribution $e^{-V/\varepsilon}$ & Equilibrium = data distribution \\
JKO optimality condition & Core tool for deriving the two-phase behavior \\
\bottomrule
\end{tabular}
\end{center}

\subsection{Unified derivation from JKO to mainstream generative algorithms}

Instead of going through the continuous-time Fokker--Planck equation, we can read JKO directly as an \textbf{iterative generative algorithm}. The basic loop is:

\begin{center}
\fbox{\parbox{0.88\textwidth}{
\textbf{JKO generative template}
\begin{enumerate}[leftmargin=2em]
\item Choose a free energy
\[
\mathcal{F}_k(\rho)=\int V_k\rho\,dx+\varepsilon_k\int\rho\log\rho\,dx.
\]
\item Initialize from a simple distribution, e.g.\ $\rho_0=\mathcal{N}(0,I)$.
\item For $k=0,\ldots,K-1$, compute the proximal Wasserstein step
\[
\rho_{k+1}=\arg\min_\rho\left\{\mathcal{F}_k(\rho)+\frac{1}{2\tau_k}W_2^2(\rho,\rho_k)\right\}.
\]
\item Implement this density step on particles using a learned score, energy, or velocity model.
\end{enumerate}
}}
\end{center}

For one step, let $T_k$ be the optimal map from the new density $\rho_{k+1}$ back to the old density $\rho_k$:
\[
T_k:\rho_{k+1}\to\rho_k,
\qquad
(T_k)_\#\rho_{k+1}=\rho_k.
\]
Then the JKO optimality condition is
\begin{equation}\label{eq:jko_unified}
\frac{x-T_k(x)}{\tau_k}
=
-\nabla V_k(x)-\varepsilon_k\nabla\log\rho_{k+1}(x),
\qquad x\sim\rho_{k+1}.
\end{equation}
Equivalently,
\[
x
=
T_k(x)
+
\tau_k\bigl[-\nabla V_k(x)-\varepsilon_k\nabla\log\rho_{k+1}(x)\bigr].
\]
This is already an algorithmic update rule: the new point equals the old point plus a force evaluated at the new point (implicit Euler). Mainstream generative algorithms differ in how they choose $\mathcal{F}_k$, how they schedule $(\tau_k,\varepsilon_k)$, and how they approximate the unknown score or energy in \eqref{eq:jko_unified}.

\textbf{Indexing note for diffusion papers}: DDPM papers often index sampling backward, from $x_t$ to $x_{t-1}$. In this subsection, we use the JKO convention consistently: $k$ is old and $k+1$ is new. To compare with DDPM notation, read ``$k+1$'' here as the next denoising state, even if the diffusion paper labels that state by a smaller physical time index.

\subsubsection*{1. DDPM: stochastic JKO with Gaussian reference potential}

\textbf{JKO choice}: use the Gaussian reference potential $V_k(x)=\frac{1}{2}|x|^2$ and retain the entropy term. One step is
\[
\rho_{k+1}=\arg\min_\rho\left\{
\int\frac{1}{2}|x|^2\rho\,dx+\varepsilon_k\int\rho\log\rho\,dx+\frac{1}{2\tau_k}W_2^2(\rho,\rho_k)
\right\}.
\]
The optimality condition is
\[
\frac{x-T_k(x)}{\tau_k}=-x-\varepsilon_k\nabla\log\rho_{k+1}(x),
\qquad x\sim\rho_{k+1}.
\]
Let a particle in the old distribution be denoted by $x_k\sim\rho_k$, and suppose its paired new position is $x_{k+1}\sim\rho_{k+1}$, so $T_k(x_{k+1})=x_k$. Substituting $x=x_{k+1}$ into the optimality condition gives
\[
\frac{x_{k+1}-x_k}{\tau_k}
=
-x_{k+1}-\varepsilon_k\nabla\log\rho_{k+1}(x_{k+1}).
\]
Therefore the exact JKO particle equation is
\begin{equation}\label{eq:jko_ddpm_implicit_particle}
x_{k+1}
=
x_k
+
\tau_k\left[-x_{k+1}-\varepsilon_k\nabla\log\rho_{k+1}(x_{k+1})\right].
\end{equation}
If we include finite-temperature Langevin sampling inside the same proximal step, we add the Gaussian fluctuation:
\begin{equation}\label{eq:jko_ddpm_stochastic_particle}
x_{k+1}
\approx
x_k
+
\tau_k\left[-x_{k+1}-\varepsilon_k\nabla\log\rho_{k+1}(x_{k+1})\right]
+
\sqrt{2\varepsilon_k\tau_k}\,z,
\qquad z\sim\mathcal{N}(0,I).
\end{equation}

\textbf{This is the exact place where JKO is implicit}: both the drift $-x_{k+1}$ and the score $\nabla\log\rho_{k+1}(x_{k+1})$ are evaluated at the unknown new point.

Let us first see where the $x_{k+1}$ on the right-hand side goes. Ignoring the Gaussian fluctuation for one line, \eqref{eq:jko_ddpm_implicit_particle} gives
\[
x_{k+1}
=
x_k-\tau_kx_{k+1}-\tau_k\varepsilon_k\nabla\log\rho_{k+1}(x_{k+1}).
\]
Move the linear drift term to the left:
\[
(1+\tau_k)x_{k+1}
=
x_k-\tau_k\varepsilon_k\nabla\log\rho_{k+1}(x_{k+1}).
\]
Therefore
\begin{equation}\label{eq:jko_forward_solved}
x_{k+1}
=
\frac{1}{1+\tau_k}x_k
-
\frac{\tau_k\varepsilon_k}{1+\tau_k}\nabla\log\rho_{k+1}(x_{k+1}).
\end{equation}
So the $x_{k+1}$ on the right has not disappeared: it has been absorbed into the prefactor $\frac{1}{1+\tau_k}$ after solving the implicit linear drift term.

\medskip
\textbf{Reverse/generative implementation.} DDPM generation runs a reverse denoising update. In the same $k\to k+1$ notation, write the explicit score-form update as
\begin{equation}\label{eq:score_update_gamma}
x_{k+1}
=
\frac{1}{\sqrt{\alpha_k}}x_k
+
\gamma_k\,s_\theta(x_k,k)
+
\sigma_k z.
\end{equation}
Here $\alpha_k\in(0,1)$ is the usual DDPM schedule coefficient, and $\gamma_k$ is the score coefficient. To match the standard DDPM formula, the relation is
\begin{equation}\label{eq:gamma_alpha_relation}
\boxed{\gamma_k=\frac{1-\alpha_k}{\sqrt{\alpha_k}}.}
\end{equation}

\textbf{Important: what is the relation between the two scores?}
The implicit JKO equation contains
\[
\nabla\log\rho_{k+1}(x_{k+1}),
\]
where both the distribution and the point are the \emph{new} unknowns. The explicit DDPM-style update uses
\[
s_\theta(x_k,k)\approx\nabla\log\rho_k(x_k),
\]
the score at the currently available point and current marginal. These two quantities are not exactly equal:
\[
\nabla\log\rho_{k+1}(x_{k+1})
\neq
\nabla\log\rho_k(x_k)
\quad\text{in general.}
\]
The connection is an \textbf{explicit time-discretization approximation}: for small steps and a smooth density path,
\[
\rho_{k+1}\approx\rho_k,\qquad x_{k+1}\approx x_k,
\]
so one replaces the implicit new-point score by the available old-point score:
\[
\nabla\log\rho_{k+1}(x_{k+1})
\approx
\nabla\log\rho_k(x_k)
\approx
s_\theta(x_k,k).
\]
This is exactly the same conceptual move as replacing implicit Euler by an explicit Euler-like implementation. Thus DDPM's practical update should not be read as an exact solution of the JKO subproblem; it is a score-trained, explicit particle approximation to that implicit proximal step.

Now introduce the score estimator and the noise-prediction parameterization:
\[
s_\theta(x_k,k)\approx\nabla\log\rho_k(x_k),
\qquad
\epsilon_\theta(x_k,k)\approx-\sqrt{1-\bar\alpha_k}\,s_\theta(x_k,k).
\]
Equivalently,
\[
s_\theta(x_k,k)
=
-\frac{\epsilon_\theta(x_k,k)}{\sqrt{1-\bar\alpha_k}}.
\]
Substitute this into \eqref{eq:score_update_gamma}:
\[
x_{k+1}
=
\frac{1}{\sqrt{\alpha_k}}x_k
-
\frac{\gamma_k}{\sqrt{1-\bar\alpha_k}}\epsilon_\theta(x_k,k)
+
\sigma_k z.
\]
Using \eqref{eq:gamma_alpha_relation},
\[
\frac{\gamma_k}{\sqrt{1-\bar\alpha_k}}
=
\frac{1-\alpha_k}{\sqrt{\alpha_k}\sqrt{1-\bar\alpha_k}}.
\]
Therefore
\[
x_{k+1}
=
\frac{1}{\sqrt{\alpha_k}}x_k
-
\frac{1-\alpha_k}{\sqrt{\alpha_k}\sqrt{1-\bar\alpha_k}}\epsilon_\theta(x_k,k)
+
\sigma_k z.
\]
Factoring out $\frac{1}{\sqrt{\alpha_k}}$ gives the standard DDPM update:
\begin{equation}\label{eq:ddpm_explicit_from_jko}
x_{k+1}
=
\frac{1}{\sqrt{\alpha_k}}
\left(
x_k
-
\frac{1-\alpha_k}{\sqrt{1-\bar\alpha_k}}\epsilon_\theta(x_k,k)
\right)
+
\sigma_k z.
\end{equation}
Thus \eqref{eq:ddpm_explicit_from_jko} should be read as an \textbf{explicit reverse-time score implementation/approximation} of the implicit JKO particle equation \eqref{eq:jko_ddpm_stochastic_particle}. The implicit new-point score $\nabla\log\rho_{k+1}(x_{k+1})$ is replaced by the learned score at the currently available point $x_k$, and the coefficient relation $\gamma_k=(1-\alpha_k)/\sqrt{\alpha_k}$ converts the score-form update into the standard $\epsilon_\theta$-form update.
Thus DDPM is a stochastic particle implementation of repeated JKO proximal steps: the score implements the entropy force, the Gaussian drift implements the reference potential, and the random $z$ implements finite-temperature Langevin sampling inside each proximal step.

\subsubsection*{2. DDIM: deterministic JKO transport using the same learned score}

DDIM (Song, Meng \& Ermon, 2021) uses the same learned score but removes the fresh stochastic sampling noise. In JKO language, it keeps the deterministic transport part of the proximal map:
\[
x_{k+1}\approx x_k+\tau_k[-\nabla V_k(x_{k+1})-\varepsilon_k s_\theta(x_{k+1},k+1)].
\]
In the DDPM parameterization, this gives the deterministic DDIM-style update
\[
x_{\mathrm{next}}
=
\sqrt{\bar\alpha_{\mathrm{next}}}\,\hat x_0(x_t,t)
+\sqrt{1-\bar\alpha_{\mathrm{next}}}\,\epsilon_\theta(x_t,t),
\]
where $\hat x_0$ is the usual estimate of the clean sample from $(x_t,\epsilon_\theta)$. So DDIM is not a different free energy; it is a deterministic implementation of the same learned JKO displacement field. This explains why DDIM is deterministic and can use larger step sizes: it follows a smoother transport map rather than repeatedly injecting fresh noise.

\subsubsection*{3. NCSN/SMLD: pure-entropy JKO at multiple scales}

\textbf{JKO choice}: set $V=0$ and use only entropy,
\[
\mathcal{F}_k(\rho)=\varepsilon_k\int\rho\log\rho\,dx.
\]
The proximal step is
\[
\rho_{k+1}=\arg\min_\rho\left\{\varepsilon_k\int\rho\log\rho\,dx+\frac{1}{2\tau_k}W_2^2(\rho,\rho_k)\right\},
\]
with optimality condition
\[
\frac{x-T_k(x)}{\tau_k}=-\varepsilon_k\nabla\log\rho_{k+1}(x),
\qquad x\sim\rho_{k+1}.
\]
NCSN/SMLD learns the score at a sequence of noise scales $\sigma_1>\cdots>\sigma_N$. Replacing $\nabla\log\rho_{k+1}$ by $s_\theta(x,\sigma_{k+1})$ gives the annealed Langevin/JKO particle update
\[
x_{k+1}=x_k+\eta_k s_\theta(x_{k+1},\sigma_{k+1})+\sqrt{2\eta_k}\,z,
\]
Large $\sigma_k$ corresponds to coarse entropy-driven moves; small $\sigma_k$ gives fine denoising near the data manifold.

\subsubsection*{4. Energy Matching: learning the JKO energy directly}

Energy Matching keeps the JKO iteration explicit:
\[
\rho_{k+1}=\arg\min_\rho\left\{
\int V_\theta\,d\rho+\varepsilon_k\int\rho\log\rho\,dx+\frac{1}{2\tau_k}W_2^2(\rho,\rho_k)
\right\}.
\]
Unlike DDPM/NCSN, which learn the score $\nabla\log\rho_{k+1}$, Energy Matching learns the scalar energy $V_\theta$ itself. The same $V_\theta$ plays two roles: when $\varepsilon_k$ is small, $-\nabla V_\theta$ acts as the transport velocity; near equilibrium, $V_\theta$ defines the Boltzmann density $\rho\propto e^{-V_\theta/\varepsilon_k}$.

\subsubsection*{5. Flow Matching: Benamou--Brenier OT paths}

\textbf{Choice}: Do not define the target distribution through a potential energy $V$ or a free energy $\mathcal{F}$. Instead, prescribe a source distribution $\rho_0$ (noise), a target distribution $\rho_1$ (data), and learn a velocity field that transports one into the other.

\textbf{Important distinction}: Flow Matching is not literally the JKO scheme with $\mathcal{F}=0$. If we put $\mathcal{F}=0$ into the standard JKO update
\[
\rho_{k+1}=\arg\min_\rho\left\{\mathcal{F}(\rho)+\frac{1}{2\tau}W_2^2(\rho,\rho_k)\right\},
\]
then the minimizer is simply $\rho_{k+1}=\rho_k$---nothing moves. A Wasserstein geodesic appears only after we also prescribe the endpoint $\rho_1$. Thus OT Flow Matching is better viewed through the \textbf{Benamou--Brenier dynamic OT formulation}:
\[
W_2^2(\rho_0,\rho_1)
=
\inf_{\substack{\partial_t\rho_t+\nabla\cdot(\rho_t v_t)=0\\
\rho_0,\rho_1\ \text{fixed}}}
\int_0^1\!\int |v_t(x)|^2\rho_t(x)\,dx\,dt.
\]

The geodesic from $\rho_0$ to $\rho_1$ in Wasserstein space is precisely the \textbf{McCann displacement interpolation}:
\[
\rho_t = [(1-t)\mathrm{id} + t\,T]_\#\rho_0, \quad T = \nabla\varphi\;\text{(Brenier map)}
\]
Each particle moves at constant speed along the straight line $X_t = (1-t)x_0 + t\,T(x_0)$, with velocity $v_t(X_t) = T(x_0)-x_0$.

\textbf{From dynamic OT to Flow Matching}: In practice, $T$ is unknown, so Flow Matching approximates it using \textbf{conditional optimal transport} (mini-batch OT):
\[
v_t(x) \approx \frac{x_1 - x_0}{1}, \quad x_0\sim\rho_0,\;x_1\sim\rho_1,\;(x_0,x_1)\text{ paired by OT}
\]
A neural network $v_\theta(x,t)$ is trained to fit this velocity field:
\[
\boxed{\mathcal{L}_{\text{FM}} = \mathbb{E}_{t,(x_0,x_1)}\left[\|v_\theta(x_t,t) - (x_1-x_0)\|^2\right], \quad x_t = (1-t)x_0 + t\,x_1}
\]

\textbf{Variational perspective}: OT Flow Matching learns the velocity field of the Benamou--Brenier minimizer. There is no free energy whose gradient is being followed, and there is no equilibrium distribution $\pi\propto e^{-V}$. The target information enters through the boundary condition $\rho_1=p_{\text{data}}$, not through a potential $V$.

\textbf{Same endpoint, different path}: In the idealized limit of infinite model capacity, perfect training, and exact numerical integration, both diffusion models and Flow Matching can transport a simple base distribution to the data distribution. The endpoint can be the same:
\[
\rho_1=p_{\text{data}}.
\]
But the paths are different. Diffusion constructs a stochastic noising/denoising process and follows the free-energy/Fokker--Planck geometry; OT Flow Matching directly learns a deterministic transport path, often close to a Wasserstein geodesic. Thus Flow Matching removes some SDE machinery that is unnecessary if the only goal is to learn a transport map, while diffusion retains useful extra structure: scores, reverse SDEs, Langevin sampling, likelihood tools, and inverse-problem machinery.

\textbf{Can we recover a potential from a learned Flow Matching velocity?}
Suppose a Flow Matching model gives a velocity field $v_t(x)$ and the induced density path $p_t(x)$. If we try to reinterpret it as a free-energy gradient flow with entropy coefficient $\varepsilon$, we would need
\[
v_t(x)
=
-\nabla V_t(x)-\varepsilon\nabla\log p_t(x).
\]
Formally this means
\[
\nabla V_t(x)
=
-v_t(x)-\varepsilon\nabla\log p_t(x).
\]
This defines a scalar potential $V_t$ only if the right-hand side is a conservative vector field:
\[
\nabla\times\bigl(-v_t-\varepsilon\nabla\log p_t\bigr)=0.
\]
Since $\nabla\log p_t$ is already a gradient field, the obstruction is the rotational component of $v_t$. A general neural velocity field need not be curl-free, so a scalar potential may not exist. Even when it exists, it is usually a \textbf{time-dependent effective potential} $V_t(x)$, not a fixed equilibrium potential $V(x)$ with $\pi\propto e^{-V}$.

\subsubsection*{Unified summary}

\begin{center}
\renewcommand{\arraystretch}{1.4}
\begin{tabular}{@{}p{2.2cm}p{3cm}p{2.5cm}p{2.5cm}p{2.7cm}@{}}
\toprule
\textbf{Algorithm} & \textbf{Variational principle} & \textbf{Stochasticity} & \textbf{Geometric role} & \textbf{Generative path} \\
\midrule
DDPM & Free energy $\int\rho\log\rho + \int V\rho$ & Yes (Langevin) & JKO gradient flow & Along $\mathcal{F}$ gradient + noise \\[4pt]
DDIM & Same free energy, deterministic probability flow & No & Deterministic transport & Prob.\ Flow ODE \\[4pt]
NCSN & Entropy $\int\rho\log\rho$ & Yes & Multi-scale gradient flow & Along entropy gradient + noise \\[4pt]
Flow Matching & Benamou--Brenier kinetic action & No & Wasserstein geodesic & OT straight line \\[4pt]
Energy Matching & $\int V_\theta\rho + \varepsilon\int\rho\log\rho$ & Depends on $\varepsilon$ & Variable-temperature JKO & OT$\to$Boltzmann \\
\bottomrule
\end{tabular}
\end{center}

\textbf{Key insight}: JKO and Benamou--Brenier are two complementary variational principles on Wasserstein space:
\begin{enumerate}
\item \textbf{JKO} is an initial-value problem: given $\rho_k$, descend a free energy $\mathcal{F}$ while staying close in $W_2$.
\item \textbf{Benamou--Brenier} is a boundary-value problem: given both $\rho_0$ and $\rho_1$, minimize kinetic energy among all mass-preserving paths.
\item \textbf{Diffusion models} live naturally in the JKO/free-energy picture; \textbf{OT Flow Matching} lives naturally in the Benamou--Brenier/geodesic picture.
\item \textbf{Probability flow ODEs} are the deterministic continuity-equation representation of diffusion: the entropy-driven diffusion term is absorbed into the velocity field through the score.
\end{enumerate}

\section{The Big Picture}

Let us review the logical chain of the entire story:

\begin{summary}
\begin{enumerate}
    \item \textbf{Wasserstein distance} $W_2$ gives $\Prob_2(\R^d)$ the structure of a metric space --- in fact, a (formal) Riemannian manifold.
    
    \item The \textbf{continuity equation} $\partial_t\rho + \nabla\cdot(\rho v) = 0$ describes how probability mass flows. It is the equation of motion in Wasserstein space.
    It does not require an energy functional; it only requires a velocity field.
    
    \item The \textbf{Benamou--Brenier formula} reveals that $W_2$ = geodesic distance in this Riemannian structure, with kinetic energy $\int|v|^2\rho\,dx$ as the metric.
    
    \item The \textbf{Fokker--Planck equation} is the Wasserstein gradient flow of the free energy $\mathcal{F}(\rho) = \KL(\rho\|\pi)$:
    \[
        \text{``}\dot\rho = -\grad_W\mathcal{F}(\rho)\text{''}
    \]
    This is the infinite-dimensional analog of $\dot x = -\nabla f(x)$.
    In this special case, the velocity is not arbitrary: it is determined by the free energy.
    
    \item The \textbf{JKO scheme} discretizes this gradient flow:
    \[
        \rho_{k+1} = \arg\min_\rho\left\{\mathcal{F}(\rho) + \frac{W_2^2(\rho,\rho_k)}{2\tau}\right\}
    \]
    As $\tau\to 0$, it recovers the continuous-time Fokker--Planck equation.
\end{enumerate}
\end{summary}

\textbf{One-sentence summary:}

\begin{center}
\fbox{\parbox{0.85\textwidth}{\centering
Wasserstein distance endows probability space with geometry;\\
the continuity equation describes mass-conserving flow;\\
the Fokker--Planck equation is the free-energy-driven special case;\\
Flow Matching learns a transport velocity, often the OT geodesic velocity;\\
the JKO scheme discretizes this continuous ``descent'' into step-by-step optimization problems.
}}
\end{center}

\begin{figure}[h]
\centering
\begin{tikzpicture}[
    box/.style={rectangle, draw, rounded corners, minimum height=1.05cm, minimum width=3.2cm, align=center, font=\small},
    arrow/.style={->, thick, >=stealth},
    dashedarrow/.style={->, thick, dashed, >=stealth},
    label/.style={font=\scriptsize, align=center}
]
\node[box, fill=blue!10, minimum width=5.2cm] (G) at (0,0)
{Wasserstein geometry\\$W_2$ + continuity equation};

\node[box, fill=orange!10] (GF) at (-3.1,-2.2)
{Free-energy\\gradient flow};
\node[box, fill=cyan!10] (OT) at (3.1,-2.2)
{Benamou--Brenier\\minimum action};

\node[box, fill=red!10] (FP) at (-4.95,-4.35)
{Fokker--Planck};
\node[box, fill=purple!10] (JKO) at (-1.15,-4.35)
{JKO\\implicit Euler};
\node[box, fill=green!15] (FM) at (3.1,-4.35)
{OT Flow Matching\\geodesic path};

\draw[arrow] (G) -- (GF);
\draw[arrow] (G) -- (OT);

\draw[arrow] (GF) -- (FP);
\draw[arrow] (GF) -- (JKO);
\draw[dashedarrow] (JKO) to[bend left=12] node[above, label]{$\tau\to0$} (FP);

\draw[arrow] (OT) -- (FM);
\end{tikzpicture}
\end{figure}

\section*{References and Further Reading}

\subsection*{Optimal transport and gradient flows}
\begin{enumerate}
    \item Monge, G. (1781). \textit{M\'emoire sur la th\'eorie des d\'eblais et des remblais.} Histoire de l'Acad\'emie Royale des Sciences de Paris, 666--704.
    \item Kantorovich, L. V. (1942). \textit{On the translocation of masses.} Dokl. Akad. Nauk SSSR \textbf{37}, 199--201.
    \item Brenier, Y. (1991). \textit{Polar factorization and monotone rearrangement of vector-valued functions.} Comm. Pure Appl. Math. \textbf{44}(4), 375--417.
    \item Vaserstein, L. N. (1969). \textit{Markov processes over denumerable products of spaces, describing large systems of automata.} Problemy Peredachi Informatsii \textbf{5}(3), 64--72. [The source of the name ``Wasserstein distance.'']
    \item McCann, R. J. (1997). \textit{A convexity principle for interacting gases.} Adv. Math. \textbf{128}(1), 153--179. [Displacement interpolation.]
    \item Benamou, J.-D. \& Brenier, Y. (2000). \textit{A computational fluid mechanics solution to the Monge--Kantorovich mass transfer problem.} Numer. Math. \textbf{84}(3), 375--393.
    \item Jordan, R., Kinderlehrer, D., \& Otto, F. (1998). \textit{The variational formulation of the Fokker--Planck equation.} SIAM J. Math. Anal. \textbf{29}(1), 1--17.
    \item Otto, F. (2001). \textit{The geometry of dissipative evolution equations: the porous medium equation.} Comm. PDE \textbf{26}(1-2), 101--174.
    \item Ambrosio, L., Gigli, N., \& Savar\'e, G. (2008). \textit{Gradient Flows in Metric Spaces and in the Space of Probability Measures.} Birkh\"auser. [The definitive reference.]
    \item Villani, C. (2003). \textit{Topics in Optimal Transportation.} AMS. [Excellent introduction.]
    \item Santambrogio, F. (2015). \textit{Optimal Transport for Applied Mathematicians.} Birkh\"auser. [Very readable.]
    \item Peyr\'e, G. \& Cuturi, M. (2019). \textit{Computational Optimal Transport.} Found. Trends ML \textbf{11}(5-6), 355--607. [Computational perspective.]
    \item Lavenant, H. \& Santambrogio, F. (2022). \textit{The flow map of the Fokker--Planck equation does not provide optimal transport.} Appl. Math. Lett. \textbf{133}, 108225.
    \item Mokrov, P., Korotin, A., Li, L., Genevay, A., Solomon, J., \& Burnaev, E. (2021). \textit{Large-scale Wasserstein gradient flows.} NeurIPS.
    \item Xu, C., Cheng, X., \& Xie, Y. (2023). \textit{Normalizing flow neural networks by JKO scheme.} NeurIPS. [JKO-iFlow.]
\end{enumerate}

\subsection*{Generative models}
\begin{enumerate}
    \item Hyv\"arinen, A. (2005). \textit{Estimation of non-normalized statistical models by score matching.} J. Mach. Learn. Res. \textbf{6}, 695--709.
    \item Vincent, P. (2011). \textit{A connection between score matching and denoising autoencoders.} Neural Comput. \textbf{23}(7), 1661--1674.
    \item Welling, M. \& Teh, Y. W. (2011). \textit{Bayesian learning via stochastic gradient Langevin dynamics.} ICML, 681--688.
    \item Song, Y. \& Ermon, S. (2019). \textit{Generative modeling by estimating gradients of the data distribution.} NeurIPS. [NCSN/SMLD.]
    \item Ho, J., Jain, A., \& Abbeel, P. (2020). \textit{Denoising diffusion probabilistic models.} NeurIPS. [DDPM.]
    \item Song, J., Meng, C., \& Ermon, S. (2021). \textit{Denoising diffusion implicit models.} ICLR. [DDIM.]
    \item Song, Y., Sohl-Dickstein, J., Kingma, D. P., Kumar, A., Ermon, S., \& Poole, B. (2021). \textit{Score-based generative modeling through stochastic differential equations.} ICLR.
    \item Karras, T., Aittala, M., Aila, T., \& Laine, S. (2022). \textit{Elucidating the design space of diffusion-based generative models.} NeurIPS. [EDM.]
    \item Lipman, Y., Chen, R. T. Q., Ben-Hamu, H., Nickel, M., \& Le, M. (2023). \textit{Flow matching for generative modeling.} ICLR.
    \item Balcerak, M., et al. (2025). \textit{Energy Matching: unifying flow matching and energy-based models for generative modeling.} arXiv:2504.10612 (NeurIPS 2025).
    \item Wang, R., et al. (2025). \textit{Equilibrium Matching: generative modeling with implicit energy-based models.} arXiv:2510.02300.
    \item Albergo, M. S., Boffi, N. M., \& Vanden-Eijnden, E. (2025). \textit{Stochastic interpolants: a unifying framework for flows and diffusions.} J. Mach. Learn. Res. \textbf{26}(209), 1--80.
    \item Lipman, Y., et al. (2024). \textit{Flow Matching Guide and Code.} arXiv:2412.06264.
    \item Vuong, A. B., McCann, M. T., Santos, J. E., \& Lin, Y. T. (2025). \textit{Are we really learning the score function? Reinterpreting diffusion models through Wasserstein gradient flow matching.} CIKM. (arXiv:2509.00336)
    \item Holderrieth, P. \& Erives, E. (2025). \textit{An Introduction to Flow Matching and Diffusion Models.} MIT 6.S184 lecture notes.
\end{enumerate}

\subsection*{Classical foundations}
\begin{enumerate}
    \item Fisher, R. A. (1925). \textit{Theory of statistical estimation.} Proc. Cambridge Philos. Soc. \textbf{22}, 700--725.
    \item Langevin, P. (1908). \textit{Sur la th\'eorie du mouvement brownien.} C. R. Acad. Sci. Paris \textbf{146}, 530--533.
    \item Uhlenbeck, G. E. \& Ornstein, L. S. (1930). \textit{On the theory of the Brownian motion.} Phys. Rev. \textbf{36}(5), 823--841.
    \item Kolmogorov, A. N. (1931). \textit{\"Uber die analytischen Methoden in der Wahrscheinlichkeitsrechnung.} Math. Ann. \textbf{104}, 415--458. [Kolmogorov forward/backward equations.]
    \item It\^o, K. (1951). \textit{On stochastic differential equations.} Mem. Amer. Math. Soc. \textbf{4}, 1--51.
    \item Kullback, S. \& Leibler, R. A. (1951). \textit{On information and sufficiency.} Ann. Math. Statist. \textbf{22}(1), 79--86.
    \item McKean, H. P. (1966). \textit{A class of Markov processes associated with nonlinear parabolic equations.} Proc. Natl. Acad. Sci. USA \textbf{56}(6), 1907--1911. [McKean--Vlasov.]
    \item Anderson, B. D. O. (1982). \textit{Reverse-time diffusion equation models.} Stochastic Process. Appl. \textbf{12}(3), 313--326.
\end{enumerate}

\appendix

\section{Measures and Couplings}\label{app:measures}

Before introducing the Wasserstein distance, we need to clarify two foundational concepts: \textbf{measures} and \textbf{couplings}.
If you are already familiar with measure theory, you may skip this section.

\subsection{What is a measure?}

\textbf{The most intuitive understanding:} A measure is a way of ``assigning size to sets.''

There are many notions of ``size'' that we use in everyday life:
\begin{itemize}
    \item The \textbf{length} of a line segment: $[0,3]$ has length 3
    \item The \textbf{area} of a region: a circle of radius $r$ has area $\pi r^2$
    \item The \textbf{volume} of a solid: a cube has volume $l^3$
\end{itemize}
These are all special cases of the Lebesgue measure. But measures go far beyond these---they provide a unified framework for expressing various ways of ``distributing mass.''

\begin{definition}[Measure --- informal]
A measure $\mu$ on $\R^d$ is a function that assigns a non-negative number $\mu(A) \geq 0$ to each (measurable) subset $A \subseteq \R^d$, satisfying:
\begin{enumerate}
    \item $\mu(\emptyset) = 0$ (empty set has zero measure)
    \item \textbf{Countable additivity:} If $A_1, A_2, \ldots$ are disjoint, then $\mu\!\left(\bigcup_{i=1}^\infty A_i\right) = \sum_{i=1}^\infty \mu(A_i)$
\end{enumerate}
A \textbf{probability measure} additionally satisfies $\mu(\R^d) = 1$ (total mass is 1).
\end{definition}

\textbf{Three measures you must know:}

\begin{enumerate}[label=\textbf{(\arabic*)}]
    \item \textbf{Lebesgue measure} $\lambda$: This is simply ``volume.'' $\lambda([0,2]\times[0,3]) = 6$. It uniformly assigns ``mass'' to every region of space.
    
    \item \textbf{Dirac measure} $\delta_x$: Concentrates all mass at a single point $x$:
    \[
    \delta_x(A) = \begin{cases} 1 & \text{if } x \in A \\ 0 & \text{if } x \notin A\end{cases}
    \]
    Think of it as a single grain of sand placed at position $x$.
    
    \item \textbf{Absolutely continuous measure}: A measure with a density function $\rho(x)$: $\mu(A) = \int_A \rho(x)\,dx$.
    
    For example, the normal distribution $\mathcal{N}(0,1)$: $\mu(A) = \int_A \frac{1}{\sqrt{2\pi}}e^{-x^2/2}\,dx$.
    
    Intuition: $\rho(x)$ describes the ``concentration'' of mass in space---where $\rho(x)$ is large the mass is dense, where $\rho(x)$ is small the mass is sparse.
\end{enumerate}

\begin{intuition}
\textbf{Measures vs.\ density functions:}

Beginners often ask: ``What is the difference between a measure and a probability density function (PDF)?''

The answer: \textbf{a PDF is one way of representing a measure}, but not every measure has a PDF.
\begin{itemize}
    \item $\delta_x$ (a single grain of sand) has no PDF---you cannot write down a function $\rho(x)$ such that $\int_A\rho\,dx = \delta_x(A)$
    \item The normal distribution does have a PDF: $\rho(x) = \frac{1}{\sqrt{2\pi}}e^{-x^2/2}$
    \item The mixture $\frac{1}{2}\delta_0 + \frac{1}{2}\mathcal{N}(1,1)$ also has no PDF (in the classical sense)
\end{itemize}
Working with measures instead of PDFs allows us to handle all these cases in a unified framework.
\end{intuition}

\subsection{Absolutely continuous: what does it really mean?}

In optimal transport, you will repeatedly encounter the condition: ``$\mu$ is absolutely continuous with respect to the Lebesgue measure.''
What does this mean? Why is it important?

\begin{definition}[Absolute continuity of measures]
A measure $\mu$ is \textbf{absolutely continuous} with respect to another measure $\lambda$ (written $\mu \ll \lambda$) if:
\[
    \lambda(A) = 0 \implies \mu(A) = 0 \quad \text{for all measurable sets } A
\]
In words: every set that is ``invisible'' to $\lambda$ is also ``invisible'' to $\mu$.
\end{definition}

\textbf{In plain language:} If a set has ``zero volume'' ($\lambda(A)=0$), then $\mu$ also places no mass on it ($\mu(A)=0$).

\begin{intuition}
\textbf{Everyday analogy: Spreading butter vs.\ placing pebbles}

Imagine distributing something on a slice of bread ($\R^d$ space). There are two ways:
\begin{itemize}
    \item \textbf{Spreading butter} (absolutely continuous): Butter is spread uniformly or non-uniformly on the bread surface.
    If you cut along a line of zero area, there will be no butter on it---because butter is ``continuously spread,'' and no area means no butter.
    \item \textbf{Placing pebbles} (not absolutely continuous): You place several pebbles on the bread.
    The pebbles occupy ``zero area'' (points have no area), yet they have finite mass.
    This violates absolute continuity---there exists a set of zero area (the points where the pebbles sit), yet it carries positive mass.
\end{itemize}

\textbf{The essence of absolute continuity is: mass is spread like butter across space, not piled up like pebbles at isolated points.}
\end{intuition}

\textbf{A finer understanding: What are ``sets of zero volume''?}

In spaces of different dimensions, ``zero volume'' means different things:
\begin{itemize}
    \item In $\R^1$ (one-dimensional/line): Points, countable sets (e.g., the rationals $\mathbb{Q}$), and the Cantor set are all null sets
    \item In $\R^2$ (two-dimensional/plane): Points, line segments, and curves are all null sets (they have no ``area'')
    \item In $\R^3$ (three-dimensional/space): Points, line segments, and surfaces are all null sets (they have no ``volume'')
\end{itemize}
General rule: In $\R^d$, subsets of dimension $< d$ are typically null.

\textbf{Key corollary:} An absolutely continuous measure $\mu$ on $\R^d$ satisfies $\mu(\{x\})=0$ for all individual points $x$.
That is, \textbf{the mass at any single point is zero}---there are no ``atoms.''
Mass must be ``spread out'' over regions with positive volume.

\begin{theorem}[Radon--Nikodym]
$\mu \ll \lambda$ if and only if there exists a non-negative measurable function $\rho:\R^d\to[0,\infty)$ such that:
\begin{equation}
    \mu(A) = \int_A \rho(x)\,dx \quad \text{for all measurable } A
\end{equation}
The function $\rho$ is called the \textbf{Radon--Nikodym derivative} (or \textbf{density}) of $\mu$ w.r.t.\ $\lambda$, written $\rho = \frac{d\mu}{d\lambda}$.
\end{theorem}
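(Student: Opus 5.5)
The plan is to prove the two directions of the Radon--Nikodym theorem separately. The direction ``density implies $\mu\ll\lambda$'' is immediate. If $\mu(A)=\int_A\rho\,dx$ and $\lambda(A)=0$, then $\rho\mathbf{1}_A$ vanishes $\lambda$-a.e. Hence its integral is zero, so $\mu(A)=0$. All the work lies in the converse. For it I would follow von Neumann's Hilbert-space argument, which uses nothing more than the Riesz representation of a bounded linear functional on $L^2$. This is the same ``represent a linear functional by a vector via an inner product'' idea that the paper uses to define gradients.

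First, reduce to finite measures. Partition $\R^d$ into countably many unit cubes $Q_n$. Since $\mu$ is a probability measure and $\lambda(Q_n)=1$, both measures are finite on each cube. Densities built on each cube can then be glued together by countable additivity. So assume $\mu,\lambda$ are finite on a set $Q$.

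Next, set $\nu=\mu+\lambda$ and define the functional $L(f)=\int f\,d\mu$ on $L^2(\nu)$. By Cauchy--Schwarz, $|L(f)|\le \mu(Q)^{1/2}\|f\|_{L^2(\nu)}$, so $L$ is bounded. Riesz then gives $g\in L^2(\nu)$ with
\[
\int f\,d\mu=\int f g\,d\mu+\int f g\,d\lambda \quad\text{for all } f\in L^2(\nu),
\]
which rearranges to
\[
\int f(1-g)\,d\mu=\int f g\,d\lambda .
\]

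Third, pin down $g$. Testing with $f=\mathbf{1}_{\{g<0\}}$ and $f=\mathbf{1}_{\{g>1\}}$ shows $0\le g\le 1$ $\nu$-a.e. Testing with $f=\mathbf{1}_{\{g=1\}}$ gives $\lambda(\{g=1\})=0$. Absolute continuity is used exactly here: it gives $\mu(\{g=1\})=0$ as well, so $g<1$ $\mu$-a.e.

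Finally, take $f=\mathbf{1}_A(1+g+\dots+g^{n})$, which is bounded and hence admissible. The identity becomes
\[
\int_A (1-g^{n+1})\,d\mu=\int_A g(1+\dots+g^n)\,d\lambda .
\]
Let $n\to\infty$. On the left, $g^{n+1}\to 0$ $\mu$-a.e., and dominated convergence applies since the integrand is bounded by $1$ and $\mu$ is finite. On the right, the integrand increases to $\rho:=g/(1-g)$, so monotone convergence applies. The result is $\mu(A)=\int_A\rho\,d\lambda$, with $\rho\ge 0$ measurable, as the statement requires.

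The main obstacle is the limit passage in the last step together with the null-set bookkeeping. One has to make sure that $\{g=1\}$ is negligible for both measures, since that is the only place where $\mu\ll\lambda$ enters. One also has to check that $\rho$ is finite $\lambda$-a.e.: it is, because it is integrable against $\lambda$ on each cube, with total $\mu(Q_n)<\infty$. Uniqueness of $\rho$ up to $\lambda$-null sets, needed to justify the notation $d\mu/d\lambda$, follows by applying the integral identity to the sets $\{\rho_1>\rho_2\}$ and $\{\rho_1<\rho_2\}$.
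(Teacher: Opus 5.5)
Your proof is correct. It is von Neumann's argument, and each step checks out. Restricting to unit cubes makes $\nu=\mu+\lambda$ finite, so the indicators you test with lie in $L^2(\nu)$. The two sign tests on $\{g<0\}$ and $\{g>1\}$ do force $0\le g\le 1$ $\nu$-a.e. The test on $\{g=1\}$ is indeed the only place where $\mu\ll\lambda$ enters. The monotone and dominated limits in the last step are justified. To land literally in $[0,\infty)$, set $\rho=0$ on the $\lambda$-null set where $g\notin[0,1)$.

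The paper gives no proof to compare against. It quotes Radon--Nikodym as a standard fact and adds only the heuristic $\mu(B_\epsilon(x))\approx\rho(x)\,\mathrm{Vol}(B_\epsilon(x))$.

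That heuristic points to the other classical route, differentiation of measures. There one sets $\rho(x)=\lim_{r\to 0}\mu(B_r(x))/\lambda(B_r(x))$ and proves existence of the limit and the representation with a Vitali/Besicovitch covering lemma. That route is specific to $\R^d$, but it gives an explicit pointwise formula for the density.

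Your Hilbert-space route works for any pair of $\sigma$-finite measures, but it gives no formula for $\rho$. It needs only the Hilbert-space Riesz representation theorem, which the paper already records in its supplementary appendix.

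Your proof also makes visible a hypothesis the paper's statement leaves implicit. You use that $\mu$ is a probability measure; $\sigma$-finite would suffice. This assumption is genuinely needed. Take $\mu(A)=\infty$ whenever $\lambda(A)>0$ and $\mu(A)=0$ otherwise. Then $\mu\ll\lambda$, but no finite-valued density exists: some set $\{\rho\le n\}\cap B(0,n)$ has positive Lebesgue measure, yet finite $\int\rho$.
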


\textbf{Significance of the Radon--Nikodym theorem:}

``$\mu$ is absolutely continuous'' $\iff$ ``$\mu$ has a density function $\rho(x)$''

These two statements are \textbf{completely equivalent}! So whenever you see ``$\mu$ is absolutely continuous with respect to Lebesgue measure,'' you can mentally translate it to ``$\mu$ has a PDF $\rho(x)$.''

\textbf{Intuition for the density $\rho(x)$:} $\rho(x)$ describes how much mass is contained per unit volume near $x$:
\[
\mu(\text{small ball $B_\epsilon(x)$ centered at $x$}) \approx \rho(x)\cdot\text{Vol}(B_\epsilon(x))
\]
Where $\rho(x)$ is large the mass is dense, where $\rho(x)$ is small the mass is sparse, and where $\rho(x)=0$ there is no mass at all.
The key point is: \textbf{no matter how large $\rho(x)$ is, the mass at a single point $\{x\}$ is always zero} (because $\text{Vol}(\{x\})=0$).

\textbf{Caution: ``Absolutely continuous'' can be confused with another concept}

In mathematics, ``absolutely continuous'' has \textbf{two distinct usages}:
\begin{enumerate}
    \item \textbf{Absolute continuity of measures} (what we are discussing): $\mu \ll \lambda$, meaning one measure does not concentrate on null sets of another.
    \item \textbf{Absolute continuity of functions}: A function $f:[a,b]\to\R$ is absolutely continuous, roughly meaning ``the variation of $f$ can be controlled by integration''---weaker than Lipschitz, stronger than uniform continuity.
\end{enumerate}
The two are deeply connected ($f$ is absolutely continuous $\iff$ $f'$ exists a.e.\ and $f(x)=f(a)+\int_a^x f'(t)dt$, which is essentially saying the measure induced by $f$ is absolutely continuous with respect to Lebesgue measure), but in the context of optimal transport we always mean the first.

\textbf{Why does optimal transport care about absolute continuity?}

Recall the Brenier theorem: If $\mu$ is absolutely continuous (has a density), then the optimal transport map $T$ exists and is unique.
Why is this condition needed? The intuition is as follows:

\textbf{Core idea: ``Infinitesimal mass does not need to be split''}

A map $T$ requires that \textbf{all} mass at position $x$ goes to the same destination $T(x)$.
\begin{itemize}
    \item If $\mu$ has an ``atom'' (finite mass $\mu(\{x_0\})>0$ at some point $x_0$),
          and $\nu$ requires this mass to be distributed to multiple locations, then a map $T$ cannot achieve this---
          since $T(x_0)$ can only be a single point.
    \item But if $\mu$ is absolutely continuous, then every point $x$ carries only ``infinitesimal'' mass $\rho(x)dx$.
          This infinitesimal mass going entirely to one destination $T(x)$ is perfectly fine---
          since each portion of mass from the source is infinitesimal, there is no need to ``split'' it for the map $T$ to transport it to a unique destination.
          The target $\nu$ is formed by ``accumulating'' infinitesimal masses sent from many different $x$.
          Note: $\nu$ itself \textbf{need not} be absolutely continuous---the Brenier theorem only requires $\mu$ to be absolutely continuous.
\end{itemize}

\textbf{Analogy: Distributing fruit}
\begin{itemize}
    \item 10 apples (discrete/atomic) must be divided among 3 people, each requiring 3.33---\textbf{cannot divide into integers}, must cut (split).
    \item 10 liters of juice (continuous/absolutely continuous) must be divided among 3 people, each requiring 3.33 liters---\textbf{easy to divide}, liquid can be poured to arbitrary precision.
\end{itemize}
An absolutely continuous measure is like a liquid: it can be ``poured'' by the map $T$ to any destination without needing to be ``cut.''

\begin{example}
\textbf{Classification exercises:}
\begin{enumerate}
    \item $\mu = \text{Uniform}[0,1]$: absolutely continuous~\checkmark~(density $\rho(x) = \mathbf{1}_{[0,1]}(x)$)
    \item $\mu = \mathcal{N}(0,1)$: absolutely continuous~\checkmark~(density $\rho(x) = \frac{1}{\sqrt{2\pi}}e^{-x^2/2}$)
    \item $\mu = \delta_0$: not absolutely continuous~$\times$~(mass concentrated on the null set $\{0\}$)
    \item $\mu = \frac{1}{2}\delta_0 + \frac{1}{2}\text{Uniform}[0,1]$: not absolutely continuous~$\times$~(has an atom at $\{0\}$)
    \item The measure on $\R^2$ uniformly distributed on the segment $\{(x,0):x\in[0,1]\}$: not absolutely continuous~$\times$
    
    (The segment has Lebesgue measure zero in $\R^2$, yet this measure places all its mass there)
    \item $\mu$ with density $\rho(x) = \frac{1}{|x|^{1/2}}\mathbf{1}_{[-1,1]}(x)$ (density tends to $\infty$ at $x=0$): absolutely continuous~\checkmark
    
    (Although $\rho(0)=\infty$, we have $\mu(\{0\})=\int_{\{0\}}\rho\,dx = 0$---the density can ``blow up'' at certain points, as long as it remains integrable)
\end{enumerate}
\end{example}

\textbf{One subtle point: the density can be $\infty$ yet the measure is still absolutely continuous}

Beginners often have a misconception: ``the density $\rho(x)$ being large at some point'' means ``there is a large concentration of mass at that point.''
This is \textbf{wrong}!

$\rho(x) = 100$ means there is a lot of mass per \textbf{unit volume} near $x$, but the mass at the single point $\{x\}$ is still zero.
Think of a glass of concentrated sugar water---the concentration in some region is very high, but if you take a single drop (volume $\to 0$), the amount of sugar also tends to zero.

What actually causes a measure to be ``not absolutely continuous'' is an \textbf{atom}: a finite amount of mass concentrated on a set of zero volume.
This cannot be represented within the density function framework---because $\int_{\{x\}}\rho(x)\,dx$ is always zero, no matter how large $\rho(x)$ is.

\subsection{Integration against a measure}

When we write $\int f(x)\,d\mu(x)$, it means ``weighted summation of $f$ using $\mu$'':
\begin{itemize}
    \item If $\mu$ has density $\rho$: $\int f\,d\mu = \int f(x)\rho(x)\,dx$ (ordinary integral)
    \item If $\mu = \delta_a$: $\int f\,d\mu = f(a)$ (point evaluation)
    \item If $\mu = \sum_i w_i\delta_{x_i}$ (discrete measure): $\int f\,d\mu = \sum_i w_i f(x_i)$ (weighted sum)
\end{itemize}
So $\int f\,d\mu$ is an extremely flexible notation---it unifies ``integration,'' ``evaluation,'' and ``weighted summation.''

\subsection{Pushforward measure}

\begin{definition}[Pushforward]
Given a measure $\mu$ on $\R^d$ and a map $T:\R^d\to\R^d$, the \textbf{pushforward} $T_\#\mu$ is a measure on $\R^d$ defined by:
\[
    (T_\#\mu)(A) := \mu(T^{-1}(A)) = \mu(\{x : T(x)\in A\})
\]
\end{definition}

\textbf{Intuition:} Imagine you have a bag of marbles (distributed as $\mu$), and you apply the transformation $T$ to every marble---moving the marble at position $x$ to $T(x)$.
The new distribution of marbles after the transformation is $T_\#\mu$.

\textbf{Equivalent formulation} (using integrals): For any function $f$,
\[
\int f(y)\,d(T_\#\mu)(y) = \int f(T(x))\,d\mu(x)
\]
Left side: integrate against the new distribution. Right side: integrate against the old distribution with a change of variables.

\textbf{Examples:}
\begin{itemize}
    \item $\mu = \delta_0$, $T(x) = x+1$. Then $T_\#\mu = \delta_1$ (point moves from 0 to 1).
    \item $\mu = \text{Uniform}[0,1]$, $T(x) = 2x$. Then $T_\#\mu = \text{Uniform}[0,2]$ (stretching).
    \item $\mu = \mathcal{N}(0,1)$, $T(x) = \sigma x + m$. Then $T_\#\mu = \mathcal{N}(m,\sigma^2)$.
\end{itemize}

\subsection{Coupling: the key concept for optimal transport}

We now arrive at the most important concept of this section---\textbf{coupling}.

\begin{definition}[Coupling]
Given two probability measures $\mu$ on $\mathcal{X}$ and $\nu$ on $\mathcal{Y}$, a \textbf{coupling} of $(\mu,\nu)$ is a joint probability measure $\gamma$ on $\mathcal{X}\times\mathcal{Y}$ whose marginals are $\mu$ and $\nu$:
\begin{align}
    \text{First marginal:}&\quad \gamma(A\times\mathcal{Y}) = \mu(A) \quad\forall A\\
    \text{Second marginal:}&\quad \gamma(\mathcal{X}\times B) = \nu(B) \quad\forall B
\end{align}
\end{definition}

\textbf{What does this mean? Let us understand it on three levels:}

\textbf{Level one: The discrete case (most intuitive)}

Suppose $\mu$ and $\nu$ are both discrete distributions:
\begin{itemize}
    \item $\mu$: Factory A produces 3 tons, Factory B produces 2 tons (total 5 tons)
    \item $\nu$: City 1 demands 1 ton, City 2 demands 4 tons (total 5 tons)
\end{itemize}

A coupling $\gamma$ is a \textbf{transport plan table}:
\[
\begin{array}{c|cc|c}
 & \text{City 1} & \text{City 2} & \text{Row sum}=\mu \\
\hline
\text{Factory A} & \gamma_{A1} & \gamma_{A2} & 3 \\
\text{Factory B} & \gamma_{B1} & \gamma_{B2} & 2 \\
\hline
\text{Col sum}=\nu & 1 & 4 & 5
\end{array}
\]

The constraints are:
\begin{itemize}
    \item Each row sum = the factory's total production (marginal is $\mu$)
    \item Each column sum = the city's total demand (marginal is $\nu$)
\end{itemize}

For example, one feasible plan is $\gamma_{A1}=1, \gamma_{A2}=2, \gamma_{B1}=0, \gamma_{B2}=2$.
Another feasible plan is $\gamma_{A1}=0, \gamma_{A2}=3, \gamma_{B1}=1, \gamma_{B2}=1$.
There are many plans satisfying the constraints---optimal transport finds the one with minimum cost.

\textbf{Level two: The random variable perspective}

If $(X,Y)$ is a joint random variable with $X\sim\mu$ and $Y\sim\nu$, then the joint distribution of $(X,Y)$ is a coupling of $\mu$ and $\nu$.

The \textbf{marginal conditions} mean: regardless of what value $Y$ takes, the (marginal) distribution of $X$ is $\mu$; and vice versa.

Note: Given marginal distributions $\mu$ and $\nu$, the joint distribution is \textbf{not unique}---it also depends on the correlation between $X$ and $Y$.
\begin{itemize}
    \item Independent coupling: $\gamma = \mu\otimes\nu$ ($X$ and $Y$ are independent)
    \item Perfect correlation: $\gamma = (\text{id}, T)_\#\mu$ ($Y = T(X)$ is deterministically determined by $X$)
    \item Various intermediate cases
\end{itemize}
The set of couplings $\Gamma(\mu,\nu)$ encompasses all these possibilities.

\textbf{Level three: The transport plan perspective}

Returning to the earth-moving metaphor. $\gamma(x,y)$ represents ``the amount of earth moved from position $x$ to position $y$.''

The marginal conditions guarantee:
\begin{itemize}
    \item The total amount moved from $x$ = the amount of earth at $\mu(x)$ (all earth at $x$ is moved away)
    \item The total amount arriving at $y$ = the amount needed at $\nu(y)$ (all demand at $y$ is satisfied)
\end{itemize}

\begin{example}[Couplings between two Dirac masses]
Let $\mu = \delta_a$, $\nu = \delta_b$. The unique coupling is $\gamma = \delta_{(a,b)}$---all mass goes from $a$ to $b$, with no other choice.
\end{example}

\begin{example}[Couplings between uniform distributions]
Let $\mu = \nu = \text{Uniform}[0,1]$. Some examples of couplings:
\begin{itemize}
    \item \textbf{Independent coupling:} $\gamma = $ uniform distribution on $[0,1]^2$ (scatter points uniformly in the square)
    \item \textbf{Identity coupling:} $\gamma$ is concentrated on the diagonal $\{(x,x):x\in[0,1]\}$ (every point $x$ stays put)
    \item \textbf{Reflection coupling:} $\gamma$ is concentrated on the anti-diagonal $\{(x,1-x):x\in[0,1]\}$ ($x$ is moved to $1-x$)
\end{itemize}
The ``transport costs'' $\int|x-y|^2\,d\gamma$ for these three plans are $\frac{1}{6}$, $0$, and $\frac{1}{3}$, respectively.
The optimal one is the identity coupling (cost 0), which makes sense---if the two distributions are identical, the best ``transport plan'' is to move nothing.
\end{example}

\textbf{Understanding the ``transport style'' of the independent coupling:}

The independent coupling $\gamma = \mu\otimes\nu$ is the most ``chaotic'' transport plan. It means:

\textbf{Each grain of sand forgets where it came from and is randomly thrown to some location in the target distribution $\nu$.}

Specifically, when $\mu=\nu=\text{Uniform}[0,1]$, $\gamma$ is uniformly distributed on the square $[0,1]^2$. This means:
\begin{itemize}
    \item Sand at position $x=0.2$ might be moved to $y=0.1$, or might be moved to $y=0.9$---
    the destination $y$ is uniformly distributed on $[0,1]$, completely independent of the origin $x$.
    \item Sand at position $x=0.8$ behaves the same---the destination is uniformly random.
    \item There is \textbf{no correlation whatsoever} between origin and destination (statistical independence).
\end{itemize}

\textbf{Why is this a valid transport plan?} Verify the marginal conditions:
\begin{align*}
    \gamma(A\times[0,1]) &= \text{Uniform}_{[0,1]^2}(A\times[0,1]) = |A| = \mu(A) \;\checkmark\\
    \gamma([0,1]\times B) &= \text{Uniform}_{[0,1]^2}([0,1]\times B) = |B| = \nu(B) \;\checkmark
\end{align*}
So the marginals are indeed $\mu$ and $\nu$---the total amount moved from each $x$ is correct, and the total arriving at each $y$ is also correct. The transport is simply extremely chaotic.

\textbf{Computing the cost} $\int|x-y|^2\,d\gamma$: Since $x$ and $y$ are \textbf{independent} under the independent coupling,
\begin{align*}
    \int_{[0,1]^2}|x-y|^2\,dxdy &= \int_0^1\int_0^1(x-y)^2\,dxdy\\
    &= \int_0^1\int_0^1(x^2 - 2xy + y^2)\,dxdy\\
    &= \E[X^2] - 2\E[X]\E[Y] + \E[Y^2]\\
    &= \frac{1}{3} - 2\cdot\frac{1}{2}\cdot\frac{1}{2} + \frac{1}{3} = \frac{1}{6}
\end{align*}
(where $\E[X^2] = \int_0^1 x^2 dx = \frac{1}{3}$, $\E[X]=\frac{1}{2}$.)

\textbf{Comparing the ``transport styles'' of the three couplings:}

\begin{center}
\renewcommand{\arraystretch}{1.3}
\begin{tabular}{@{}lll@{}}
\toprule
\textbf{Coupling} & \textbf{Transport style} & \textbf{Cost} \\
\midrule
Identity $\gamma = (\text{id},\text{id})_\#\mu$ & Every grain stays in place & $0$ \\
Independent $\gamma = \mu\otimes\nu$ & Every grain is randomly thrown to an arbitrary location & $1/6$ \\
Reflection $\gamma = (\text{id}, 1-\text{id})_\#\mu$ & Every grain is moved to the ``opposite side'' & $1/3$ \\
\bottomrule
\end{tabular}
\end{center}

\textbf{In one sentence:} The independent coupling has nonzero cost because it ``forgets'' the connection between origin and destination---
even though nothing needs to move (cost zero), it insists on randomly reshuffling the sand, needlessly increasing the transport distance.
This is why the independent coupling is typically a \textbf{very poor} transport plan (though still valid).

\begin{keypoint}
\textbf{Coupling = encoding a transport plan via a joint distribution}

This is the most crucial intuition for understanding optimal transport. A joint distribution $\gamma(x,y)$ simultaneously tells you two things:
\begin{enumerate}
    \item \textbf{Who is paired with whom:} The larger the ``density'' of $\gamma$ at $(x,y)$, the more mass is transported from $x$ to $y$
    \item \textbf{The pairing satisfies supply-demand balance:} The marginal conditions ensure every source point's mass is fully allocated, and every target point's demand is fully met
\end{enumerate}

Different joint distributions = different transport strategies:
\begin{itemize}
    \item $\gamma$ concentrated on the diagonal $x=y$ $\Rightarrow$ ``stay in place''
    \item $\gamma$ concentrated on some curve $y=T(x)$ $\Rightarrow$ ``deterministic transport'' (Monge map)
    \item $\gamma$ spread over the entire $x$-$y$ plane $\Rightarrow$ ``random/chaotic transport'' (mass from each $x$ is dispersed to multiple $y$)
\end{itemize}

A joint distribution is an \textbf{extremely flexible} representation---it can express deterministic maps ($\gamma$ degenerates onto the graph of the map),
as well as ``one-to-many'' mass splitting ($\gamma$ spreads in the $y$ direction).
This is why the Kantorovich formulation is more powerful than the Monge formulation.
\end{keypoint}

\begin{keypoint}
\textbf{The essence of coupling:}

Given two distributions $\mu$ and $\nu$, a coupling $\gamma$ describes one specific plan for ``who goes where.''
The same pair $(\mu,\nu)$ admits infinitely many couplings (transport plans), each with different cost.
\textbf{Optimal transport} finds the coupling with minimum cost.
The \textbf{Wasserstein distance} is that minimum cost (square root).
\end{keypoint}

\textbf{Why use couplings instead of maps $T$ directly?}

Monge's original idea was to find a map $T$ such that $T_\#\mu = \nu$. But a map requires ``one-to-one''---all mass at $x$ must go to a single destination.
If $\mu = \delta_0$ and $\nu = \frac{1}{2}\delta_{-1} + \frac{1}{2}\delta_1$ (one point splits into two), no such map exists!

Kantorovich's coupling relaxes this restriction: the mass at a point $x$ can be \textbf{split} among multiple destinations.
This ensures: (1) the optimization problem \textbf{always} has a solution; (2) the problem becomes \textbf{linear programming}, which is easier to handle.

\section{Differential Geometry Prerequisites}\label{app:diffgeom}

\begin{intuition}
\textbf{Why are these concepts needed?}

Later we will say ``Wasserstein space is a Riemannian manifold'' and ``Fokker--Planck is a gradient flow.''
The core tools underlying these statements are \textbf{manifolds}, \textbf{tangent spaces}, \textbf{Riemannian metrics}, and \textbf{gradients}.

If you are unfamiliar with these concepts, the following builds them from scratch using only intuition and examples.
\end{intuition}

\textbf{(1) Manifold = a space that locally looks like $\R^n$}

\begin{itemize}
\item Intuition: A manifold is a ``possibly curved space,'' but if you zoom into a small neighborhood, it looks just like ordinary $\R^n$.
\item Example 1: The surface of the Earth $S^2$---globally a sphere (curved), but locally described by latitude and longitude (like a patch of $\R^2$).
\item Example 2: $\R^n$ itself---the simplest manifold, globally flat.
\item Example 3: The probability simplex $\{(p_1,\ldots,p_n) : p_i\geq 0, \sum p_i = 1\}$---an $(n-1)$-dimensional hyperplane in $\R^n$.
\end{itemize}

\medskip
\textbf{(2) Tangent space = all possible ``instantaneous velocity directions'' at a point}

\begin{itemize}
\item Intuition: You stand at a point $p$ on the manifold; all directions \textbf{you can currently move} form a vector space---this is $T_pM$.
\item Example 1 (a point $p$ in $\R^n$): The tangent space $T_p\R^n = \R^n$. You can move in any direction without restriction.
\item Example 2 (the north pole $N$ on $S^2$): The tangent space $T_N S^2$ is the ``tangent plane'' at the north pole---an $\R^2$ plane tangent to the sphere. You cannot ``tunnel into the sphere'' or ``fly off the surface''---you can only move along the surface.
\end{itemize}

\begin{center}
\begin{tikzpicture}[scale=1.4]
    \draw[thick, dashed, gray!50] (1.5,0) arc (0:180:1.5 and 0.45);
    
    \shade[ball color=blue!15!white, opacity=0.7] (0,0) circle (1.5);
    \draw[thick] (0,0) circle (1.5);
    
    \draw[thick, gray] (-1.5,0) arc (180:360:1.5 and 0.45);
    
    \coordinate (P) at (0.4, 1.05);
    \fill (P) circle (2.5pt);
    \node[left] at (0.25, 1.15) {$p$};
    
    \begin{scope}[shift={(P)}, rotate=-10]
        \draw[thick, gray!70, fill=gray!10, opacity=0.4] 
            (-1.3,-0.7) -- (1.3,-0.7) -- (1.3,0.7) -- (-1.3,0.7) -- cycle;
        \draw[->, very thick, red] (0,0) -- (1.0, 0.15) node[right]{$v_1$};
        \draw[->, very thick, blue] (0,0) -- (-0.2, 0.65) node[above]{$v_2$};
    \end{scope}
    
    \node[right] at (2.0, 2.0) {Tangent plane $T_pM$};
    \draw[->, thin, gray] (2.0, 1.9) -- (1.2, 1.4);
    
    \node at (0,-2.3) {Tangent space $T_pM$ at a point $p$ on the sphere $M=S^2$};
    \node[below] at (0,-2.7) {\small $v_1, v_2$ are tangent vectors---directions one can move along the sphere};
\end{tikzpicture}
\end{center}

\textbf{More rigorous definitions (two equivalent approaches):}

\begin{itemize}
\item \textbf{Curve definition:} A tangent vector $v\in T_pM$ is the velocity $\dot\gamma(0)$ at $t=0$ of some curve $\gamma(t)$ passing through $p$ ($\gamma(0)=p$).
    
    Intuition: An ant stands at point $p$ and begins walking along some path $\gamma$---the instantaneous velocity at the moment of departure is a tangent vector.

\item \textbf{Directional derivative definition:} A tangent vector $v$ is an operator that ``takes directional derivatives of functions''---$v(f) := \left.\frac{d}{dt}\right|_{t=0}f(\gamma(t))$.

    Intuition: A tangent vector $v$ tells you ``how much any function $f$ changes if you take a small step in the direction $v$.''
\end{itemize}

\medskip
\textbf{(3) Key properties of tangent vectors}

\begin{itemize}
\item The tangent space $T_pM$ is a \textbf{vector space} (with addition and scalar multiplication), of dimension $= \dim M$.
\item Tangent spaces at different points are \textbf{different}: $T_p M \neq T_q M$ (one cannot directly compare tangent vectors at different points---this is why one needs ``connections'' or ``parallel transport,'' but we do not need this here).
\item The velocity of a curve $\gamma(t)$ satisfies $\dot\gamma(t) \in T_{\gamma(t)}M$---at each moment the velocity lives in the tangent space \textbf{at the point where the curve currently is}.
\end{itemize}

\medskip
\textbf{(4) Dual space and cotangent space}

\textbf{First, the dual space---a purely linear-algebraic concept:}

\begin{definition}[Dual space]
Let $V$ be a vector space over $\R$. The \textbf{dual space} $V^*$ is defined as:
\[
V^* := \{\omega : V \to \R \mid \omega\text{~is a linear map}\}
\]
Elements of $V^*$ are called \textbf{linear functionals} on $V$ or \textbf{covectors}.
\end{definition}

\textbf{Concrete example ($V = \R^n$):}

Elements of $V = \R^n$ are column vectors $v = \begin{pmatrix}v^1\\\vdots\\v^n\end{pmatrix}$.

What are elements of $V^*$? They are linear maps that ``eat a column vector and output a number.''
The most general form is: $\omega(v) = a_1 v^1 + a_2 v^2 + \cdots + a_n v^n$---
i.e., taking the inner product of $v$ with fixed coefficients $(a_1,\ldots,a_n)$.

In matrix form: $\omega = (a_1, a_2, \ldots, a_n)$ is a \textbf{row vector}, and the pairing is row $\times$ column:
\[
\omega(v) = \begin{pmatrix}a_1 & \cdots & a_n\end{pmatrix}\begin{pmatrix}v^1\\\vdots\\v^n\end{pmatrix} = \sum_{i=1}^n a_i v^i
\]

So $(\R^n)^*$ is the space of all row vectors---also $n$-dimensional, but ``living in a different place.''

\medskip
\textbf{Key distinction: Pairing vs.\ inner product}

\begin{itemize}
\item \textbf{Pairing} $\omega(v) = \sum a_i v^i$: A row vector $\omega\in V^*$ acts on a column vector $v\in V$.
This \textbf{does not require an inner product}---it is a purely algebraic operation.

\item \textbf{Inner product} $\langle u, v\rangle = \sum g_{ij}u^i v^j$: An operation between two vectors $u,v\in V$ \textbf{of the same type}.
This \textbf{requires an additional choice} of metric $g$.
\end{itemize}

Pairing is ``row $\times$ column'' (a natural operation between different types); inner product is ``column $\times$ column'' (between the same type, requiring the introduction of a metric matrix $g$ to ``flip'' one of them).

\medskip
\textbf{What is an isomorphism?}

\begin{definition}[Vector space isomorphism]
An \textbf{isomorphism} between two vector spaces $V$ and $W$ is a \textbf{linear bijection} $\varphi: V \to W$
(linear + injective + surjective).

If such a $\varphi$ exists, we say $V$ and $W$ are \textbf{isomorphic}, written $V\cong W$.
\end{definition}

Intuition: $V\cong W$ means the two spaces have ``exactly the same structure''---you can losslessly ``translate'' elements of one space into elements of the other via $\varphi$, preserving all linear relationships.

\textbf{Simple examples:}
\begin{itemize}
\item $\R^2 \cong \R^2$---the identity map $\varphi(v) = v$ is an isomorphism (trivial).
\item $\R^2 \cong \R^2$---a $90^\circ$ rotation is also an isomorphism (a different ``translation'').
\item $\R^2$ and $\{$all $2\times 1$ matrices$\}$ are isomorphic---column vectors $\leftrightarrow$ matrices, just different notation.
\item $\R^2$ and $\R^3$ are \textbf{not} isomorphic---different dimensions make a linear bijection impossible.
\end{itemize}

\textbf{Key concept: ``Natural'' isomorphisms vs.\ ``choice-dependent'' isomorphisms}

\begin{itemize}
\item \textbf{Canonical isomorphism}: An isomorphism that can be written down without making any choices.
    
    Example: $V\cong V^{**}$ (double dual) has a canonical isomorphism $v\mapsto \mathrm{ev}_v$, where $\mathrm{ev}_v(\omega):=\omega(v)$. The definition of this map involves no ``choices.''

\item \textbf{Non-canonical isomorphism}: An isomorphism that requires additional choices to specify.
    
    Example: $V\cong V^*$---one must choose a basis or an inner product to write down a specific map.
    Different choices $\Rightarrow$ different isomorphisms.
\end{itemize}

\medskip
\textbf{Where exactly is the difference between $V$ and $V^*$?}

As abstract vector spaces, $V\cong V^* \cong \R^n$ (same dimension $\Rightarrow$ an isomorphism exists). But the key point is:

\begin{itemize}
\item There is no \textbf{canonical} isomorphism $V \xrightarrow{\sim} V^*$---you cannot turn a column vector into a row vector without making a choice.
\item Choosing an inner product $g$ gives a \textbf{specific} isomorphism: $v\mapsto g(v,\cdot)$, i.e., $v^i \mapsto g_{ij}v^j$ (column to row, multiply by $g$).
\item Choosing a different inner product $\tilde g$ yields a \textbf{different} isomorphism.
\end{itemize}

\textbf{``Isn't the transpose a canonical isomorphism?''---A common misconception}

You might think: $v = \begin{pmatrix}v^1\\v^2\end{pmatrix} \mapsto v^T = (v^1, v^2)$,
isn't this a ``choice-free'' column $\to$ row map?

Answer: This map \textbf{depends on the basis you chose}. Specifically:

Let $V = \R^2$, basis $\{e_1, e_2\}$. The vector $v = 3e_1 + 2e_2$ has coordinates $\binom{3}{2}$ in this basis,
and ``transposing'' gives the row vector $(3,2)$, corresponding to the linear functional $\omega_1$ with $\omega_1(e_1)=3$, $\omega_1(e_2)=2$.

Now change to a different basis $\{e_1' = e_1+e_2,\; e_2' = e_2\}$. The same $v$ has coordinates $\binom{3}{-1}$ in the new basis
(since $v = 3e_1' + (-1)e_2'$), and ``transposing'' gives $(3,-1)$, corresponding to the linear functional $\omega_2$ with $\omega_2(e_1')=3$, $\omega_2(e_2')=-1$.

$\omega_1 \neq \omega_2$!---the same $v$, after changing basis, ``transposing'' yields a different covector.

So ``transpose'' is not a map from $V$ to $V^*$---it is a map from ``coordinates of $V$ in some basis'' to ``coordinates of $V^*$ in the dual basis.''
The moment you say ``take coordinates,'' you have already made a choice (choosing a basis).

\textbf{A more fundamental statement:} The mathematical meaning of ``natural'' is a \textbf{natural transformation in the sense of category theory}---
$V\to V^*$ is not natural because it does not commute with all linear maps $A:V\to W$.
In contrast, $V\to V^{**}$ is natural because $\mathrm{ev}$ commutes with any $A$: $(A^{**}\circ\mathrm{ev}_V)(v) = \mathrm{ev}_W(Av)$.

This distinction is invisible in $\R^n$ with the standard inner product (since $g = I$, columns and rows ``look the same''). But once $g\neq I$, or on a curved surface (where $g$ varies from point to point), the distinction becomes crucial.

\medskip
\textbf{Now back to manifolds: The cotangent space $T_p^*M$}

\begin{definition}[Cotangent space]
The \textbf{cotangent space} at the point $p$ is defined as the dual of the tangent space:
\[
T_p^*M := (T_pM)^* = \{\omega : T_pM \to \R \mid \omega\text{~is a linear map}\}
\]
Elements of $T_p^*M$ are called \textbf{covectors} or \textbf{1-forms} (at $p$).
\end{definition}

\textbf{Intuition:} A tangent vector $v\in T_pM$ is a ``direction'' (an arrow); a covector $\omega\in T_p^*M$ is a ``ruler for measuring directions''---
it accepts a direction $v$ and outputs a number $\omega(v)\in\R$.

\medskip
\textbf{Coordinate representation:} Let $(x^1,\ldots,x^n)$ be local coordinates.
\begin{itemize}
\item Natural basis of $T_pM$: $\{\partial_1,\ldots,\partial_n\}$, where $\partial_i := \frac{\partial}{\partial x^i}\big|_p$
\item Dual basis of $T_p^*M$: $\{dx^1,\ldots,dx^n\}$, defined by $dx^i(\partial_j) = \delta^i_j$
\end{itemize}
Tangent vectors are written as $v = v^i\partial_i$ (upper indices, column vectors); covectors are written as $\omega = \omega_i\,dx^i$ (lower indices, row vectors).

Pairing: $\omega(v) = \omega_i v^i$ (row $\times$ column $=$ scalar, no metric needed).

\medskip
\textbf{The differential $df$ of a function is a covector:}

\begin{definition}[Differential of a function]
Let $f : M \to \R$ be smooth. The \textbf{differential} of $f$ at $p$, $df_p \in T_p^*M$, is defined as:
\[
df_p(v) := v(f) = \left.\frac{d}{dt}\right|_{t=0}f(\gamma(t)), \quad \gamma(0)=p,\;\dot\gamma(0)=v
\]
\end{definition}

In coordinates: $df_p = \frac{\partial f}{\partial x^i}\big|_p\,dx^i$. This is a row vector $\bigl(\frac{\partial f}{\partial x^1},\ldots,\frac{\partial f}{\partial x^n}\bigr)$.

\medskip
\textbf{Key point: $df_p$ requires only the smooth structure, not a metric.} It is a ``natural'' object---
given the function $f$ and the manifold structure, $df$ is completely determined.

\medskip
\textbf{Comparison of $T_pM$ vs $T_p^*M$:}
\begin{center}
\renewcommand{\arraystretch}{1.3}
\begin{tabular}{@{}lll@{}}
\toprule
& \textbf{Tangent space $T_pM$} & \textbf{Cotangent space $T_p^*M$} \\
\midrule
Elements called & Tangent vectors & Covectors / 1-forms \\
Coordinate basis & $\partial_i = \frac{\partial}{\partial x^i}$ & $dx^i$ \\
Component notation & $v = v^i\partial_i$ (upper indices) & $\omega = \omega_i\,dx^i$ (lower indices) \\
$\R^n$ analogy & Column vectors & Row vectors \\
Typical example & Velocity of a curve $\dot\gamma(0)$ & Differential of a function $df_p$ \\
Pairing & \multicolumn{2}{c}{$\omega(v) = \omega_i v^i \in \R$ (no metric needed)} \\
\bottomrule
\end{tabular}
\end{center}

\medskip
\textbf{Why are $T_pM$ and $T_p^*M$ not ``the same thing''?}

In finite dimensions, $T_pM$ and $T_p^*M$ have the same dimension and are isomorphic as abstract vector spaces.
But there is \textbf{no canonical isomorphism}---you cannot ``turn'' a tangent vector $v\in T_pM$ into a covector $\omega\in T_p^*M$ without introducing additional structure.

If one chooses an inner product $g_p$, one obtains the \textbf{musical isomorphism}:
\begin{align*}
\flat:\; T_pM &\to T_p^*M, \quad v \mapsto v^\flat := g_p(v,\,\cdot\,)  \quad\text{(``lower indices'': column$\to$row)}\\
\sharp:\; T_p^*M &\to T_pM, \quad \omega \mapsto \omega^\sharp \quad\text{(``raise indices'': row$\to$column)}
\end{align*}
In coordinates: $(v^\flat)_i = g_{ij}v^j$ (multiply by $g$), $(\omega^\sharp)^i = g^{ij}\omega_j$ (multiply by $g^{-1}$).

\textbf{Concrete example:} Let $V = \R^2$, metric $g = \begin{pmatrix}2 & 1\\1 & 3\end{pmatrix}$.

\medskip
\textbf{Lowering indices $\flat$} (tangent vector $\to$ covector): Take $v = \binom{1}{2}\in T_pM$.

The definition of $v^\flat$ is: $v^\flat(w) := g(v, w)$ for all $w$. In coordinates:
\[
(v^\flat)_i = g_{ij}v^j: \qquad
v^\flat = \begin{pmatrix}2&1\\1&3\end{pmatrix}\begin{pmatrix}1\\2\end{pmatrix} = \begin{pmatrix}4\\7\end{pmatrix}^T \!=\; 4\,dx^1 + 7\,dx^2
\]
(Note: the result is the row vector $(4,7)$, living in $T_p^*M$.)\vspace{1pt}

Verification: $v^\flat(w) = 4w^1 + 7w^2$. And $g(v,w) = (1,2)\begin{pmatrix}2&1\\1&3\end{pmatrix}\binom{w^1}{w^2} = 4w^1+7w^2$. \checkmark

\medskip
\textbf{Raising indices $\sharp$} (cotangent vector $\to$ tangent vector): Take $\omega = 4\,dx^1 + 7\,dx^2$, i.e., $(\omega_i) = (4,7)$.

\[
(\omega^\sharp)^i = g^{ij}\omega_j: \qquad
\omega^\sharp = g^{-1}\begin{pmatrix}4\\7\end{pmatrix} = \frac{1}{5}\begin{pmatrix}3&-1\\-1&2\end{pmatrix}\begin{pmatrix}4\\7\end{pmatrix} = \frac{1}{5}\begin{pmatrix}5\\10\end{pmatrix} = \begin{pmatrix}1\\2\end{pmatrix}
\]
We recover $v = \binom{1}{2}$! This verifies that $\sharp = \flat^{-1}$ (raising and lowering are inverses of each other).

\medskip
\textbf{Origin of the names:} $\flat$ (flat) lowers ``high'' objects (tangent vectors, superscript $v^i$) to ``low'' ones (cotangent vectors, subscript $\omega_i$);
$\sharp$ (sharp) does the reverse. The notation matches the musical symbols for lowering a note by a half step ($\flat$) and raising it by a half step ($\sharp$).

\medskip
\textbf{Intuitive meaning of the metric $g = \begin{pmatrix}2&1\\1&3\end{pmatrix}$:}

The metric $g$ tells you the ``cost of moving in space.'' Specifically, the squared length of a tangent vector $v$ is:
\[
\|v\|_g^2 = g(v,v) = v^T g\, v = 2(v^1)^2 + 2v^1 v^2 + 3(v^2)^2
\]

\begin{itemize}
\item $g_{11}=2$: the cost of taking one step in the $x^1$ direction is $2$ (more expensive than the standard metric's $1$)
\item $g_{22}=3$: the cost of taking one step in the $x^2$ direction is $3$ (even more expensive)
\item $g_{12}=1 \neq 0$: the $x^1$ and $x^2$ directions are \textbf{not orthogonal}---moving simultaneously in both directions incurs a ``coupling cost''
\end{itemize}

Geometric picture: under the standard metric $g=I$, the set of all directions at ``unit distance'' forms a circle ($\|v\|=1$ is a circle).
Under this $g$, $\|v\|_g = 1$ becomes an \textbf{ellipse}:
\[
2(v^1)^2 + 2v^1 v^2 + 3(v^2)^2 = 1
\]
The major axis of the ellipse points in the ``cheap direction'' (low cost for the same coordinate distance), and the minor axis points in the ``expensive direction.''

\textbf{Physical analogy: } Imagine walking on muddy ground. $g=I$ is a flat dry road (equal cost in all directions).
$g \neq I$ is uneven terrain---some directions are downhill (cheap), some are muddy uphill (expensive),
and there may be slopes that cause ``walking north'' to inevitably ``slide a bit east'' (off-diagonal terms $g_{12}\neq 0$).

\begin{keypoint}
\textbf{Gradient = raising the index of the differential:} $\mathrm{grad}\,f = (df)^\sharp$

That is: $\mathrm{grad}\,f$ is obtained by ``lifting'' the cotangent vector $df\in T_p^*M$ into $T_pM$ via the metric $g$.
In coordinates: $(\mathrm{grad}\,f)^i = g^{ij}\frac{\partial f}{\partial x^j}$.

\medskip
This is why \textbf{the same $df$, with different $g$, yields different $\mathrm{grad}\,f$}.
$df$ is a fixed row vector; $\mathrm{grad}\,f = g^{-1}\cdot df^T$ depends on which $g^{-1}$ is used to ``flip'' it.
\end{keypoint}

\medskip
\textbf{(5) Riemannian Metric}

So far, we have the manifold $M$, the tangent space $T_pM$, the cotangent space $T_p^*M$, and the musical isomorphisms.
But the musical isomorphisms require an inner product $g$---this is the Riemannian metric.

\begin{definition}[Riemannian metric]
A \textbf{Riemannian metric} $g$ on a smooth manifold $M$ assigns to each point $p\in M$ an \textbf{inner product} $g_p$ on $T_pM$, varying smoothly with $p$.

That is, $g_p : T_pM\times T_pM \to \R$ satisfies:
\begin{itemize}
\item Bilinearity: $g_p(\alpha u + \beta v, w) = \alpha g_p(u,w) + \beta g_p(v,w)$
\item Symmetry: $g_p(u,v) = g_p(v,u)$
\item Positive definiteness: $g_p(v,v) > 0$ (when $v\neq 0$)
\end{itemize}

In coordinates: $g_p(\partial_i,\partial_j) = g_{ij}(p)$, so $g$ is represented by a positive definite symmetric matrix function $[g_{ij}(x)]$.
\end{definition}

\textbf{What can a Riemannian metric do?}
\begin{itemize}
\item Length of a tangent vector: $\|v\|_g = \sqrt{g_p(v,v)}$
\item Angle between two tangent vectors: $\cos\theta = \frac{g_p(u,v)}{\|u\|_g\,\|v\|_g}$
\item Length of a curve: $L(\gamma) = \int_0^1\|\dot\gamma(t)\|_{\gamma(t)}\,dt$
\item Distance between two points: $d(p,q) = \inf_\gamma L(\gamma)$ (geodesic distance)
\item Raising and lowering indices: $\flat: T_pM\to T_p^*M$, $\sharp: T_p^*M\to T_pM$
\item Gradient: $\mathrm{grad}\,f = (df)^\sharp$
\end{itemize}

A manifold $(M,g)$ equipped with a metric $g$ is called a \textbf{Riemannian manifold}.

\medskip
\textbf{(6) Gradient = $(df)^\sharp$ = raising the index of the differential}

\begin{definition}[Riemannian gradient]
Let $(M,g)$ be a Riemannian manifold and $f:M\to\R$ smooth. The \textbf{gradient} of $f$ at $p$ is:
\[
\mathrm{grad}\,f\big|_p := (df_p)^\sharp \in T_pM
\]
That is: first take the differential $df_p\in T_p^*M$ (a cotangent vector), then raise the index using the metric to obtain a tangent vector.

Equivalent definition (expanding the meaning of $\sharp$): $\mathrm{grad}\,f\big|_p$ is the unique tangent vector satisfying
\begin{equation}\label{eq:grad_def}
g_p\bigl(\mathrm{grad}\,f,\; v\bigr) = df_p(v) \quad \forall\,v\in T_pM
\end{equation}
\end{definition}

\textbf{Coordinate formula:}
\[
(\mathrm{grad}\,f)^i = \sum_j g^{ij}\,\frac{\partial f}{\partial x^j}, \qquad\text{i.e.,}\quad \mathrm{grad}\,f = g^{-1}\cdot(\text{column vector of partial derivatives})
\]

\medskip
\textbf{Concrete example: full computation of ``gradient = raising the index of the differential''}

Let $M = \R^2$, $f(x^1,x^2) = 2x^1 + 3x^2$, with metric $g = \begin{pmatrix}1 & 1\\1 & 4\end{pmatrix}$.

\medskip
\textbf{Step 1.} Compute the differential $df\in T_p^*M$ (independent of the metric):
\[
df = 2\,dx^1 + 3\,dx^2, \qquad (df)_i = (2, 3)\;\text{---a row vector}
\]

\textbf{Step 2.} Raise the index: $\mathrm{grad}\,f = (df)^\sharp = g^{-1}\cdot(df)^T$:
\[
g^{-1} = \frac{1}{3}\begin{pmatrix}4 & -1\\-1 & 1\end{pmatrix}, \qquad
\mathrm{grad}\,f = \frac{1}{3}\begin{pmatrix}4 & -1\\-1 & 1\end{pmatrix}\begin{pmatrix}2\\3\end{pmatrix}
= \begin{pmatrix}5/3\\1/3\end{pmatrix}
\]

\textbf{Step 3.} Verify $g(\mathrm{grad}\,f, v) = df(v)$:

Take arbitrary $v = (v^1, v^2)^T$:
\[
g(\mathrm{grad}\,f,\,v) = \begin{pmatrix}5/3\\1/3\end{pmatrix}^T\!\begin{pmatrix}1&1\\1&4\end{pmatrix}\begin{pmatrix}v^1\\v^2\end{pmatrix}
= 2v^1 + 3v^2 = df(v) \quad\checkmark
\]

\textbf{Comparison with the standard metric $g = I$:} $\mathrm{grad}\,f = I^{-1}\binom{2}{3} = \binom{2}{3}$---recovering the familiar $\nabla f$ from calculus.

\begin{keypoint}
\textbf{Core summary: }

\medskip
\begin{center}
\renewcommand{\arraystretch}{1.4}
\begin{tabular}{@{}lll@{}}
\toprule
\textbf{Object} & \textbf{Lives in} & \textbf{Depends on} \\
\midrule
Differential $df = (\partial_1 f,\ldots,\partial_n f)$ & $T_p^*M$ (row vector) & Only $f$ and the manifold structure \\
Gradient $\mathrm{grad}\,f = g^{-1}\cdot(df)^T$ & $T_pM$ (column vector) & $f$ \textbf{and the metric $g$} \\
\bottomrule
\end{tabular}
\end{center}

\medskip
\begin{itemize}
\item Standard metric $g=I$: the numerical values of $\mathrm{grad}\,f$ equal $(df)^T$; they ``look the same''
\item Non-standard metric: $\mathrm{grad}\,f = g^{-1}\cdot(df)^T \neq (df)^T$
\item \textbf{The Wasserstein space has a non-standard metric}---so the gradient $\neq$ the functional derivative
\end{itemize}
\end{keypoint}

\textbf{Physical intuition: } $g^{-1}$ performs a ``cost correction''---in directions that are expensive under the metric $g$, it moves less; in cheap directions, it moves more.
The gradient does not point in the ``direction of largest partial derivatives,'' but rather in the ``direction of steepest descent of $f$ per unit cost.''

\medskip
\textbf{Three equivalent geometric meanings of the gradient:}

Let $\|v\|_g = 1$ (a unit tangent vector in the metric $g$). Then:
\begin{enumerate}
\item The direction of $\mathrm{grad}\,f$ is the direction that maximizes $df(v)$ (finding the steepest direction in the $g$-unit ball)
\item The direction of $\mathrm{grad}\,f$ is the $g$-normal direction to the level set $\{f=c\}$
\item $\|\mathrm{grad}\,f\|_g = \max_{\|v\|_g=1} df(v)$
\end{enumerate}

\begin{intuition}
\textbf{Preview: why are the Wasserstein gradient and the $L^2$ gradient different?}

For the same functional $\mathcal{F}(\rho)$ and the same functional derivative $\frac{\delta\mathcal{F}}{\delta\rho}$, but:
\begin{itemize}
\item $L^2$ metric $\langle u,v\rangle_{L^2} = \int u\,v\,dx$
$\;\Rightarrow\;$ $\mathrm{grad}_{L^2}\mathcal{F} = \frac{\delta\mathcal{F}}{\delta\rho}$ (the $g=I$ case)
\item Wasserstein metric $\langle\xi,\eta\rangle_\rho = \int\rho\,\xi\cdot\eta\,dx$
$\;\Rightarrow\;$ $\mathrm{grad}_W\mathcal{F} = \nabla\frac{\delta\mathcal{F}}{\delta\rho}$ ($g\neq I$, with an extra $\nabla$)
\end{itemize}

This will be derived in detail in Section~5.
\end{intuition}

\medskip
\textbf{(7) Gradient Flow}

\begin{definition}[Gradient flow]
Let $(M,g)$ be a Riemannian manifold and $f:M\to\R$ smooth. The \textbf{gradient flow} of $f$ is a curve $\gamma(t)$ satisfying
\[
\dot\gamma(t) = -\mathrm{grad}\,f\big|_{\gamma(t)}
\]
That is: at each instant, move in the direction of steepest descent of $f$.
\end{definition}

\begin{itemize}
\item In $\R^n$ (standard metric): $\dot x = -\nabla f(x)$ (continuous-time gradient descent)
\item On a Riemannian manifold: $\dot\gamma = -\mathrm{grad}\,f = -(df)^\sharp$ (``rolling downhill'' along the surface)
\item In Wasserstein space: the gradient flow yields a PDE (the Fokker--Planck equation)
\end{itemize}

Key property of gradient flow: $f$ strictly decreases along the flow---$\frac{d}{dt}f(\gamma(t)) = -\|\mathrm{grad}\,f\|_g^2 \leq 0$.

\medskip
\textbf{FAQ 1: Is $df$ a complete object on its own? Or does it need a direction to be defined?}

$df_p$ is itself a \textbf{fully defined object}---it is an element of $T_p^*M$ (a cotangent vector) and does not need a ``given direction'' to exist.
It simultaneously encodes information about \textbf{all directions}: given any $v\in T_pM$, it outputs $df_p(v)\in\R$.

Analogy: in $\R^n$, $df_p$ corresponds to the row vector $\bigl(\frac{\partial f}{\partial x^1},\ldots,\frac{\partial f}{\partial x^n}\bigr)$.
This row vector is already completely determined; no direction needs to be specified.
Its ``action'' is multiplication with a column vector $v$: $df_p(v) = \sum_i\frac{\partial f}{\partial x^i}v^i$.

\begin{center}
\renewcommand{\arraystretch}{1.3}
\begin{tabular}{@{}lll@{}}
\toprule
\textbf{Object} & \textbf{What is it} & \textbf{What is needed to define it} \\
\midrule
$df_p\in T_p^*M$ & Linear functional (row vector) & Only $f$ and the manifold structure \\
$df_p(v)\in\R$ & A number (directional derivative) & $f$ and a direction $v$ \\
$\mathrm{grad}\,f\in T_pM$ & Tangent vector (column vector) & $f$ and a metric $g$ \\
\bottomrule
\end{tabular}
\end{center}

\medskip
\textbf{FAQ 2: Is $\mathrm{grad}\,f$ a ``direction''?}

Yes. $\mathrm{grad}\,f\big|_p \in T_pM$ is a tangent vector---it lives in the tangent space, so it has both a \textbf{direction} and a \textbf{magnitude}.

Specifically:
\begin{itemize}
\item $df_p\in T_p^*M$ is a cotangent vector (a ``measuring instrument''); it is \textbf{not} itself a direction
\item $\mathrm{grad}\,f\big|_p = (df_p)^\sharp \in T_pM$ is a tangent vector (an ``arrow''); it \textbf{is} a direction
\end{itemize}

The meaning of this direction: under the metric $g$, it is the direction of fastest increase of $f$ (the direction of $v$ that maximizes $df(v)/\|v\|_g$).

So in the gradient flow $\dot\gamma(t) = -\mathrm{grad}\,f$, both sides of the equation are tangent vectors in $T_{\gamma(t)}M$:
``the velocity of the curve'' $=$ ``the negative gradient direction''---two vectors in the same tangent space being equal.

\medskip
\textbf{FAQ 3: Why does $\frac{d}{dt}f(\gamma(t)) = -\|\mathrm{grad}\,f\|_g^2$ involve a square?}

Derivation: 

\textbf{Step 1.} Chain rule (for functions on manifolds):
\[
\frac{d}{dt}f(\gamma(t)) = df_{\gamma(t)}\bigl(\dot\gamma(t)\bigr)
\]

\textbf{Step 2.} Substitute the gradient flow definition $\dot\gamma = -\mathrm{grad}\,f$:
\[
= df\bigl(-\mathrm{grad}\,f\bigr) = -df\bigl(\mathrm{grad}\,f\bigr)
\]

\textbf{Step 3.} Use the definition of the gradient $g(\mathrm{grad}\,f,\,v) = df(v)$, taking $v = \mathrm{grad}\,f$:
\[
df(\mathrm{grad}\,f) = g(\mathrm{grad}\,f,\,\mathrm{grad}\,f) = \|\mathrm{grad}\,f\|_g^2
\]

\textbf{Conclusion: }
\[
\frac{d}{dt}f(\gamma(t)) = -\|\mathrm{grad}\,f\|_g^2 \leq 0
\]

The reason a square appears: \textbf{the gradient plays two roles simultaneously}---it is both the velocity (via $\dot\gamma = -\mathrm{grad}\,f$) and the ``metric dual'' for the directional derivative (via $df(v) = g(\mathrm{grad}\,f, v)$).
When $v$ happens to equal $\mathrm{grad}\,f$ itself, $g(\mathrm{grad}\,f, \mathrm{grad}\,f) = \|\mathrm{grad}\,f\|^2$.

Analogy in $\R^n$: $\frac{d}{dt}f(x(t)) = \nabla f\cdot\dot x = \nabla f\cdot(-\nabla f) = -|\nabla f|^2$.

\medskip
\textbf{(8) Summary---the logical chain of all concepts:}

\[
\boxed{\text{Manifold $M$} \xrightarrow[\text{smooth structure}]{\text{defines}} T_pM,\,T_p^*M \xrightarrow[\text{choose $g$}]{\text{extra structure}} \sharp,\flat \xrightarrow[\text{raise index}]{df\mapsto(df)^\sharp} \mathrm{grad}\,f \xrightarrow[\text{negative direction}]{-\mathrm{grad}\,f} \text{gradient flow}}
\]

Key insight: \textbf{The gradient and gradient flow depend on the choice of metric.}
Different metrics $\Rightarrow$ different $\sharp$ $\Rightarrow$ different gradients $\Rightarrow$ different gradient flow PDEs.

This is why the same free energy functional $\mathcal{F}$, under the $L^2$ metric gives the heat equation, while under the Wasserstein metric gives the Fokker--Planck equation.

\section{Analysis Tools: Integration by Parts and du Bois-Reymond Lemma}\label{app:analysis}

This appendix collects the analytical tools used repeatedly throughout the text: the family of integration by parts formulas and the du Bois-Reymond lemma.

\subsection{Integration by Parts}

The essence of integration by parts is \textbf{transferring a derivative from one function to another}, at the cost of introducing a minus sign (and boundary terms).

\subsubsection{One-dimensional case}

\begin{theorem}[One-dimensional integration by parts]
Let $f, g \in C^1([a,b])$. Then:
\[
\int_a^b f'(x)\,g(x)\,dx = \big[f(x)g(x)\big]_a^b - \int_a^b f(x)\,g'(x)\,dx
\]
If $f$ or $g$ vanishes at the endpoints (e.g., has compact support), then the boundary terms vanish:
\[
\int_a^b f'\,g\,dx = -\int_a^b f\,g'\,dx
\]
\end{theorem}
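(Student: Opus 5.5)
The plan is to reduce the statement to the fundamental theorem of calculus applied to a product, with the product rule for derivatives doing all the work. Since $f,g\in C^1([a,b])$, the product $fg$ is again $C^1$ on $[a,b]$, and its derivative is
\[
(fg)'(x)=f'(x)\,g(x)+f(x)\,g'(x).
\]
This derivative is continuous, being a sum of products of continuous functions. Continuity makes every integral in the statement a well-defined Riemann integral over the compact interval $[a,b]$.

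The first step integrates the product-rule identity from $a$ to $b$. By the fundamental theorem of calculus, applied to the $C^1$ function $fg$, the left-hand side becomes
\[
\int_a^b (fg)'\,dx=\big[f(x)g(x)\big]_a^b .
\]
By linearity of the integral, the right-hand side splits as $\int_a^b f'g\,dx+\int_a^b fg'\,dx$. Rearranging gives the first displayed formula.

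The second step handles the boundary-free version. If $f$ or $g$ vanishes at both endpoints $a$ and $b$, then the product $f(x)g(x)$ is zero at $x=a$ and at $x=b$, so the bracket term $[fg]_a^b$ is zero. This covers the compact-support case: if the support lies inside $(a,b)$, both functions vanish near the endpoints. What remains is exactly $\int_a^b f'g\,dx=-\int_a^b fg'\,dx$.

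There is no real obstacle here. The only point needing care is the exact meaning of the hypothesis "$f$ or $g$ vanishes at the endpoints." What the proof actually requires is that the product $fg$ vanishes at $a$ and at $b$. I will read the hypothesis in that sense, which also allows $f$ to vanish at one endpoint and $g$ at the other. I will also make sure that continuity of $(fg)'$ is stated explicitly, because the fundamental theorem of calculus is being invoked in its form for continuous derivatives.
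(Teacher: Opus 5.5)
Your proposal is correct and follows the paper's own argument exactly: integrate the product rule $(fg)'=f'g+fg'$ over $[a,b]$, use the fundamental theorem of calculus to turn the left side into $[fg]_a^b$, and rearrange. One small slip: in the compact-support case only the compactly supported factor needs to vanish near the endpoints, not both, but that already makes $fg$ vanish at $a$ and $b$, which is all you use.
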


\textbf{Intuition: } Integrate the product rule $(fg)' = f'g + fg'$ on both sides. The left-hand side becomes the boundary term $[fg]_a^b$; rearranging yields integration by parts.

\subsubsection{Higher-dimensional case: divergence theorem version}

\begin{theorem}[Integration by Parts in Divergence Form / Green's First Identity]
Let $\varphi$ be a smooth function with compact support (or vanishing on $\partial\Omega$), and let $\mathbf{F}$ be a smooth vector field. Then:
\begin{equation}\label{eq:ibp_div}
\int_{\R^d} \varphi\,(\nabla\cdot\mathbf{F})\,dx = -\int_{\R^d} \nabla\varphi\cdot\mathbf{F}\,dx
\end{equation}
\end{theorem}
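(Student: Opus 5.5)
The plan is to reduce the statement to the one-dimensional fundamental theorem of calculus via the product rule, exactly as in the one-dimensional case above. The first step is the pointwise product rule for divergence,
\[
\nabla\cdot(\varphi\mathbf{F}) = \nabla\varphi\cdot\mathbf{F} + \varphi\,(\nabla\cdot\mathbf{F}).
\]
This follows by writing $\nabla\cdot(\varphi\mathbf{F})=\sum_{i=1}^d \partial_i(\varphi F_i)$ and applying the ordinary one-variable product rule to each summand $\partial_i(\varphi F_i)=(\partial_i\varphi)F_i+\varphi\,\partial_i F_i$. Integrating over $\R^d$ then shows that the claimed identity is equivalent to
\[
\int_{\R^d}\nabla\cdot(\varphi\mathbf{F})\,dx=0.
\]

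Next I will prove this vanishing statement for $\mathbf{G}:=\varphi\mathbf{F}$. Because $\varphi$ has compact support, $\mathbf{G}$ is smooth and supported in some cube $[-R,R]^d$, so all integrals are over a bounded region of continuous integrands. For each $i$, I apply Fubini to integrate $\partial_i G_i$ first in the variable $x_i$. Since $G_i$ vanishes at $x_i=\pm R$, the one-dimensional fundamental theorem of calculus gives
\[
\int_{-R}^{R}\partial_i G_i\,dx_i=G_i\big|_{x_i=R}-G_i\big|_{x_i=-R}=0.
\]
Summing over $i$ yields $\int\nabla\cdot\mathbf{G}\,dx=0$. Rearranging the integrated product rule then gives $\int\varphi\,(\nabla\cdot\mathbf{F})\,dx=-\int\nabla\varphi\cdot\mathbf{F}\,dx$.

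For the variant in which $\varphi$ only vanishes on $\partial\Omega$ for a bounded smooth domain $\Omega$, I will instead invoke the divergence theorem already used in Section~2:
\[
\int_\Omega\nabla\cdot(\varphi\mathbf{F})\,dx=\int_{\partial\Omega}\varphi\,\mathbf{F}\cdot\hat n\,dS=0.
\]

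The only real obstacle is justifying the vanishing of the boundary terms and the use of Fubini. Compact support of $\varphi$ handles both: it removes any need for decay assumptions on $\mathbf{F}$, since the product $\varphi\mathbf{F}$ is compactly supported even when $\mathbf{F}$ is not.
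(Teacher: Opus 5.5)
Your proof is correct and follows the paper's argument: integrate the product rule $\nabla\cdot(\varphi\mathbf{F})=\nabla\varphi\cdot\mathbf{F}+\varphi\,\nabla\cdot\mathbf{F}$, then show that the left side integrates to zero. The only difference is in the compactly supported case, where you prove $\int_{\R^d}\nabla\cdot(\varphi\mathbf{F})\,dx=0$ directly via Fubini and the one-dimensional fundamental theorem of calculus on a cube, whereas the paper cites the divergence theorem with a vanishing boundary integral; this makes your version more self-contained but does not change its structure.
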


\textbf{Derivation: } Integrate the product rule $\nabla\cdot(\varphi\mathbf{F}) = \nabla\varphi\cdot\mathbf{F} + \varphi\,\nabla\cdot\mathbf{F}$ on both sides.
The left-hand side becomes the boundary integral $\oint\varphi\mathbf{F}\cdot\mathbf{n}\,dS = 0$ by the divergence theorem (since $\varphi$ has compact support); rearranging gives the result.

\textbf{Typical usage in the main text:}
\begin{itemize}
\item $\mathbf{F} = \rho\nabla V$: $\displaystyle\int\nabla\varphi\cdot(\rho\nabla V)\,dx = -\int\varphi\,\nabla\cdot(\rho\nabla V)\,dx$
\item $\mathbf{F} = \rho\,\eta$ ($\eta$ is a perturbation vector field): $\displaystyle\int f\,\nabla\cdot(\rho\eta)\,dx = -\int\nabla f\cdot(\rho\eta)\,dx$
\end{itemize}

\subsubsection{Transferring the Laplacian (two integrations by parts)}

\begin{theorem}[Green's second identity]
Let $\varphi$ and $g$ be smooth functions with $\varphi$ compactly supported. Then:
\begin{equation}\label{eq:ibp_laplacian}
\int_{\R^d} (\Delta\varphi)\,g\,dx = \int_{\R^d} \varphi\,(\Delta g)\,dx
\end{equation}
\end{theorem}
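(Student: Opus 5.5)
The plan is to apply the divergence-form integration by parts of Theorem~\eqref{eq:ibp_div} twice, moving one derivative off $\varphi$ and onto $g$ each time. The boundary terms vanish because $\varphi$ has compact support. No other input is needed beyond the product rule and the divergence theorem.

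\textbf{First step.} Write $\Delta\varphi = \nabla\cdot(\nabla\varphi)$ and consider the identity
\[
\nabla\cdot(g\nabla\varphi) = \nabla g\cdot\nabla\varphi + g\,\Delta\varphi.
\]
The vector field $g\nabla\varphi$ is smooth and compactly supported, since $\nabla\varphi$ vanishes outside $\operatorname{supp}\varphi$. Integrating over a large ball containing that support, the divergence theorem makes the left side vanish. This gives
\[
\int(\Delta\varphi)\,g\,dx = -\int\nabla\varphi\cdot\nabla g\,dx.
\]
Equivalently, this is \eqref{eq:ibp_div} with the roles arranged so that the compactly supported factor carries the boundary term.

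\textbf{Second step.} Apply \eqref{eq:ibp_div} directly, with the compactly supported scalar $\varphi$ and the smooth vector field $\mathbf{F}=\nabla g$:
\[
-\int\nabla\varphi\cdot\nabla g\,dx = \int\varphi\,\nabla\cdot(\nabla g)\,dx = \int\varphi\,\Delta g\,dx.
\]
Chaining the two equalities yields \eqref{eq:ibp_laplacian}. This is the same computation as Step~5 of the Fokker--Planck derivation, with $\rho_t$ replaced by $g$.

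\textbf{Main obstacle.} The only delicate point is justifying that the boundary terms vanish when $g$ itself is not compactly supported and may grow at infinity. I would handle this by noting that every integrand involved ($g\Delta\varphi$, $\nabla g\cdot\nabla\varphi$, $\varphi\Delta g$) is supported in the compact set $\operatorname{supp}\varphi$. All integrals are therefore over a bounded domain $\Omega\supset\operatorname{supp}\varphi$ with smooth boundary, on which $\varphi$ and $\nabla\varphi$ vanish identically. So no decay assumption on $g$ is needed; smoothness of $g$ ($C^2$ suffices) on a neighborhood of $\operatorname{supp}\varphi$ is enough.
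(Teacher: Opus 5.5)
Your proof is correct and follows the paper's own argument: two applications of the divergence-form integration by parts, first with $\nabla\varphi$ as the vector field and then with $\mathbf{F}=\nabla g$. Your first step goes through the compactly supported field $g\nabla\varphi$, which is slightly more careful than the paper. Its stated version of \eqref{eq:ibp_div} asks for the scalar factor to be compactly supported, and in that first application the scalar is $g$; your remark that no decay of $g$ is needed covers this.
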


\textbf{Derivation: } Apply \eqref{eq:ibp_div} twice:
\begin{align*}
\int(\Delta\varphi)\,g\,dx &= \int(\nabla\cdot\nabla\varphi)\,g\,dx = -\int\nabla\varphi\cdot\nabla g\,dx \quad\text{(first application, minus sign)}\\
&= +\int\varphi\,(\nabla\cdot\nabla g)\,dx = \int\varphi\,\Delta g\,dx \quad\text{(second application, $-\times-=+$)}
\end{align*}

\textbf{Intuition: } The second-order derivative can ``jump'' from one function to another without changing sign---because two integrations by parts produce two minus signs that cancel.

\subsubsection{Integration by parts in time}

The same procedure applies to integration in time. Let $\varphi(t,x)$ vanish at $t=0$ and $t=T$:
\[
\int_0^T\!\int \varphi\,\partial_t\rho\,dx\,dt = -\int_0^T\!\int (\partial_t\varphi)\,\rho\,dx\,dt
\]
This is used when deriving the weak form of the continuity equation.

\subsubsection{Formula summary table}

\begin{center}
\renewcommand{\arraystretch}{1.5}
\begin{tabular}{@{}lll@{}}
\toprule
\textbf{Formula} & \textbf{Condition} & \textbf{Appears in} \\
\midrule
$\displaystyle\int\varphi\,\nabla\cdot\mathbf{F} = -\int\nabla\varphi\cdot\mathbf{F}$ & $\varphi$ compactly supported & FP derivation Step 5 \\[6pt]
$\displaystyle\int(\Delta\varphi)\,g = \int\varphi\,\Delta g$ & $\varphi$ compactly supported & FP derivation Term 2 \\[6pt]
$\displaystyle\int\varphi\,\partial_t\rho = -\int(\partial_t\varphi)\,\rho$ & $\varphi|_{t=0,T}=0$ & Weak form of continuity eq. \\[6pt]
$\displaystyle\int f\,\nabla\cdot(\rho\eta) = -\int\nabla f\cdot(\rho\eta)$ & $\rho\eta$ decays at $\infty$ & Wasserstein gradient derivation \\
\bottomrule
\end{tabular}
\end{center}

\subsection{Du Bois-Reymond Lemma (Fundamental Lemma of the Calculus of Variations)}

\begin{theorem}[Du Bois-Reymond Lemma / Fundamental Lemma of Calculus of Variations]\label{thm:dubois}
Let $h:\R^d\to\R$ be a locally integrable function. If for all compactly supported smooth functions $\varphi\in C_c^\infty(\R^d)$ we have:
\begin{equation}\label{eq:dubois}
\int_{\R^d} h(x)\,\varphi(x)\,dx = 0
\end{equation}
then $h(x) = 0$ almost everywhere (a.e.). If furthermore $h$ is continuous, then $h(x) = 0$ holds everywhere.
\end{theorem}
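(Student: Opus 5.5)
I would prove the du Bois-Reymond lemma by mollification. The idea is to test $h$ against a smoothed indicator of a set and conclude that its integral over every set vanishes. Concretely, let $\eta\in C_c^\infty(\R^d)$ be a standard mollifier: $\eta\ge 0$, supported in the unit ball, $\int\eta=1$. Put $\eta_\epsilon(x)=\epsilon^{-d}\eta(x/\epsilon)$. For each fixed $x$, the function $y\mapsto\eta_\epsilon(x-y)$ belongs to $C_c^\infty(\R^d)$, so the hypothesis gives
\[
h_\epsilon(x):=(h*\eta_\epsilon)(x)=\int h(y)\,\eta_\epsilon(x-y)\,dy=0\quad\text{for every }x\text{ and every }\epsilon>0 .
\]
So every mollification of $h$ vanishes identically.

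The second step passes to the limit $\epsilon\to 0$. I would invoke the standard fact that for $h\in L^1_{\mathrm{loc}}$ one has $h_\epsilon\to h$ in $L^1_{\mathrm{loc}}$, and in fact pointwise at every Lebesgue point of $h$. This follows from the bound
\[
|h_\epsilon(x)-h(x)|\le \|\eta\|_\infty\,\epsilon^{-d}\!\int_{B_\epsilon(x)}|h(y)-h(x)|\,dy ,
\]
together with the Lebesgue differentiation theorem, which says almost every point is a Lebesgue point. Since $h_\epsilon\equiv 0$, we get $h(x)=0$ at every Lebesgue point, hence almost everywhere.

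\textbf{Main obstacle.} This limit step is the only nontrivial one, because it rests on the Lebesgue differentiation theorem. If I want to avoid that theorem, I would argue instead by density. Fix a ball $B$. Since $h\in L^1(B)$, the function $\mathrm{sgn}(h)\mathbf 1_B$ is bounded and can be approximated almost everywhere by smooth functions $\varphi_n\in C_c^\infty$ with $|\varphi_n|\le 1$, supported in a fixed larger ball; for instance, mollify $\mathrm{sgn}(h)\mathbf 1_{B}$ and pass to an a.e.-convergent subsequence. Dominated convergence, with dominating function $|h|$ on the larger ball, then gives
\[
0=\int h\varphi_n\to\int_B|h| ,
\]
so $h=0$ almost everywhere on every ball.

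\textbf{Continuous case.} Suppose $h$ is continuous and $h(x_0)\neq 0$, say $h(x_0)>0$. By continuity there is a ball $B_r(x_0)$ on which $h>h(x_0)/2$. Take $\varphi\ge 0$ a bump supported in $B_r(x_0)$ with $\varphi(x_0)>0$. Then $\int h\varphi>0$, which contradicts the hypothesis. This is also the quick direct argument that Step 3 of Section~2 actually uses.
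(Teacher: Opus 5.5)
Your proof is correct, and it goes further than the paper's. The paper's proof sketch is only the bump-function contradiction in your last paragraph: assume $h(x_0)>0$, use continuity to get $h>0$ on a ball, and test against a nonnegative bump.

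That argument needs continuity in an essential way. For a merely locally integrable $h$, point values are not well defined, so ``$h(x_0)>0$ implies $h>0$ near $x_0$'' has no meaning. The paper therefore proves only the continuous half of the statement. That is all it uses, since in the Fokker--Planck derivation and the JKO optimality condition the relevant functions are assumed smooth.

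Your mollification argument ($h*\eta_\epsilon\equiv 0$, then let $\epsilon\to 0$) does prove the a.e.\ claim for $h\in L^1_{\mathrm{loc}}$, at the cost of some measure theory. So does your density argument (test against smooth approximations of $\mathrm{sgn}(h)\mathbf 1_B$ and apply dominated convergence).

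The ``main obstacle'' you flag is milder than you suggest. The convergence $h_\epsilon\to h$ in $L^1_{\mathrm{loc}}$ that you already cite follows from continuity of translations in $L^1$, not from Lebesgue differentiation. Combined with $h_\epsilon\equiv 0$, it gives $\int_K|h|=\lim_{\epsilon\to 0}\int_K|h-h_\epsilon|=0$ for every compact $K$ directly. Neither the Lebesgue-point statement nor the separate density argument is needed.
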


\subsubsection{Intuitive understanding}

\textbf{Finite-dimensional analogy: } If a vector $v\in\R^n$ has zero inner product with every vector ($\langle v, w\rangle = 0\;\forall w$), then $v=0$.

The Du Bois-Reymond lemma is the \textbf{infinite-dimensional version} of this fact:
\begin{itemize}
\item ``vector'' $\to$ function $h(x)$
\item ``inner product'' $\to$ integral $\int h\varphi\,dx$
\item ``zero inner product with all vectors'' $\to$ ``zero integral against all test functions $\varphi$''
\item Conclusion: $h = 0$ (a.e.)
\end{itemize}

\subsubsection{Proof sketch}

Proof by contradiction: suppose $h(x_0) > 0$ at some point. By continuity, $h > 0$ on some small ball $B_\epsilon(x_0)$ around $x_0$.
Choose a nonnegative $\varphi$ supported inside $B_\epsilon(x_0)$ (e.g., a smooth bump function). Then:
\[
\int h\,\varphi\,dx = \int_{B_\epsilon(x_0)} h\,\varphi\,dx > 0
\]
This contradicts the hypothesis. Hence $h$ cannot be positive at any point. By the same argument it cannot be negative. Therefore $h=0$.

\subsubsection{Constrained version (used in the main text)}

What appears in Step 1 of the JKO derivation in the main text is a \textbf{constrained version}:

\begin{theorem}[Version with mass conservation constraint]
Let $h$ be a continuous function. If for all smooth perturbations $\delta\rho$ satisfying $\int\delta\rho\,dx = 0$ we have:
\[
\int h(x)\,\delta\rho(x)\,dx = 0
\]
then $h(x) = \mathrm{const}$ (a constant).
\end{theorem}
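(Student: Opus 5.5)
The plan is to reduce this constrained statement to the unconstrained du Bois-Reymond lemma (Theorem~\ref{thm:dubois}) by projecting arbitrary test functions onto the mean-zero subspace. Fix once and for all a bump function $\chi\in C_c^\infty(\R^d)$ with $\int\chi\,dx=1$. Given an arbitrary $\varphi\in C_c^\infty(\R^d)$, form
\[
\delta\rho:=\varphi-\Bigl(\int\varphi\,dx\Bigr)\chi ,
\]
which is smooth, compactly supported, and satisfies $\int\delta\rho\,dx=0$. It is therefore an admissible perturbation, and the hypothesis gives $\int h\,\delta\rho\,dx=0$.

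Set $c:=\int h\chi\,dx$; this is finite because $h$ is continuous and $\chi$ has compact support. Expanding the identity above yields
\[
\int h\varphi\,dx-\Bigl(\int\varphi\,dx\Bigr)c=0,\qquad\text{i.e.}\qquad\int (h-c)\,\varphi\,dx=0\quad\forall\varphi\in C_c^\infty(\R^d).
\]
The function $h-c$ is continuous, so Theorem~\ref{thm:dubois} applies and gives $h-c=0$ everywhere. Hence $h\equiv c$ is constant. This $c$ is exactly the Lagrange multiplier $C$ that appears in \eqref{eq:jko_opt}.

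The one point that needs care is the class of admissible perturbations. The proof only requires that every mean-zero $C_c^\infty$ function be allowed as a $\delta\rho$. I would state explicitly that ``smooth perturbations'' is read to include these compactly supported mean-zero functions, so that the integrals are finite without any growth assumption on $h$. Beyond that bookkeeping, the argument has no real obstacle: the construction of $\delta\rho$ is the entire idea, and the rest is a direct application of the unconstrained lemma.
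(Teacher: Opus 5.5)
Your argument is correct, and it takes a different route from the paper's.

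The paper works pointwise. For arbitrary $x_1,x_2$ it uses $\delta\rho=\varphi_1-\varphi_2$, with nonnegative unit-mass bumps concentrating at $x_1$ and $x_2$, and obtains $\int h\varphi_1\,dx=\int h\varphi_2\,dx$. It then shrinks the supports and uses continuity of $h$ to conclude $h(x_1)=h(x_2)$. That proof is self-contained (it never calls Theorem~\ref{thm:dubois}), but it carries out an approximate-identity limit by hand and needs continuity essentially at that step.

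You instead fix one reference bump $\chi$ and project every test function onto the mean-zero subspace. This is the linear-algebra fact that a functional vanishing on the kernel of $\varphi\mapsto\int\varphi\,dx$ is a scalar multiple of it: $\int h\varphi\,dx=c\int\varphi\,dx$ with $c=\int h\chi\,dx$. The paper's identity is the special case $\varphi=\varphi_1$, $\chi=\varphi_2$, taken before the limit.

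Your version buys three things:
\begin{itemize}
\item The Lagrange-multiplier structure is explicit.
\item No limit is taken in this step, since all localization is delegated to the unconstrained lemma.
\item The proof extends verbatim to merely locally integrable $h$, giving $h=c$ a.e.\ from the a.e.\ part of Theorem~\ref{thm:dubois}. The pointwise comparison would instead need Lebesgue differentiation there.
\end{itemize}

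The cost is dependence on Theorem~\ref{thm:dubois}, whose proof the paper only sketches for continuous $h$. Since $h-c$ is continuous, that is exactly the case you need.
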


\textbf{Why a constant rather than zero?}

Because the test functions $\delta\rho$ are constrained to satisfy $\int\delta\rho = 0$---they cannot be ``arbitrary'' but must have ``zero total mass.''
This means we lose one ``probing direction'' (the constant-function direction).
\begin{itemize}
\item Unconstrained: $\int h\varphi = 0\;\forall\varphi$ $\implies$ $h=0$
\item Constrained $\int\delta\rho=0$: $\int h\,\delta\rho = 0\;\forall\,\delta\rho\text{ s.t. }\int\delta\rho=0$ $\implies$ $h=\text{const}$
\end{itemize}

\textbf{Proof: } For any two points $x_1, x_2$, take $\delta\rho = \varphi_1 - \varphi_2$, where $\varphi_i$ is a nonnegative smooth bump function centered at $x_i$ with $\int\varphi_i = 1$. Then $\int\delta\rho = 0$, and by hypothesis:
\[
\int h\,\varphi_1\,dx = \int h\,\varphi_2\,dx
\]
As the support of the bump functions shrinks to $x_i$ ($\varphi_i\to\delta_{x_i}$), continuity of $h$ gives $h(x_1) = h(x_2)$. Since $x_1,x_2$ are arbitrary, $h$ is constant.

\subsubsection{Where it is used in the main text}

\begin{center}
\renewcommand{\arraystretch}{1.4}
\begin{tabular}{@{}p{5cm}p{7.5cm}@{}}
\toprule
\textbf{Context} & \textbf{How it is used} \\
\midrule
Fokker-Planck derivation Step 6 & $\int\varphi[\partial_t\rho - \Delta\rho - \nabla\cdot(\rho\nabla V)]\,dx = 0\;\forall\varphi$ $\implies$ bracket $=0$ \\[4pt]
JKO optimality condition Step 1 & $\int h\,\delta\rho = 0$ for all $\delta\rho$ with $\int\delta\rho=0$ $\implies$ $h = \text{const}$ (Lagrange multiplier) \\[4pt]
Weak form of continuity equation & Uniqueness of weak solutions relies on this lemma \\
\bottomrule
\end{tabular}
\end{center}

\begin{intuition}
\textbf{One-sentence summary: }

The Du Bois-Reymond lemma is the ``starting tool'' of the calculus of variations---it lets us pass from integral equalities (weak/variational form) to pointwise equalities (strong form/PDE).

Without it, we can only say ``the weighted average is zero''; with it, we can say ``the function itself is zero (or constant).''
This is the bridge from ``integral equations'' to ``differential equations.''
\end{intuition}

\section{Gibbs--Boltzmann Distribution}\label{app:gibbs}

The form $\pi(x) = \frac{1}{Z}e^{-\beta V(x)}$ appears repeatedly across all corners of science---whenever a system strikes a balance between ``energy'' and ``randomness,'' its stationary distribution is the Gibbs--Boltzmann distribution.

\subsection{Core intuition}

At thermal equilibrium, low-energy states occur with \textbf{exponentially} higher probability than high-energy states.
The probability ratio between two states depends only on their energy difference:
\[
\frac{\pi(x_1)}{\pi(x_2)} = e^{-\beta(V(x_1)-V(x_2))}
\]
For every $k_BT$ unit of energy difference, the probabilities differ by a factor of $e\approx 2.7$.

\subsection{The role of temperature $T$---the single control knob}

\begin{itemize}
\item $T\to 0$ ($\beta\to\infty$): the distribution degenerates to a Dirac delta; the system ``freezes'' at the lowest-energy state
\item $T\to\infty$ ($\beta\to 0$): the distribution approaches uniform; the system completely ignores energy and wanders randomly
\item Finite $T$: a compromise between the two---low-energy states are ``preferred'' but not exclusively
\end{itemize}

Intuition: temperature controls the ratio of ``exploration'' to ``exploitation.'' Low temperature = pure exploitation (stay at the optimum); high temperature = pure exploration (wander everywhere).

\subsection{Applications across disciplines}

\begin{center}
\renewcommand{\arraystretch}{1.4}
\begin{tabular}{@{}p{3.2cm}p{4cm}p{5.2cm}@{}}
\toprule
\textbf{Field} & \textbf{What is $V(x)$} & \textbf{Specific application} \\
\midrule
Statistical mechanics & System Hamiltonian & Canonical ensemble, ideal gas velocity distribution (Maxwell dist.) \\
Machine learning & Learned energy function & Boltzmann machine, Energy-based models, contrastive learning \\
Bayesian inference & $-\log p(D|\theta)p(\theta)$ & Posterior $p(\theta|D)\propto e^{-V(\theta)}$ \\
Optimization & Objective function & Simulated annealing: gradually lower $T$ to concentrate distribution at extremum \\
Sampling/MCMC & $-\log(\text{target dist.})$ & Langevin dynamics, Hamiltonian MC \\
Diffusion generative models & Potential energy field & Fokker-Planck stationary dist. $=$ data distribution \\
Chemistry & Activation energy $E_a$ & Arrhenius reaction rate $\propto e^{-E_a/k_BT}$ \\
Biophysics & Free energy landscape & Protein folding, ion channel gating \\
\bottomrule
\end{tabular}
\end{center}

\subsection{Core connection to the main text}

The Fokker-Planck equation $\partial_t\rho = \Delta\rho + \nabla\cdot(\rho\nabla V)$ describes precisely the process by which the system evolves from an arbitrary initial distribution $\rho_0$ toward the Gibbs--Boltzmann equilibrium $\pi\propto e^{-V}$.

\begin{itemize}
\item The free energy $\mathcal{F}(\rho) = \KL(\rho\|\pi) + \text{const}$ measures ``how far from equilibrium''
\item The unique minimizer of $\mathcal{F}$ is $\pi$ ($\KL(\rho\|\pi)=0 \iff \rho=\pi$)
\item Each step of the JKO scheme minimizes $\mathcal{F}+\frac{1}{2\tau}W_2^2$, discretely approximating this approach to equilibrium
\end{itemize}

The endpoint of the entire story is this elegant $e^{-V}$.

\subsection{Why the exponential form?}

The $e^{-\beta V}$ form of the Gibbs distribution is not an arbitrary choice but is dictated by the \textbf{maximum entropy principle}:

Subject to the constraint that the mean energy $\langle V\rangle = \int V\,d\rho = E$ is fixed, the distribution that maximizes the entropy $-\int\rho\log\rho$ is $\rho\propto e^{-\beta V}$,
where $\beta$ is the Lagrange multiplier (which happens to correspond to the inverse temperature).

In other words: \textbf{among all distributions with mean energy $E$, the Gibbs distribution is the ``most random'' (most uncertain) one.}
Nature selects it because there is no reason to prefer a more ordered state---this is the statistical interpretation of the second law of thermodynamics.

\section{Fisher Information}\label{app:fisher}

The name ``Fisher information'' has \textbf{two different meanings} in mathematics, originating from statistics and information theory/PDE respectively.
Below we introduce each in turn, explain how they are unified, and finally present the relative version used in the main text.

\subsection{Fisher Information in Statistics (Parametric Version)}

\begin{definition}[Fisher Information --- Statistical Version]
Given a parametric family of probability distributions $\{p_\theta(x)\}_{\theta\in\R^k}$, the \textbf{Fisher information matrix} (Fisher, 1925) is defined as:
\begin{equation}
\mathcal{I}(\theta)_{ij} := \mathbb{E}_{x\sim p_\theta}\left[\frac{\partial\log p_\theta(x)}{\partial\theta_i}\cdot\frac{\partial\log p_\theta(x)}{\partial\theta_j}\right]
= -\mathbb{E}_{x\sim p_\theta}\left[\frac{\partial^2\log p_\theta(x)}{\partial\theta_i\partial\theta_j}\right]
\end{equation}
In the one-dimensional parameter case: $\mathcal{I}(\theta) = \mathbb{E}_{x\sim p_\theta}\left[\left(\frac{\partial\log p_\theta}{\partial\theta}\right)^2\right]$.
\end{definition}

\textbf{Intuition}: $\mathcal{I}(\theta)$ measures ``how much information the data carry about the parameter $\theta$.''
\begin{itemize}
\item $\mathcal{I}(\theta)$ large $\to$ changing $\theta$ causes a drastic change in the distribution $\to$ data can easily distinguish different $\theta$ $\to$ easy to estimate $\theta$
\item $\mathcal{I}(\theta)$ small $\to$ changing $\theta$ barely affects the distribution $\to$ data are insensitive to $\theta$ $\to$ hard to estimate
\item Cram\'er-Rao lower bound: the variance of any unbiased estimator $\hat\theta$ satisfies $\mathrm{Var}(\hat\theta)\geq \mathcal{I}(\theta)^{-1}$
\end{itemize}

\textbf{Example}: $p_\theta = \mathcal{N}(\theta, \sigma^2)$ (mean as parameter).
$\log p_\theta = -\frac{(x-\theta)^2}{2\sigma^2} + C$,
$\frac{\partial}{\partial\theta}\log p_\theta = \frac{x-\theta}{\sigma^2}$,
$\mathcal{I}(\theta) = \mathbb{E}\left[\frac{(x-\theta)^2}{\sigma^4}\right] = \frac{1}{\sigma^2}$.
The smaller the variance, the larger the Fisher information---the more ``concentrated'' the data, the easier it is to estimate the mean.

\subsection{Fisher Information in Information Theory/PDE (Distribution Version)}

\begin{definition}[Fisher Information --- Distribution Version / de Bruijn Version]
Given a probability density $\rho$ (not depending on any parameter), its \textbf{Fisher information} is defined as:
\begin{equation}
I(\rho) := \int_{\R^d}\rho(x)\,|\nabla_x\log\rho(x)|^2\,dx = \int_{\R^d}\frac{|\nabla\rho(x)|^2}{\rho(x)}\,dx
\end{equation}
\end{definition}

\textbf{Intuition}: $I(\rho)$ measures the ``spatial irregularity'' of the density $\rho$.
\begin{itemize}
\item The smoother/flatter $\rho$ is $\to$ the smaller $\nabla\log\rho$ $\to$ the smaller $I(\rho)$
\item The sharper/more rapidly varying $\rho$ is $\to$ the larger $\nabla\log\rho$ $\to$ the larger $I(\rho)$
\item Uniform distribution: $I = 0$ (completely flat)
\item Dirac delta: $I = \infty$ (infinitely sharp)
\end{itemize}

\textbf{Relation to the heat equation} (de Bruijn identity): if $\rho_t$ solves the heat equation $\partial_t\rho = \Delta\rho$, then:
\[
\frac{d}{dt}H(\rho_t) = -I(\rho_t) \qquad\text{(rate of entropy change $=$ negative Fisher information)}
\]
where $H(\rho) = -\int\rho\log\rho$ is the entropy. Diffusion flattens the distribution (entropy increases), and the rate is controlled by the Fisher information.

\subsection{Unification of the two versions}

These two forms of Fisher information look different; they are unified as follows:

\textbf{Key observation}: take the parametric family to be the ``translation family'' $p_\theta(x) = \rho(x-\theta)$. Then:
\[
\frac{\partial}{\partial\theta}\log p_\theta(x)\bigg|_{\theta=0} = -\nabla_x\log\rho(x)
\]
Substituting into the statistical version of the Fisher information:
\[
\mathcal{I}(0) = \mathbb{E}_{x\sim\rho}\left[|\nabla_x\log\rho(x)|^2\right] = I(\rho)
\]
\textbf{The two coincide!} The distribution-version Fisher information is precisely the statistical-version Fisher information evaluated at the ``translation parameter.''

More generally:
\begin{itemize}
\item The statistical version measures ``how easily different $\theta$ can be distinguished in parameter space''
\item The distribution version measures ``how rapidly the density varies across different locations $x$ in physical space''
\item When ``the parameter is the location,'' the two are equivalent
\end{itemize}

\textbf{Derivation details}: why does $\frac{\partial}{\partial\theta}\log p_\theta(x)\big|_{\theta=0} = -\nabla_x\log\rho(x)$?

Let $p_\theta(x) = \rho(x-\theta)$, so $\log p_\theta(x) = \log\rho(x-\theta)$.
\[
\frac{\partial}{\partial\theta}\log\rho(x-\theta) = -\nabla_y\log\rho(y)\big|_{y=x-\theta}\cdot 1 = -\nabla\log\rho(x-\theta)
\]
At $\theta=0$: $= -\nabla\log\rho(x)$.

So the statistical version becomes:
\[
\mathcal{I}(0) = \mathbb{E}_{x\sim\rho}\left[(-\nabla\log\rho(x))^2\right] = \int\rho\,|\nabla\log\rho|^2\,dx = I(\rho) \quad\checkmark
\]

\subsection{Relative Fisher Information}

\begin{definition}[Relative Fisher Information]
Given probability densities $\rho$ and a target distribution $\pi$, the \textbf{relative Fisher information} is defined as:
\begin{equation}
I(\rho\|\pi) := \int\rho(x)\left|\nabla\log\frac{\rho(x)}{\pi(x)}\right|^2 dx
= \int\rho(x)\left|\nabla\log\rho(x) - \nabla\log\pi(x)\right|^2 dx
\end{equation}
\end{definition}

When $\pi(x) \propto e^{-V(x)}$, we have $\nabla\log\pi = -\nabla V$, so:
\[
I(\rho\|\pi) = \int\rho\,|\nabla\log\rho + \nabla V|^2\,dx = \int\rho\,|s_\rho(x) - s_\pi(x)|^2\,dx
\]
where $s_\rho = \nabla\log\rho$ and $s_\pi = \nabla\log\pi = -\nabla V$.

\textbf{Relation to ordinary Fisher information}: when $\pi$ is the uniform distribution ($V=0$):
\[
I(\rho\|\pi) = \int\rho\,|\nabla\log\rho|^2\,dx = I(\rho) \qquad\text{(reduces to the ordinary version)}
\]

\textbf{Intuition}: $I(\rho\|\pi)$ measures the discrepancy between the score of $\rho$ and the score of $\pi$.
When $\rho=\pi$, the two scores coincide and $I=0$; the further $\rho$ deviates from $\pi$, the larger $I$ becomes.

\subsection{Three identities}

\begin{center}
\renewcommand{\arraystretch}{1.4}
\begin{tabular}{@{}p{3.5cm}p{9cm}@{}}
\toprule
\textbf{Identity} & \textbf{Meaning} \\
\midrule
Rate of KL descent & $\frac{d}{dt}\KL(\rho_t\|\pi) = -I(\rho_t\|\pi)$ \\[3pt]
Norm of Wasserstein gradient & $I(\rho\|\pi) = \|\mathrm{grad}_W\mathcal{F}(\rho)\|_\rho^2 = \int\rho\,|\mathrm{grad}_W\mathcal{F}|^2\,dx$ \\[3pt]
Score deviation from target & $I(\rho\|\pi) = \mathbb{E}_{x\sim\rho}\left[|s_\rho(x) - s_\pi(x)|^2\right]$ \\
\bottomrule
\end{tabular}
\end{center}

\begin{itemize}
\item \textbf{Measures ``how far from equilibrium'' (dynamic sense)}: $I(\rho\|\pi)$ quantifies how fast $\rho$ is approaching $\pi$. $I=0$ means equilibrium has been reached; large $I$ means the system is still far from equilibrium.

\item \textbf{$L^2$ discrepancy of scores}: $I(\rho\|\pi) = \int\rho\,|s_\rho - s_\pi|^2\,dx$. This is precisely the loss function used in score matching! Training a score-based diffusion model is essentially minimizing the Fisher information.

\item \textbf{``Steepness'' of the gradient}: by analogy with $|\nabla f|^2$ in Euclidean gradient flow, $I(\rho\|\pi)$ measures the Wasserstein ``slope'' of the free energy at the current $\rho$.
\end{itemize}

\subsection{Information inequalities}

\begin{itemize}
\item \textbf{Log-Sobolev inequality} (LSI): if $\pi$ satisfies an LSI with constant $\lambda > 0$, then:
\[
I(\rho\|\pi) \geq 2\lambda\,\KL(\rho\|\pi)
\]
Combined with $\frac{d}{dt}\KL = -I$, this yields a Gronwall inequality:
\[
\frac{d}{dt}\KL(\rho_t\|\pi) \leq -2\lambda\,\KL(\rho_t\|\pi)
\implies \KL(\rho_t\|\pi)\leq e^{-2\lambda t}\,\KL(\rho_0\|\pi)
\]
i.e., \textbf{exponential convergence} to equilibrium.

\item \textbf{Talagrand inequality}: $W_2^2(\rho,\pi) \leq \frac{2}{\lambda}\KL(\rho\|\pi)$

\item \textbf{Information hierarchy} (from strongest to weakest):
\[
I(\rho\|\pi) \;\xrightarrow{\text{LSI}}\; \KL(\rho\|\pi) \;\xrightarrow{\text{Talagrand}}\; W_2^2(\rho,\pi)
\]
Fisher information is the ``strongest'' measure, and the Wasserstein distance is the ``weakest.''
\end{itemize}

\subsection{Example}

Let $\rho = \mathcal{N}(m, \sigma^2)$ and $\pi = \mathcal{N}(0,1)$ (one-dimensional).

\begin{itemize}
\item $\nabla\log\rho = -\frac{x-m}{\sigma^2}$, $\nabla\log\pi = -x$
\item $\nabla\log\frac{\rho}{\pi} = -\frac{x-m}{\sigma^2} + x = x(1-\frac{1}{\sigma^2}) + \frac{m}{\sigma^2}$
\item $I(\rho\|\pi) = \mathbb{E}_{x\sim\rho}\left[\left(x(1-\frac{1}{\sigma^2}) + \frac{m}{\sigma^2}\right)^2\right]$
\end{itemize}

Expanding (with $x\sim\mathcal{N}(m,\sigma^2)$, letting $a = 1-\frac{1}{\sigma^2}$ and $b = \frac{m}{\sigma^2}$):
\begin{align*}
I &= \mathbb{E}[(ax+b)^2] = a^2\mathbb{E}[x^2] + 2ab\mathbb{E}[x] + b^2\\
&= a^2(m^2+\sigma^2) + 2abm + b^2\\
&= \left(1-\frac{1}{\sigma^2}\right)^2(m^2+\sigma^2) + 2\left(1-\frac{1}{\sigma^2}\right)\frac{m^2}{\sigma^2} + \frac{m^2}{\sigma^4}
\end{align*}

Checking special cases:
\begin{itemize}
\item $m=0,\sigma=1$ ($\rho=\pi$): $a=0, b=0$, $I=0$ \checkmark
\item $m=0,\sigma\neq 1$: $I = (1-\frac{1}{\sigma^2})^2\sigma^2 = (\sigma - \frac{1}{\sigma})^2$, which is zero only when $\sigma=1$
\item $\sigma=1, m\neq 0$: $a=0, b=m$, $I = m^2$; the further the deviation, the larger $I$
\end{itemize}

\section{Generative Models: Score, Velocity, and Diffusion}\label{app:genmodels}

This appendix elaborates on the score/velocity relationship mentioned in the main text and the mathematical details of major generative models.

\subsection{The Complete Relationship among Score, Velocity, and Drift}

Starting from the Fokker-Planck velocity field $v_t = -\nabla V - \nabla\log\rho_t$:

\begin{keypoint}
\textbf{Definitions of the three quantities:}
\begin{itemize}
    \item \textbf{Score} (score function): $s_t(x) := \nabla_x\log\rho_t(x)$, the gradient of the log-density. This is what diffusion models learn.
    \item \textbf{Velocity} (velocity field): $v_t(x)$, satisfying the continuity equation $\partial_t\rho_t + \nabla\cdot(\rho_t v_t) = 0$. This is what flow matching learns.
    \item \textbf{Drift}: $f(x,t)$, the deterministic part of the SDE $dX_t = f\,dt + g\,dB_t$. This is given (determined by the forward process).
\end{itemize}

For the SDE $dX_t = f(X_t,t)\,dt + g(t)\,dB_t$, the velocity field of the corresponding \textbf{probability flow ODE} is:
\begin{equation}
v_t(x) = f(x,t) - \frac{1}{2}g(t)^2\,\nabla\log\rho_t(x)
\end{equation}

\textbf{In one sentence: }
\[
\boxed{\text{velocity}_{t} = \text{drift}_{t} - \tfrac{1}{2}g_t^2\times\text{score}_{t}}
\]
\end{keypoint}

\begin{center}
\renewcommand{\arraystretch}{1.4}
\begin{tabular}{@{}lll@{}}
\toprule
\textbf{Framework} & \textbf{What is learned} & \textbf{Mathematical object} \\
\midrule
Score-based diffusion & score $s_\theta(x,t)\approx\nabla\log\rho_t(x)$ & Gradient of the log-density \\
Flow matching & velocity $v_\theta(x,t)\approx v_t(x)$ & Velocity field of the continuity equation \\
\bottomrule
\end{tabular}
\end{center}

Their interconversion:
\begin{itemize}
    \item Knowing score + drift $\Rightarrow$ velocity: $v_t = f - \frac{1}{2}g^2 s_t$
    \item Knowing velocity + drift $\Rightarrow$ score: $s_t = \frac{2(f - v_t)}{g^2}$
    \item Both can generate samples: score via Langevin dynamics or reverse SDE; velocity via ODE integration
\end{itemize}

\subsection{Geometric meaning of the score: the Wasserstein gradient of entropy}

The score function $s(x) = \nabla_x\log\rho(x)$ has an elegant geometric interpretation---it is precisely the Wasserstein gradient of the negative entropy.

\medskip
\textbf{Step 1: Define the negative entropy functional.}

Negentropy:
\[
\mathcal{H}(\rho) = \int_{\R^d}\rho(x)\log\rho(x)\,dx
\]
A larger $\mathcal{H}$ indicates a more concentrated distribution (low entropy); a smaller $\mathcal{H}$ indicates a more spread-out distribution (high entropy).

\medskip
\textbf{Step 2: Compute the functional derivative $\frac{\delta\mathcal{H}}{\delta\rho}$.}

\begin{align*}
\frac{d}{d\epsilon}\bigg|_{\epsilon=0}\mathcal{H}(\rho+\epsilon\,\delta\rho)
&= \int\left[\delta\rho\cdot\log\rho + \delta\rho\right]dx
= \int(\log\rho + 1)\,\delta\rho\,dx
\end{align*}
Therefore $\frac{\delta\mathcal{H}}{\delta\rho}(x) = \log\rho(x) + 1$.

\medskip
\textbf{Step 3: From functional derivative to Wasserstein gradient.}

The core formula of Otto calculus: the velocity field corresponding to the Wasserstein gradient is $v = \nabla\frac{\delta G}{\delta\rho}$. For $\mathcal{H}$:
\[
\mathrm{grad}_W\mathcal{H} = \nabla\frac{\delta\mathcal{H}}{\delta\rho} = \nabla(\log\rho + 1) = \nabla\log\rho
\]

\medskip
\textbf{Step 4: Recognize the score.}
\[
\boxed{s(x) = \nabla\log\rho = \mathrm{grad}_W\mathcal{H}(\rho)}
\]
That is: \textbf{Score $=$ Wasserstein gradient of the negative entropy.}

\begin{keypoint}
Learning the score $\nabla\log\rho_t$ $\equiv$ learning the ``gradient direction'' in probability space.
A diffusion model learning the score is essentially learning ``which direction to move in probability space so that the distribution converges to the data distribution as quickly as possible.''
\end{keypoint}

\subsection{A unified framework for mainstream models}

All diffusion/flow-based generative models follow the same framework:
\begin{enumerate}
\item \textbf{Forward process}: Design an SDE that gradually transforms the data distribution $\rho_{\text{data}}$ into a simple distribution (typically Gaussian $\mathcal{N}(0,I)$)
\item \textbf{Reverse process}: Learn the reverse dynamics to generate data from noise
\end{enumerate}

The unified forward SDE takes the form:
\[
dX_t = f(X_t, t)\,dt + g(t)\,dB_t, \qquad X_0\sim\rho_{\text{data}}
\]

Reverse SDE (Anderson, 1982):
\[
dX_t = \left[f(X_t,t) - g(t)^2\,\nabla\log\rho_t(X_t)\right]dt + g(t)\,d\bar{B}_t
\]

The corresponding probability flow ODE (noise-free version, with the same marginal distributions):
\[
\frac{dX_t}{dt} = f(X_t,t) - \frac{1}{2}g(t)^2\,\nabla\log\rho_t(X_t)
\]

\subsubsection{DDPM (Denoising Diffusion Probabilistic Models)}

Ho et al.\ (2020) proposed this discrete-time framework.

\textbf{Forward process}: $x_t = \sqrt{\alpha_t}\,x_{t-1} + \sqrt{1-\alpha_t}\,\epsilon_t$, with closed form $x_t = \sqrt{\bar\alpha_t}\,x_0 + \sqrt{1-\bar\alpha_t}\,\epsilon$.

\textbf{Training objective} (predicting noise):
\[
\mathcal{L}_{\text{DDPM}} = \mathbb{E}_{t,x_0,\epsilon}\left[\|\epsilon - \epsilon_\theta(\sqrt{\bar\alpha_t}\,x_0 + \sqrt{1-\bar\alpha_t}\,\epsilon,\; t)\|^2\right]
\]

\textbf{Relationship to the score}: $\nabla\log\rho_t(x_t) = -\frac{\epsilon_\theta(x_t, t)}{\sqrt{1-\bar\alpha_t}}$. DDPM is essentially learning the score.

\subsubsection{NCSN / SMLD}

Song \& Ermon (2019, 2020): Directly train $s_\theta(x,\sigma)\approx\nabla\log\rho_\sigma(x)$ and generate via annealed Langevin dynamics.
\[
\mathcal{L}_{\text{NCSN}} = \mathbb{E}_{\sigma,x,\tilde x}\left[\|s_\theta(\tilde x, \sigma) + \frac{\tilde x - x}{\sigma^2}\|^2\right]
\]

\subsubsection{VE-SDE (Variance Exploding)}

Song et al.\ (2021): $dX_t = \sqrt{\dot\sigma^2(t)}\,dB_t$ ($f=0$, pure diffusion). The variance ``explodes'' to $\sigma^2(T)\gg 1$.
The corresponding FP equation reduces to the heat equation $\partial_t\rho = \frac{1}{2}\dot\sigma^2\,\Delta\rho$.

\subsubsection{VP-SDE (Variance Preserving)}

Song et al.\ (2021): $dX_t = -\frac{\beta(t)}{2}X_t\,dt + \sqrt{\beta(t)}\,dB_t$ (OU process). The variance remains bounded $\to 1$.
The corresponding FP equation: $\partial_t\rho = \frac{\beta}{2}\nabla\cdot(\rho\,x) + \frac{\beta}{2}\Delta\rho$.

Probability flow ODE: $\frac{dx}{dt} = -\frac{\beta(t)}{2}[x + \nabla\log\rho_t(x)]$.

\subsubsection{Flow Matching}

Lipman et al.\ (2023), Liu et al.\ (2023): Instead of starting from an SDE, directly learn a deterministic velocity field that satisfies the continuity equation.

Conditional Flow Matching training objective:
\[
\mathcal{L}_{\text{FM}} = \mathbb{E}_{t,x_1,x}\left[\|v_\theta(x,t) - \frac{x_1 - x}{1-t}\|^2\right]
\]
where $x\sim\mathcal{N}(t\,x_1,(1-t)^2I)$. Generation: solve the ODE $\dot x = v_\theta(x,t)$.

Advantages: pure ODE (fast sampling), straight-line paths (easier to learn), and freely designable interpolation schemes.

\subsection{Unified comparison table}

\begin{center}
\renewcommand{\arraystretch}{1.5}
\begin{tabular}{@{}p{2.2cm}p{2.8cm}p{2.5cm}p{2.2cm}p{2.5cm}@{}}
\toprule
\textbf{Model} & \textbf{Forward process} & \textbf{What is learned} & \textbf{Generation} & \textbf{$f(x,t)$, $g(t)$} \\
\midrule
DDPM & Discrete noising & Noise $\epsilon$ & Stepwise denoising & $-\frac{\beta}{2}x$, $\sqrt\beta$ \\[3pt]
NCSN & Multi-scale noising & score $\nabla\log\rho$ & Annealed Langevin & $0$, $\sqrt{\dot\sigma^2}$ \\[3pt]
VE-SDE & Pure diffusion & score $\nabla\log\rho$ & reverse SDE/ODE & $0$, $\sqrt{\dot\sigma^2}$ \\[3pt]
VP-SDE & OU process & score $\nabla\log\rho$ & reverse SDE/ODE & $-\frac{\beta}{2}x$, $\sqrt\beta$ \\[3pt]
Flow Matching & Linear interpolation & velocity $v_t$ & ODE integration & N/A (no SDE) \\
\bottomrule
\end{tabular}
\end{center}

\subsection{A unified understanding from the Fokker-Planck/JKO perspective}

\begin{itemize}
\item \textbf{Score-based models}: The FP equation corresponding to the reverse SDE describes the density descending along the free energy $\mathcal{F}$. Each denoising step is essentially one JKO step.

\item \textbf{Flow Matching}: The velocity field $v_t$ satisfies the continuity equation. For optimal transport paths, $v_t$ is precisely the velocity along the $W_2$ geodesic---a direct application of the Benamou-Brenier formula.

\item \textbf{Common destination}: Regardless of the method, the generation process ``traces a path'' in probability space from $\mathcal{N}(0,I)$ to $\rho_{\text{data}}$. The only difference is whether the path is tortuous (diffusion) or a smooth geodesic (flow matching).
\end{itemize}

\section{Convex Analysis Quick Review}\label{app:convex}

This appendix provides a quick review of basic concepts from convex analysis, which appear repeatedly in the uniqueness and existence arguments for the JKO scheme.

\subsection{Convex functions}

\begin{definition}[Convex function]
A function $f:\R^d\to\R\cup\{+\infty\}$ is a \textbf{convex function} if for all $x,y\in\R^d$ and $t\in[0,1]$:
\[
f\bigl(tx + (1-t)y\bigr) \leq t\,f(x) + (1-t)\,f(y)
\]
\end{definition}

\textbf{Geometric Intuition: } The line segment between any two points on the graph lies above (or coincides with) the graph. In other words, the function ``curves upward.''

\textbf{Typical Examples: }
\begin{itemize}
    \item $f(x) = x^2$ (parabola), $f(x) = |x|$ (V-shape), $f(x) = e^x$
    \item In $\R^d$: $f(x)=\|x\|^2$, $f(x)=\max_i x_i$
    \item Non-convex examples: $f(x)=\sin x$, $f(x)=-x^2$
\end{itemize}

The key property of convex functions: \textbf{every local minimum is a global minimum}. This is the fundamental reason why convex optimization is much easier than general optimization.

\subsection{The Hessian matrix}

\begin{definition}[Hessian]
The \textbf{Hessian matrix} of a scalar function $f:\R^d\to\R$ is the $d\times d$ matrix of its second-order partial derivatives:
\[
\bigl[\nabla^2 f(x)\bigr]_{ij} = \frac{\partial^2 f}{\partial x_i\,\partial x_j}(x)
\]
\end{definition}

The Hessian describes the ``degree and direction of curvature'' of a function at a given point; it is the multidimensional generalization of the second derivative $f''(x)$ in one dimension.

\begin{itemize}
    \item One dimension: $f''(x)>0$ means the function curves upward at $x$ (convex)
    \item Multiple dimensions: the eigenvalues of $\nabla^2 f(x)$ describe the curvature in each direction
\end{itemize}

\subsection{Positive definiteness and positive semidefiniteness}

\begin{definition}[Positive definite / semidefinite]
For a symmetric matrix $A\in\R^{d\times d}$:
\begin{itemize}
    \item \textbf{Positive definite} (denoted $A\succ 0$): $v^\top A v > 0$ for all $v\neq 0$. Equivalently, all eigenvalues of $A$ are $>0$.
    \item \textbf{Positive semidefinite} (denoted $A\succeq 0$): $v^\top A v \geq 0$ for all $v$. Equivalently, all eigenvalues are $\geq 0$.
\end{itemize}
\end{definition}

\textbf{Connection to convexity:} For a twice-differentiable function $f$:
\[
f\text{ is convex} \iff \nabla^2 f(x) \succeq 0 \text{ for all } x
\]
A positive definite Hessian ($\nabla^2 f \succ 0$) means the function curves strictly upward in every direction.

\textbf{Intuition: } Think of $A\succ 0$ as ``uphill in every direction.'' For the quadratic $f(x)=\frac{1}{2}x^\top Ax$,
$A\succ 0$ means exactly that $f$ is ``bowl-shaped'' (with a unique minimum), and the eigenvalues of $A$ determine the steepness in each direction.

\subsection{Strong convexity and $\lambda$-convexity}

\begin{definition}[Strongly convex]
A function $f$ is \textbf{$\lambda$-strongly convex} ($\lambda>0$) if $f(x) - \frac{\lambda}{2}\|x\|^2$ is still convex. Equivalently:
\[
f\bigl(tx+(1-t)y\bigr) \leq t\,f(x) + (1-t)\,f(y) - \frac{\lambda}{2}\,t(1-t)\|x-y\|^2
\]
For twice-differentiable $f$, this is equivalent to $\nabla^2 f(x)\succeq\lambda I$ for all $x$.
\end{definition}

\textbf{Intuition: } Strong convexity means the function is not only convex but has a ``lower bound on its curvature''---it curves at least as much as $\frac{\lambda}{2}\|x\|^2$.
\begin{itemize}
    \item The larger $\lambda$ is, the more the function ``bends,'' and the faster convergence is near the minimum
    \item Convexity = the degenerate case $\lambda=0$
    \item Strong convexity guarantees that the minimum is \textbf{unique} and that gradient descent converges at a \textbf{linear rate}
\end{itemize}

\textbf{Appearance in this article:} The condition $\nabla^2 V\succeq\lambda I$ ($\lambda$-convexity) on the potential $V$ in the JKO scheme guarantees the uniqueness of the solution at each JKO step, as well as convergence of the discrete solutions to the continuous gradient flow.

\section{Supplementary Concepts}\label{app:supplement}

This appendix explains several terms that appear in the main text, providing a quick reference for readers who may be unfamiliar with these concepts.

\subsection{Langevin dynamics}

\textbf{Langevin dynamics} is a stochastic differential equation originally proposed by Paul Langevin in 1908 (Langevin, 1908) to describe the Brownian motion of particles in a fluid. In modern probability, statistical mechanics, and machine learning, the most common form is the \textbf{overdamped Langevin dynamics}:
\[
dX_t = -\nabla V(X_t)\,dt + \sqrt{2\varepsilon}\,dB_t
\]
where $V$ is the potential energy function, $\varepsilon>0$ is the temperature/noise strength, and $B_t$ is standard Brownian motion. In the main text we often set $\varepsilon=1$ for simplicity.

\textbf{Physical intuition: } A particle moves in the potential energy landscape $V$, subject to two effects:
\begin{itemize}
    \item $-\nabla V$: deterministic drift, pushing the particle toward regions of low potential energy.
    \item $\sqrt{2\varepsilon}\,dB_t$: random thermal noise, preventing all particles from collapsing to a single minimizer of $V$.
\end{itemize}

\textbf{Stationary distribution.} The probability density $\rho_t$ of $X_t$ satisfies the Fokker--Planck equation
\[
\partial_t\rho_t
=
\nabla\cdot(\rho_t\nabla V)
+
\varepsilon\Delta\rho_t.
\]
At stationarity, the deterministic force and the entropic force balance:
\[
-\nabla V-\varepsilon\nabla\log\rho_\infty=0.
\]
Therefore
\[
\nabla\log\rho_\infty=-\frac{1}{\varepsilon}\nabla V,
\qquad
\rho_\infty(x)\propto e^{-V(x)/\varepsilon}.
\]
Thus Langevin dynamics samples from the Gibbs/Boltzmann distribution with temperature $\varepsilon$.

\textbf{Free-energy interpretation.} The same equation is the Wasserstein gradient flow of
\[
\mathcal{F}_\varepsilon(\rho)
=
\int V\rho\,dx
+
\varepsilon\int\rho\log\rho\,dx.
\]
The first term pulls mass toward low potential energy. The second term is negative entropy; it creates the diffusion force $-\varepsilon\nabla\log\rho$. This is the precise sense in which Langevin dynamics minimizes free energy rather than potential energy alone.

\begin{keypoint}
Gradient descent on $V$ alone,
\[
\dot X_t=-\nabla V(X_t),
\]
finds low-energy points. Langevin dynamics,
\[
dX_t=-\nabla V(X_t)\,dt+\sqrt{2\varepsilon}\,dB_t,
\]
samples from the full distribution $\rho_\infty\propto e^{-V/\varepsilon}$.
The noise is not a nuisance: it is what realizes the entropy term in the free energy.
\end{keypoint}

\textbf{Discretization: the unadjusted Langevin algorithm.} Euler--Maruyama discretization gives
\[
X_{k+1}
=
X_k-\eta\nabla V(X_k)+\sqrt{2\varepsilon\eta}\,Z_k,
\qquad
Z_k\sim\mathcal{N}(0,I).
\]
This is called the \textbf{unadjusted Langevin algorithm} (ULA). If one adds a Metropolis--Hastings accept/reject correction, the method becomes MALA (Metropolis-adjusted Langevin algorithm), which removes discretization bias under suitable assumptions.

\textbf{In machine learning.}
\begin{itemize}
    \item \textbf{Energy-based models}: If a model defines $p_\theta(x)\propto e^{-V_\theta(x)}$, Langevin dynamics can sample from it using only $\nabla V_\theta$.
    \item \textbf{SGLD}: Stochastic Gradient Langevin Dynamics replaces $\nabla V$ by a minibatch stochastic gradient. Adding calibrated Gaussian noise turns optimization-like SGD into approximate posterior sampling.
    \item \textbf{Score-based models}: Annealed Langevin dynamics uses learned scores $\nabla\log p_\sigma(x)$ at multiple noise levels. Since $\nabla\log p(x)=-\nabla V(x)$ when $p\propto e^{-V}$, score-based sampling can be viewed as Langevin dynamics written in score language.
    \item \textbf{Diffusion models}: Reverse SDE sampling is a time-inhomogeneous, score-driven Langevin-type dynamics. Probability flow ODE sampling removes the explicit Brownian noise but keeps the same density evolution by absorbing the diffusion effect into a deterministic score velocity.
\end{itemize}

\textbf{Relation to this article.} Langevin dynamics is the particle-level stochastic process; the Fokker--Planck equation is its density-level PDE; the free energy is its Lyapunov function; and the JKO scheme is the implicit Euler discretization of this free-energy gradient flow in Wasserstein space.

\subsection{Kolmogorov forward equation}

The \textbf{Kolmogorov forward equation} (Kolmogorov, 1931) is another name for the Fokker-Planck equation. The two are identical.
For the SDE $dX_t = f(X_t,t)\,dt + g(t)\,dB_t$ (where the diffusion coefficient $g$ depends only on time, not on $X_t$---all models considered in this article are of this type):
\[
\partial_t\rho = -\nabla\cdot(f\rho) + \frac{g^2}{2}\Delta\rho
\]
This describes the time evolution of the probability density $\rho_t(x)$ of $X_t$. (For the more general case of state-dependent diffusion $G(X_t)$, the equation becomes $\partial_t\rho = -\nabla\cdot(f\rho) + \frac{1}{2}\sum_{ij}\partial_i\partial_j[(GG^\top)_{ij}\rho]$.)

\textbf{Why two names?} Historical reasons: Kolmogorov derived this equation in 1931 from a purely mathematical perspective, while Fokker and Planck independently discovered it earlier (1913--1917) from a physics perspective. Physicists call it the Fokker-Planck equation; probabilists call it the Kolmogorov forward equation.

There is also a corresponding \textbf{Kolmogorov backward equation}, which describes the probability evolution ``looking back from the endpoint to the starting point,''
corresponding to the PDE satisfied by expected values of the stochastic process.

\subsection{Ornstein-Uhlenbeck (OU) process}

The \textbf{OU process} (Uhlenbeck \& Ornstein, 1930) is one of the most important linear SDEs:
\[
dX_t = -\theta\,X_t\,dt + \sigma\,dB_t, \qquad \theta>0
\]

\textbf{Intuition: } Spring + noise. The particle is pulled back to the origin by a spring ($-\theta X_t$ is the restoring force), while simultaneously being subject to random perturbations.

\textbf{Key properties:}
\begin{itemize}
    \item The stationary distribution is Gaussian: $X_\infty\sim\mathcal{N}\bigl(0, \frac{\sigma^2}{2\theta}\bigr)$
    \item From any initial condition, $X_t$ converges to the stationary distribution at an exponential rate
    \item It is a ``compromise'' between Brownian motion ($\theta=0$) and pure deterministic decay ($\sigma=0$)
\end{itemize}

\textbf{In this article:} The forward process of VP-SDE, $dX_t = -\frac{\beta(t)}{2}X_t\,dt + \sqrt{\beta(t)}\,dB_t$, is an OU process with time-varying coefficients.
It gradually ``pulls'' the data distribution toward the standard Gaussian $\mathcal{N}(0,I)$.

\subsection{Otto Calculus}

\textbf{Otto calculus} is a formal computational framework introduced by Felix Otto in 2001, whose core idea is:

\begin{center}
\fbox{\parbox{0.8\textwidth}{\centering
Treat the space of probability measures $\Prob_2(\R^d)$ as an infinite-dimensional Riemannian manifold and do calculus on it.
}}
\end{center}

Specifically, Otto calculus consists of the following ``conventions'':
\begin{enumerate}
    \item \textbf{Tangent space}: The ``tangent vectors'' at a distribution $\rho$ are gradient fields $\nabla\psi$ (velocity fields satisfying the continuity equation $\dot\rho = -\nabla\cdot(\rho\nabla\psi)$)
    \item \textbf{Riemannian metric}: The inner product of two tangent vectors $\nabla\psi_1, \nabla\psi_2$ is defined as $\langle\nabla\psi_1,\nabla\psi_2\rangle_\rho = \int\nabla\psi_1\cdot\nabla\psi_2\,\rho\,dx$
    \item \textbf{Gradient}: The Wasserstein gradient of a functional $G(\rho)$ is the tangent vector (velocity field) $\mathrm{grad}_W G = \nabla\frac{\delta G}{\delta\rho}$. The induced direction of density evolution is $\partial_t\rho = -\nabla\cdot\bigl(\rho\,\mathrm{grad}_W G\bigr) = -\nabla\cdot\bigl(\rho\nabla\frac{\delta G}{\delta\rho}\bigr)$
\end{enumerate}

\textbf{Why is it called ``calculus''?} Because with the conventions above, many complex infinite-dimensional computations reduce to ``plugging into formulas''---just as ordinary calculus frees you from going back to the limit definition every time.

\textbf{Rigorous vs.\ formal:} Otto calculus is \emph{formal}---$\Prob_2$ is not strictly a smooth manifold (it is infinite-dimensional and has a boundary).
However, its computational results can be verified by rigorous theory (the Ambrosio-Gigli-Savar\'{e} framework). In practice, computing directly with Otto's formalism is usually more efficient.

\subsection{Geodesics}

\begin{definition}[Geodesic --- informal]
On a metric space or Riemannian manifold, a \textbf{geodesic} connecting two points $x$ and $y$ is the ``shortest path''---a curve whose length equals the distance between $x$ and $y$.
\end{definition}

\textbf{Examples in various spaces: }
\begin{itemize}
    \item Euclidean space $\R^d$: geodesics are straight line segments
    \item Sphere $S^2$: geodesics are great circle arcs (the shortest flight routes)
    \item Wasserstein space $\Prob_2(\R^d)$: geodesics are McCann displacement interpolations $\rho_t = \bigl((1-t)\id + t\,T\bigr)_\#\mu$,
    where $T$ is the optimal transport map from $\mu$ to $\nu$
\end{itemize}

\textbf{Connection to Benamou-Brenier:} $W_2(\mu,\nu)$ is precisely the length of the geodesic connecting $\mu$ and $\nu$,
and the Benamou-Brenier formula $W_2^2 = \inf\int_0^1\int|v_t|^2\rho_t\,dx\,dt$ seeks the shortest path among all paths satisfying the continuity equation.

\subsection{Riesz representation theorem}

The \textbf{Riesz representation theorem} has several versions; the one used in this article is the Hilbert space version:

\begin{theorem}[Riesz representation --- Hilbert space version]
Let $H$ be a Hilbert space and $\varphi: H\to\R$ a continuous linear functional on $H$. Then there exists a unique $f\in H$ such that:
\[
\varphi(h) = \langle f, h\rangle_H \quad \text{for all } h\in H
\]
\end{theorem}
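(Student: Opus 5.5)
The plan is to prove the Riesz representation theorem in two stages: first use the kernel of $\varphi$ and orthogonal projection to find a single vector carrying all the information in $\varphi$, then rescale it so that $\langle f,\cdot\rangle_H$ and $\varphi$ agree. Uniqueness comes last. Completeness of $H$ enters only in the projection step.

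\textbf{Trivial case and kernel.} If $\varphi\equiv 0$, we take $f=0$. Otherwise let $N:=\ker\varphi=\{h\in H:\varphi(h)=0\}$. Continuity of $\varphi$ makes $N$ a closed linear subspace, and $N\neq H$ because $\varphi\neq0$.

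\textbf{Projection and a normal vector.} This is the step that genuinely uses that $H$ is a Hilbert space, and it is the main obstacle. We need the projection theorem: for a closed subspace $N$ of a Hilbert space, $H=N\oplus N^\perp$. I would prove it directly. Fix $x\notin N$ and let $d=\inf_{n\in N}\|x-n\|$. Take a minimizing sequence $n_k\in N$. The parallelogram law gives
\[
\|n_k-n_j\|^2 = 2\|x-n_k\|^2+2\|x-n_j\|^2-4\bigl\|x-\tfrac{n_k+n_j}{2}\bigr\|^2 \le 2\|x-n_k\|^2+2\|x-n_j\|^2-4d^2 \to 0,
\]
where the middle term is bounded below by $d^2$ because $N$ is convex. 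So $(n_k)$ is Cauchy, and by completeness and closedness it converges to some $n^*\in N$ with $\|x-n^*\|=d$. Expanding $\|x-n^*-tm\|^2\ge d^2$ for all $t\in\R$ and $m\in N$ forces $\langle x-n^*,m\rangle=0$. Hence $z:=x-n^*$ is a nonzero vector in $N^\perp$.

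\textbf{Construction of $f$.} For any $h\in H$, the vector $\varphi(h)z-\varphi(z)h$ lies in $N$, since $\varphi$ applied to it gives $\varphi(h)\varphi(z)-\varphi(z)\varphi(h)=0$. It is therefore orthogonal to $z$, which gives $\varphi(h)\|z\|^2=\varphi(z)\langle z,h\rangle$. Setting
\[
f:=\frac{\varphi(z)}{\|z\|^2}\,z
\]
yields $\varphi(h)=\langle f,h\rangle_H$ for all $h\in H$.

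\textbf{Uniqueness.} Suppose $\langle f,h\rangle=\langle f',h\rangle$ for all $h$. Taking $h=f-f'$ gives $\|f-f'\|^2=0$, so $f=f'$. This is the same argument as the finite-dimensional analogy used for the du Bois-Reymond lemma (Theorem~\ref{thm:dubois}).

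In context, this theorem is what justifies reading the gradient as the representative of the differential, $\mathrm{grad}\,f=(df)^\sharp$. In the Wasserstein setting it is applied with $H=T_\rho\Prob_2$, the closed subspace of $L^2(\rho;\R^d)$, to identify $\grad_W\mathcal{F}$ as the vector representing $D\mathcal{F}(\rho)$.
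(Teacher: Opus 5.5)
Your proof is correct and complete. The parallelogram-law argument gives a nearest point $n^*\in\ker\varphi$ and hence a nonzero $z\perp\ker\varphi$. The membership $\varphi(h)z-\varphi(z)h\in\ker\varphi$ then gives $\varphi(h)\|z\|^2=\varphi(z)\langle z,h\rangle_H$, and uniqueness is immediate.

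The paper does not prove this theorem. It quotes it in the supplementary appendix as a standard fact, with only the intuition that every continuous linear functional is ``an inner product with a fixed vector'' and a remark on its role for $\grad_W\mathcal{F}$. So there is no competing route to compare. Your kernel and orthogonal-complement argument is the standard proof, and it correctly shows that completeness enters only through the projection step.

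One caution concerns your closing remark, not the proof itself. To invoke the theorem on $H=T_\rho\Prob_2$, you need $D\mathcal{F}(\rho)$, i.e.\ $\nabla\psi\mapsto\int\rho\,\nabla(\log\rho+V)\cdot\nabla\psi\,dx$, to be bounded on $L^2(\rho;\R^d)$. By Cauchy--Schwarz this holds when the relative Fisher information $I(\rho\|\pi)$ is finite, and the paper's formal computation never checks that condition.
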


\textbf{Intuition: } In a space equipped with an inner product, every ``linear measurement device'' (continuous linear functional) can be realized as ``taking the inner product with some fixed vector.''

\textbf{Role in this article:} The Wasserstein gradient $\grad_W\mathcal{F}$ is precisely the ``Riesz representative'' of the functional derivative $\frac{\delta\mathcal{F}}{\delta\rho}$---it
converts the functional derivative (a cotangent vector) under the $L^2$ inner product into a gradient (a tangent vector) under the Wasserstein metric.
This is exactly the infinite-dimensional manifestation of the principle from Appendix~\ref{app:diffgeom} that going from ``differential $\to$ gradient'' requires a metric.

\subsection{Lyapunov functions}

\begin{definition}[Lyapunov function --- informal]
For a dynamical system $\dot x = F(x)$, a \textbf{Lyapunov function} $V(x)$ satisfies:
\begin{enumerate}
    \item $V(x)\geq 0$, and $V(x^*)=0$ only at the equilibrium point $x^*$
    \item It is monotonically decreasing along trajectories of the system: $\frac{d}{dt}V(x(t))\leq 0$
\end{enumerate}
\end{definition}

\textbf{Intuition: } A Lyapunov function is like ``energy''---if you can find a quantity that only decreases during the system's evolution,
then the system must evolve toward the point where this quantity is zero (the equilibrium). It is the standard tool for proving stability.

\textbf{In this article:} The free energy $\mathcal{F}(\rho)=\KL(\rho\|\pi)$ is the Lyapunov function of the Fokker-Planck dynamical system---it
decreases monotonically along solutions of the FP equation ($\frac{d}{dt}\mathcal{F}(\rho_t)\leq 0$) and attains its minimum value of zero at the stationary distribution $\rho=\pi$.

\subsection{KKT conditions}

The \textbf{KKT conditions} (Karush-Kuhn-Tucker conditions) are necessary optimality conditions for constrained optimization problems, generalizing the ``gradient equals zero'' condition from unconstrained optimization.

Consider the problem: $\min f(x)$ subject to $g(x)=0$. The Lagrangian is $L(x,\lambda) = f(x) + \lambda\,g(x)$.

The KKT conditions require:
\[
\nabla_x L = \nabla f(x) + \lambda\,\nabla g(x) = 0, \qquad g(x) = 0
\]

\textbf{Intuition: } On the constraint surface, the gradient of the objective function must be parallel to the normal direction of the constraint surface (otherwise one could continue descending along the constraint surface).
The multiplier $\lambda$ measures ``how tight'' the constraint is.

\textbf{In this article:} The derivation of the optimality conditions for the JKO scheme (Section~6) is essentially the infinite-dimensional version of the KKT conditions---the
functional derivative equals zero, together with the Lagrange multiplier corresponding to the mass conservation constraint $\int\rho=1$.

\end{document}